\documentclass[abstractbox]{arxiv/custom}
\usepackage[english]{babel}
\usepackage[table]{xcolor}
\usepackage{float}
\usepackage{graphicx}
\usepackage{wrapfig}
\usepackage{amsmath}
\usepackage{amssymb}
\usepackage{amsthm}
\usepackage{booktabs}      % \toprule, \midrule, \bottomrule
\usepackage{array}
\usepackage{multirow}      
\usepackage{caption}  
\usepackage{microtype}
\usepackage{parskip}
\usepackage{titletoc}

\usepackage{enumitem}
\usepackage{etoolbox}

\newtcolorbox{figurepanel}{
    enhanced,
    colback=finn!8!white,
    colframe=finn!20!rose,
    boxrule=0.8pt,
    arc=6pt,
    left=10pt,right=10pt,top=10pt,bottom=10pt
}

\newtcolorbox{samplebox}{
    enhanced,
    colback=finn!5!white,
    colframe=finn!50!black,
    boxrule=0.6pt,
    arc=4pt,
    left=8pt,right=8pt,top=7pt,bottom=7pt
}

\usepackage{algorithm}
\usepackage{algpseudocode}

\newtcolorbox{propbox}{
    enhanced,
    colback=purple!10!white,
    colframe=finn,
    boxrule=0pt,
    leftrule=5pt,
    arc=0pt,
    left=8pt, right=8pt, top=8pt, bottom=8pt,
    fonttitle=\bfseries\sffamily,
    coltitle=finn
}

\input{arxiv/commands/math_commands.tex}
\newcommand{\pdata}{p_\text{data}}

\hypersetup{
   breaklinks=true,   
   colorlinks=true,   
   citecolor=finn,
   linkcolor=rose,
   urlcolor=finn,
}

\usepackage{subcaption}  
\usepackage[export]{adjustbox}
\usepackage{tikz}
\usetikzlibrary{arrows.meta, positioning}
\usepackage{extarrows}
\usepackage{xspace}
\usepackage{enumitem}

\newcommand{\scallop}{\adjustbox{valign=c}{\includegraphics[height=1em]{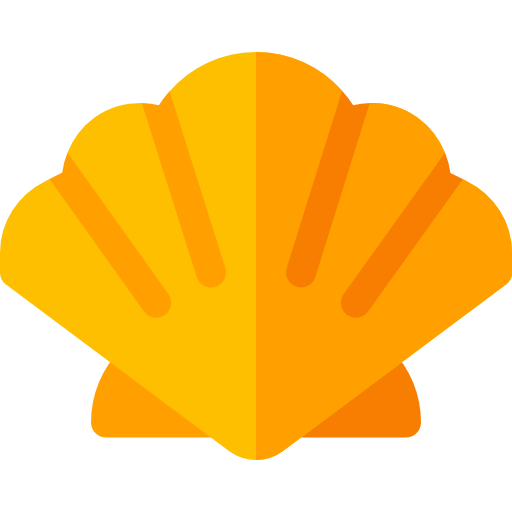}}~\texttt{\color{finn}SCALLOP}}
\newcommand{\scalloptt}{\texttt{SCALLOP}\xspace}

\title{\fontsize{0.69cm}{5.5cm}\selectfont Few-Step Boltzmann Generators via Scalable Likelihood Flow Maps}

\author[1]{RuiKang OuYang$^{\ast}$}
\author[2]{\, Hanlin Yu$^{\dagger}$}
\author[3]{\, Xinyue Ai$^{\dagger}$}
\author[4]{\, Yutong He}
\author[4]{\\Nicholas M. Boffi}
\author[4]{\, Pradeep Ravikumar}
\author[1]{\, José Miguel Hernández-Lobato}
\author[4,5]{\\Max Simchowitz}
\author[6]{\, Benjamin Kurt Miller}
\author[4]{\, Omar Chehab}

\authornote{$^\ast$Corresponding to \texttt{ro352@cam.ac.uk}. $^\dagger$Equal contributions.}

\affiliation[1]{University of Cambridge}
\affiliation[2]{University of Helsinki}
\affiliation[3]{Peking University}
\affiliation[4]{Carnegie Mellon University}
\affiliation[5]{Amazon FAR}
\affiliation[6]{FAIR at Meta}

\abstract{
Recent progress in flow-based generative modeling has led to models that output high-quality samples while using only a small number of function evaluations. However, at present, there is a lack of similar advances in estimating the 
model likelihood.
In particular, most existing methods 
either rely on restrictive architectures that enable exact calculations, or use stochastic approximations such as Hutchinson’s trace estimator that introduce substantial variance.
In this work, we introduce \textsc{SCAlable LikeLihood distillation of flOw maPs} (\scallop). \scalloptt builds on the  recently proposed  F2D2, a likelihood flow map model that can generate samples and their densities in a small number of function evaluations. While {F2D2} uses Hutchinson's estimator during training, we introduce an alternative and more scalable likelihood distillation objective that is Hutchinson-free and admits a vectorized formulation. 
Empirically, we demonstrate the effectiveness of \scalloptt as a Boltzmann generator in molecular science, and further validate its benefit on image datasets. \scalloptt significantly reduces both training variance and training time while consistently improving performance compared to F2D2, and is competitive with the state-of-the-art while achieving up to $10\times$ inference speedup over the fastest baseline.

}

\begin{document}
\maketitle

% Sections
\section{Introduction}
\label{sec:introduction}

While the generative modeling community's focus has indexed on improving the empirical quality of generated samples and the speed of generation \citep{song2023consistency,song2024improved,lu2025simplifying, geng2025meanflowsonestepgenerative, boffi2026flowmaps}, numerous applications, notably in the sciences, require access to the model likelihood, e.g., for free energy estimation and enhanced sampling methods \citep{torrie1977nonphysical,grubmuller1995predicting,laio2002escaping,barducci2008well,noé2019boltzmanngeneratorssampling,rissanen2025progressivetemperingsamplerdiffusion,nam2025enhancing,xie2026enhanced}.
% The most expressive flow- and diffusion-based models, along with their few-step distilled counterparts, do not trivially admit likelihood evaluation; instead, they use model  restrictions~\citep{papamakarios2021normalizing}, expensive computation~\citep{rehman2025falconfewstepaccuratelikelihoods} or high-variance estimators~\citep{grathwohl2018ffjordfreeformcontinuousdynamics} \hanlin{this last sentence reads weird to me. Proposal: Previous methods suffer from restrictions on the model architecture~\citep{papamakarios2021normalizing}, expensive computation~\citep{rehman2025falconfewstepaccuratelikelihoods} or high-variance estimators~\citep{grathwohl2018ffjordfreeformcontinuousdynamics}.}.
To satisfy this requirement, early approaches restricted model architectures to ensure strict invertibility \citep{papamakarios2021normalizing}, which inherently limits expressivity. Conversely, continuous flow- and diffusion-based models—along with their few-step distilled counterparts—offer far greater expressivity but do not trivially admit scalable likelihood evaluation. Computing their likelihoods remains bottlenecked by either prohibitively expensive Jacobian calculations \citep{chen2019neuralordinarydifferentialequations,rehman2025falconfewstepaccuratelikelihoods} or high-variance stochastic estimators \citep{grathwohl2018ffjordfreeformcontinuousdynamics}.

% In this work, we particularly interested in the problem of sampling from the Boltzmann distribution of molecular systems, a central and long-standing problem in statistical physics with relevance to protein folding, ligand binding, and drug design. A Boltzmann distribution, $\pi$, is described by an energy function $U:\mathbb{R}^d\rightarrow\mathbb{R}$ with $\pi(x)\propto\exp(-U(x))$. Equilibrium samples drawn from the Boltzmann distribution are used for estimating observables such as free energies and binding affinity \citep{zwanzig1954high,kirkwood1935statistical,gilson1997statistical}. Classical methods such as Molecular Dynamics (MD) \citep{frenkel1997understanding} and Markov-chain Monte Carlo (MCMC) \citep{metropolis1953equation} often suffer from low mixing rate. To remedy, Boltzmann Generators (BGs) \citep{noé2019boltzmanngeneratorssampling} proposes to learn generative model from a biased distribution and then reweigh the generated samples through self-normalized importance sampling (SNIS), where the model density involves.

\paragraph{Current Approaches and Limitations}
Overall, the generative modelling literature has focused on learning a transport map from Gaussian samples to data samples, and then using the change-of-variables formula to compute the corresponding likelihood. Using that formula is often problematic: it requires computing costly quantities based on the neural network, during inference or training.

Likelihoods of normalizing flows~\citep{rezende2015variational,papamakarios2021normalizing} have limited expressivity and can be slow to compute. Normalizing flows parameterize transport maps as compositions of invertible neural network layers and compute likelihoods through the static change-of-variables formula, which requires evaluating their Jacobian log-determinants. To make this computation cheap, namely linear in the data dimension, the neural network has strong architectural constraints. These constraints can substantially reduce expressivity~\citep{gao2020flowcontrastive}, particularly in applications with important symmetries such as molecular and materials modeling, where autoregressive layers are often unsuitable. 
For practical applications, expressive architectures require many neural network layers. For example, normalizing flows based on scalable transformer-based architectures
\citep{zhai2025normalizingflowscapablegenerative} have been successfully applied to atomic systems \citep{tan2026scalableequilibriumsamplingsequential,tan2026amortizedsamplingtransferablenormalizing,schebek2025scalableboltzmanngeneratorsequilibrium}. Still, at inference time, they often suffer from a trade-off in speed between generating samples and computing likelihoods. 

Computing likelihoods from continuous normalizing flows (CNFs)~\citep{chen2019neuralordinarydifferentialequations} typically suffers from high variance. CNFs remove the architectural constraints of normalizing flows by parameterizing the velocity field of a continuous transport process. Likelihood computation then requires integrating the divergence of the velocity field along trajectories through the instantaneous change-of-variables formula~\citep{chen2019neuralordinarydifferentialequations}. In practice, this divergence is typically estimated using the Hutchinson's trace estimator~\citep{grathwohl2018ffjordfreeformcontinuousdynamics}, whose variance can become prohibitively large in high dimensions: maintaining constant precision may require computational budgets scaling quadratically with the data dimension. While variance-reduction techniques have been developed~\citep{meyer2021hutchinson,liu2025hutch}, they are not mainstream.

% Most recent attempts to distill CNFs require heavy computation to obtain their likelihood. Flow map models~\citep{boffi2025buildconsistencymodellearning} compress sampling into a single neural network evaluation, yet likelihood evaluation is not available. Unlike normalizing flows, these models lack special architectural structure and may not satisfy the invertibility requirement. Therefore, recent works encourage to enforce model invertibility through a regularization term~\citep{draxler2024freeformflowsmakearchitecture,rehman2025falconfewstepaccuratelikelihoods}, yet the computation remains expensive, cubic in the data dimension, due to the requirement of calculating the log determinant of Jacobian of the neural network.
% \citet{ai2026f2d2} treats the likelihood evaluation problem as another distillation problem. They jointly distill the sampling and likelihood evaluation process into a few network evaluations.
% However, training still relies on Hutchinson-based likelihood estimates, inheriting their high variance. An alternative direction has been to train an energy-based parameterization of the CNF~\citep{aggarwal2025boltzncelearninglikelihoodsboltzmann,ouyang2026diffusiveclassificationlosslearning,plainer2026consistentsamplingsimulationmolecular,he2026rneplugandplaydiffusioninferencetime,guth2026learningnormalizedimagedensities}: at inference time, the likelihood is obtained in a single network evaluation, however it doesn't allow few-step generation.
Recent CNF distillation methods struggle to reconcile fast sampling with scalable and tractable likelihood evaluation. For instance, while flow map models~\citep{boffi2025buildconsistencymodellearning} successfully compress sampling into a single forward pass, they do not natively support tractable likelihood evaluation. Unlike standard normalizing flows, these models lack explicit architectural constraints and are not guaranteed to be bijective. To address this, subsequent works attempt to enforce model invertibility via regularization techniques~\citep{draxler2024freeformflowsmakearchitecture,rehman2025falconfewstepaccuratelikelihoods}; however, likelihood evaluation under these models remains computationally expensive—scaling cubically with the data dimension ($\mathcal{O}(d^3)$) due to the required Jacobian log-determinant calculation. To bypass this bottleneck, \citet{ai2026f2d2} frame likelihood evaluation as a parallel distillation task, jointly amortizing both the sampling process and the likelihood estimation into a few network evaluations. Nevertheless, training this joint architecture still relies on Hutchinson-based divergence estimators, thereby inheriting their characteristic high variance. Alternatively, energy-based parameterizations of CNFs~\citep{aggarwal2025boltzncelearninglikelihoodsboltzmann,ouyang2026diffusiveclassificationlosslearning,plainer2026consistentsamplingsimulationmolecular,he2026rneplugandplaydiffusioninferencetime,guth2026learningnormalizedimagedensities} enable likelihood estimation in a single network evaluation, while they are not capable to generate samples in few steps.

\paragraph{Contributions.}
In this work, we build on F2D2~\citep{ai2026f2d2} that extends flow maps into \textit{likelihood flow maps} that distill both the sampling procedure and the likelihood computation into a single neural network evaluation. We propose \textsc{SCAlable LikeLihood distillation of flOw maPs} (\scallop), which replaces the likelihood distillation loss in F2D2 with an alternative, based on reformulating the change-of-density along the flow. Our contributions are as follows:
\begin{itemize}[leftmargin=*]
    \item Our new training objective makes likelihood distillation more scalable by design. It has lower-variance because it is \textit{Hutchinson-free} and faster to optimize given the ``vectorized" formulation of our objective. 

    \item Empirical improvements over the F2D2 model, on which \scalloptt is based, are substantial: \scalloptt can reduce training variance by up to $100\times$, converges faster, and consistently obtains better test set performance.

    \item  In molecular domains, \scalloptt is much faster than  state-of-the-art Boltzmann Generators at inference time: $10\times$ faster than normalizing flows~\citep{tan2026scalableequilibriumsamplingsequential} while achieving better performance, and $100\times$ faster than the comparable FALCON~\citep{rehman2025falconfewstepaccuratelikelihoods}.

    \item These empirical gains translate to higher-dimensional image domains, where \scalloptt estimates more accurate likelihoods than F2D2, while being faster and more stable to train. 

\end{itemize}

\section{Background}
\label{sec:background}
We motivate our study with a common setting: A practitioner is interested in sampling from an unnormalized distribution $\pi$ to approximate some macroscopic property like free energy differences, denoted $O$. $\pi$ takes on a Boltzmann distribution and the observable $O$ is computed in expectation:
\begin{align}
    \pi(x)=\frac{1}{Z}\exp({-U(x)}),
    \quad \mathbb{E}_{\pi}[O(X)] = \int O(x) \pi(x)\, dx,
\end{align}
where $U:\mathbb{R}^d\rightarrow\mathbb{R}$ is the energy and $Z$ is a normalizing constant. Since drawing unbiased samples from the Boltzmann distribution is difficult, Boltzmann Generators \citep{noé2019boltzmanngeneratorssampling} fit a generative model $p^{\theta}$ to a potentially biased, accessible distribution $\pdata$ with similarities to $\pi$. Then $p^{\theta}$ approximates $O$ with self-normalized importance sampling (SNIS):
% 
% simulate biased data $\pdata$ and learn a generative model to approximate that data $p^{\theta}$. One can then utilize the generative model as a proposal to approximate $O$ with self-normalized importance sampling (SNIS):
% 
\begin{align}
    \label{eqn:snis}
    \mathbb{E}_{\pi}[O(X)]
    &=\mathbb{E}_{p^\theta}\left[\frac{\pi(X)}{p^\theta(X)}O(X)\right] 
    \approx \frac{\sum_{n=1}^N w_nx_n}{\sum_{m=1}^Nw_m},\quad\text{where} w_n=\frac{\exp(-U(x_n))}{p^\theta(x_n)},
    \, x_n \sim p^\theta.
\end{align}
{SNIS requires evaluating the probability density of the generated sample $x$ under $p^\theta$, commonly referred to as the model likelihood. Provided no ambiguity arises, we use the terms \emph{sample density} and \emph{model likelihood} interchangeably throughout this paper.} 
% Presently, the most expressive models, such as flow models~\citep{song2021scorebasedgenerativemodelingstochastic,lipman2023flow}, rely on continuous time formulations where the density is computationally expensive to obtain. 
% Previous approaches have limited \hanlin{drop have?} themselves to less expressive models with a tractable likelihood, or utilized the Hutchinson trace estimator at inference-time or during training of a distilled flow map. We offer an alternative to the latter, namely, \emph{distillation of likelihood flow maps that generate samples with approximate densities from continuous-time models via a proposed objective that does not require Hutchinson during training or inference}. In addition to the motivation above, likelihood approximation is of general interest in machine learning with additional applications including reward tilting and model composition.
Presently, the most expressive generative architectures, such as continuous-time flow models~\citep{song2021scorebasedgenerativemodelingstochastic,lipman2023flow}, require computationally expensive simulations to evaluate exact densities. Prior works either restrict themselves to less expressive models with tractable likelihoods, or rely on the Hutchinson trace estimator during training or inference. In this work, we present an alternative to the latter: \emph{a novel distillation objective for likelihood flow maps that enables simultaneous sample generation and density approximation from continuous-time models without requiring Hutchinson's trace estimator at any stage}. Beyond our primary motivation, efficient likelihood estimation is of broad interest across machine learning, with applications extending to reward tilting and model composition.

\paragraph{Continuous Flows as Boltzmann Generators.} 
A continuous flow is an Ordinary Differential Equation (ODE) that transports samples from a base distribution $p_0=\mathcal{N}(0,I)$ to the data distribution $p_1=\pdata$, through a sequence of intermediate distributions $(p_t)_{t\in[0,1]}$. It can be used to generate samples and their densities by integrating
\begin{align}
    \frac{\dd}{\dd t}\begin{bmatrix}
x_t \\
\log p_t^\theta(x_t)
\end{bmatrix}=\begin{bmatrix}
v_t^\theta \\
-\nabla\cdot v_t^\theta(x_t)
\end{bmatrix},\label{eq:augmented-pf-ode}\quad\text{with }x_0\sim p_0\text{ and }\log p_0^\theta(x_0)=\log p_0(x_0),
\end{align}
where $\nabla\cdot$ is the divergence operator. The upper  equation for generating samples is called the \emph{Probability Flow ODE} (PF-ODE) and the lower equation for generating their log-densities is commonly called the \textit{instantaneous change-of-variables} equation~\citep{chen2019neuralordinarydifferentialequations}. 

In many machine learning applications, the intermediate distributions $(p_t)_t$ are defined by the user, for example,  using a \textit{linear interpolation} between samples, $x_t=(1-t)x_0+tx_1$ with $x_0\sim p_0$ and $x\sim p_1$. 
% \bkm{Don't we need to be careful here. Only the conditional distributions are defined by the user?} 
% \omar{Actually no :) The marginal distributions are defined by the user too. On top of that, the user can different conditional distributions (that are consistent with the marginals) for training.}
We will use this choice as a default throughout this paper. Flow Matching (FMs)~\citep{lipman2023flow,liu2022rectifiedflowmarginalpreserving} learn a corresponding velocity field  $v_t^{\theta}(x)$ and Diffusion Models (DMs) \citep{JMLR:v6:hyvarinen05a,ho2020denoisingdiffusionprobabilisticmodels,song2021scorebasedgenerativemodelingstochastic} learn 
% \bkm{estimate marginal scores} 
marginal scores $(\nabla\log p_t^{\theta})_t$ which are a simple reparameterization of the velocity field, following
\begin{align}
    v_t(x_t)=\frac{x_t}{t}+\frac{1-t}{t}\nabla\log p_t(x_t).\label{eq:velocity-score-relation-in-diffusion-main}
\end{align}

Note that evaluating the density using \Cref{eq:augmented-pf-ode} requires the exact computation of $\nabla\cdot v_t^\theta$, which is infeasible in high-dimensional spaces. A popular approximation is to use Hutchinson's trace estimator \citep{Hutchinson01011990}, which estimates $\nabla\cdot v_t^{\theta}(x_t)=\mathbb{E}_\epsilon [\epsilon^\top\nabla v_t^{\theta}(x_t)\epsilon]$ with some zero-mean random variable $\epsilon$, where $\nabla v_t^{\theta}(x_t)\epsilon$ can be efficiently computed using the Jacobian-Vector Product (JVP) operation. However, this estimator suffers from high variance, and the JVP still introduces some computational overhead.

\paragraph{Flow Maps for Few-step Sampling.} 
Generating samples and their densities by numerical integration of~\Cref{eq:augmented-pf-ode}
requires hundreds of function evaluations. To reduce this computational inefficiency, flow maps \citep{frans2025stepdiffusionshortcutmodels,boffi2025buildconsistencymodellearning,geng2025meanflowsonestepgenerative} directly integrate the \textit{sample generation} given by the upper part of~\Cref{eq:augmented-pf-ode}. In practice, flow map models are neural networks, trained to satisfy the consistency constraint
\begin{align}
    \Phi(x_t, t, s)=x_t+\int_t^s v_\tau(x_\tau)\dd\tau=x_s.
\end{align}
Training objectives for the flow map are decomposed into two terms. For $t=s$, $\Phi(x_t, t, t)=v_t(x_t)$ degenerates to the classic FM objective; for $t< s$, the consistency is imposed via (1) the Lagrangian equation: $\dd_s\Phi(x_t, t, s)=v_s(x_s)$,  (2) the Eulerian equation: $\dd_t\Phi(x_t, t, s)=0$, or (3) the semigroup property: $\Phi(\Phi(x_t, t, r), r, s)=\Phi(x_t, t, s)$ for any $t<r<s$.
For a trained flow map with parameter $\theta$, the generation process is solved by iteratively applying $\hat{x}_{t_{i+1}}=\Phi_\theta(\hat{x}_{t_{i}}, t_i, t_{i+1})$ with $0=t_0<...<t_N=1$ and $\hat{x}_{t_0}\sim p_0$, involving $N\ll1000$ Number of Function Evaluations (NFEs).

Extending flow map models so that they simultaneously integrate the \textit{likelihood generation} given by the bottom part of~\Cref{eq:augmented-pf-ode} was recently proposed by~\citet{ai2026f2d2}. 

% \tony{
% We can intro $X_t=\beta_tX_1+\gamma_tZ$ which can then easily generalized to SI $X_t=\alpha_tX_0+\beta_tX_1+\gamma_tZ$. Or we could directly introduce $X_t=\alpha_tX_0+\beta_tX_1+\gamma_tZ$, then DM is recovered by setting $X_0\equiv0$, i.e. $p_0=\delta_0$.
% }

\section{Scalable Likelihood Distillation in Flow Maps}
We focus on learning a flow map-based, few-step generative model with co-evaluation of the model likelihood. Following the terminology of~\citet{ai2026f2d2},   we call such a family \emph{likelihood flow maps} and introduce a scalable training objective after reviewing the recent \texttt{F2D2} \citep{ai2026f2d2}.

Throughout the paper, we use the linear interpolation $x_t=(1-t)x_0+tx_1$ with $x_0\sim \mathcal{N}(0,I)$ {to construct the corrupted samples}. This defines a sequence of marginal distributions $(p_t)_t$ that could be induced by a velocity field $(v_t)_t$ through solving the ODE $\dd x_t=v_t(x_t)\dd t$. 
% While a more generalized version \textit{w.r.t.} the Stochastic Interpolants \citep{albergo2025interpolant}, \textit{i.e.} $x_t=\alpha_tx_0+\beta_tx_1+\gamma_tz$, is presented and discussed in \Cref{tbd}.
Although $v_t$ is typically intractable, we assume a distillation setting with access to an approximate velocity field $v_t^\phi$. We omit $\phi$ for simplicity and assume $v_t^\phi \approx v_t$.
% We also consider a distillation setting, where a pretrained velocity $v_t^\phi\approx v_t$ is available. Without ambiguity, we denote $v_t$ as $v_t^\phi$ for notational simplicity.

\subsection{Likelihood Flow Map Enables Few-steps Likelihood Evaluation} 

We first define the \emph{likelihood flow map}, which is an operator that outputs both the generated samples and the associated log-density change jointly. 
\begin{definition}[Likelihood Flow Map]
\label{def:likelihood-flow-map}
Given a density-augmented PF-ODE as described as \Cref{eq:augmented-pf-ode}, the likelihood flow map $\Psi:\mathbb{R}^d\times[0,1]^2\rightarrow\mathbb{R}^{d+1}$ is the solution operator that maps any state at time $t$ to its corresponding state at time $s$, with $s>t$, as well as the log-density change:
\begin{align}
    \Psi(x_t, t, s)=\begin{bmatrix}
        x_t\\0
    \end{bmatrix}+\int_t^s\begin{bmatrix}
         v_\tau(x_\tau)\\
        -\nabla\cdot v_\tau(x_\tau)
    \end{bmatrix}\dd\tau
    =\begin{bmatrix}
        x_s\\\log p_s(x_s)-\log p_t(x_t)
    \end{bmatrix}.\label{eq:augmented-flow-map}
\end{align}
\end{definition}
\citet{ai2026f2d2} propose to parameterize the likelihood flow map through the skip-connection mechanism
\begin{align}
    \Psi_\theta(x_t, t, s)=\begin{bmatrix}
        x_t\\0
    \end{bmatrix}+(s-t)\begin{bmatrix}
        u_\theta(x_t, t, s)\\D_\theta(x_t, t, s)
    \end{bmatrix}\Longrightarrow
    \begin{cases}
        u_\theta(x_t, t, t)=v_t(x_t)\\D_\theta(x_t, t, t)=-\nabla\cdot v_t(x_t)
    \end{cases},\label{eq:instantaneous-probability-flow-map}
\end{align}
where $u_\theta$ and $D_\theta$ are neural networks that share a subset of their parameters. When $s\rightarrow t$, this formulation naturally recovers the instantaneous velocity and instantaneous log-density-change.
For clarity, we will use $f_\theta(x_t, t, s)=\begin{bmatrix}
            u_\theta(x_t, t, s)\\
            D_\theta(x_t, t, s)
\end{bmatrix}$ and $b_t(x_t)=\begin{bmatrix}
v_t(x_t)\\
-\nabla\cdot v_t(x_t)
\end{bmatrix}$.

\subsection{How to Train a Likelihood Flow Map?}\label{sec:how-to-train-likelihood-flowmap}
According to the instantaneous behaviour as described by \Cref{eq:instantaneous-probability-flow-map}, training a likelihood flow map \citep{ai2026f2d2} is analogous to training a standard flow map \citep{boffi2025buildconsistencymodellearning}: we simply allocate an extra  dimension for the log-density. Under the skip-connection parameterization,
training a likelihood flow map is essentially learning to satisfy $(s-t)f_\theta(x_t, t, s)\approx \int_t^sb(x_\tau, \tau)\dd\tau$.
The training objective integrates the following \textit{three losses}:
% be written as a decomposed into a sum of \textit{three terms} as follows
%
\begin{align}
    \mathcal{L}_\mathrm{F2D2}(\theta)=\int_0^1 w_v(t)\mathcal{L}_{v}(\theta;t)+w_\mathrm{DM}(t)\mathcal{L}_{\mathrm{DM}}(\theta;t)\dd t + \int_0^1\int_0^s w(t, s)\mathcal{L}_\mathrm{SD}(\theta;t, s)\dd t\dd s,
\end{align}
where
$w:[0,1]^2\rightarrow\mathbb{R}^+$ and $w_v,w_\mathrm{DM}:[0,1]\rightarrow\mathbb{R}^+$ are weighting functions. The first two losses $\mathcal{L}_v$ and $\mathcal{L}_{\mathrm{DM}}$ are instantaneous losses written as
\begin{align}
    (\text{Flow Matching})\quad&\mathcal{L}_{v}(\theta;t)=\mathbb{E}_{x_0,x_1} \Bigl[{\left\|u_\theta(x_t, t, t)-v_t(x_t)\right\|^2}\Bigr],\\
    (\text{Divergence Matching})\quad&\mathcal{L}_{\mathrm{DM}}(\theta;t)=\mathbb{E}_{x_0,x_1} \Bigl[{\left|D_\theta(x_t, t, t)+\nabla\cdot v_t(x_t)\right|^2}\Bigr],
\end{align}
with $x_t=(1-t)x_0+tx_1$. The third loss $\mathcal{L}_\mathrm{SD}$ is called the self-distillation loss and has three different instantiations:
% We provide the Lagrangian self-distillation (LSD), \textit{i.e.} $\dd_s\Psi(x_t, t, s)=b_s(x_s)$ as follows, while leaving the other alternatives for self-distillation (ESD and PSD) in \Cref{app:sec:self-distillation-losses}:
% \begin{align}
%         \mathcal{L}_\mathrm{LSD}(\theta;t, s)=\mathbb{E}_{x_0,x_1}\Bigl[\bigl\|f_\theta(x_t, t, s)+(s-t)\mathrm{sg}\bigl(\partial_sf_\theta(x_t, t, s)\bigr) - \mathrm{sg}\bigl(f_\theta(x_s, s, s)\bigr)\bigr\|^2\Bigr],\label{eq:lsd-loss}
% \end{align}
% where $\mathrm{sg}(\cdot)$ denotes stop-gradient.
(1) the Lagrangian equation: $\dd_s\Phi(x_t, t, s)=v_s(x_s)$,  (2) the Eulerian equation: $\dd_t\Phi(x_t, t, s)=0$, or (3) the semigroup property: $\Phi(\Phi(x_t, t, r), r, s)=\Phi(x_t, t, s)$ for any $t<r<s$. Adapted to the likelihood flow map setting, these three consistency constraints read:
\begin{enumerate}[leftmargin=*,label=(\arabic*)]
    \item the Lagrangian self-distillation (LSD), \textit{i.e.} $\dd_s\Psi(x_t, t, s)=b_s(x_s)$
    \noindent
    \begin{equation}
        \mathcal{L}_\mathrm{LSD}(\theta;t, s)=\mathbb{E}_{x_0,x_1}\left[\bigl\|f_\theta(x_t, t, s)+(s-t)\mathrm{sg}\bigl(\partial_sf_\theta(x_t, t, s)\bigr) - \mathrm{sg}\bigl(f_\theta(x_s, s, s)\bigr)\bigr\|^2\right];\label{eq:lsd-loss}
    \end{equation}
    \item the Eulerian self-distillation (ESD), \textit{i.e.} $\dd_t\Psi(x_t, t, s)=0$
    \begin{multline}
        \mathcal{L}_\mathrm{ESD}(\theta;t, s)=\mathbb{E}_{x_0,x_1}\biggl[\Bigl\|f_\theta(x_t, t, s)- \mathrm{sg}\bigl(f_\theta(x_t, t, t)\bigr)-(s-t)\mathrm{sg}\bigl(\nabla f_\theta(x_t, t, s)\cdot b_t(x_t)+\partial_tf_\theta(x_t, t, s)\bigr)\Bigr\|^2\biggr];\label{eq:esd-loss}
    \end{multline}
    \item the progressive self-distillation (PSD), \textit{i.e.} $\Psi(\Psi(x_t, t, u), u, s)=\Psi(x_t, t, s)$ $\forall u\in[t, s]$
    \begin{multline}
        \mathcal{L}_\mathrm{PSD}(\theta;t, s)=\int_t^s\mu(u)\mathbb{E}_{x_0,x_1}\biggl[\Bigl\|f_\theta(x_t, t, s)-\mathrm{sg}\biggl(\frac{u-t}{s-t}f_\theta(x_t, t, u)+\frac{s-u}{s-t}f_\theta\bigl(\Psi_\theta(x_t, t, u), u, s\bigr)\biggr)\Bigr\|^2\biggr]\dd u,\label{eq:psd-loss}
    \end{multline}
\end{enumerate}
where $\mathrm{sg}(\cdot)$ denotes stop-gradient and $\mu:[0,1]\rightarrow\mathbb{R}^+$ is another weighting function for $u$ which is usually chosen as a midpoint $\mu(u)=\delta(u-\frac{t+s}{2})$ \citep{frans2025stepdiffusionshortcutmodels}.

\citet{ai2026f2d2} proposed {F2D2}, a likelihood flow map that trained with the above objectives. Despite the mathematical simplicity, {F2D2} relies on the Hutchinson's trace estimator \citep{Hutchinson01011990} to estimate the divergence term $\nabla\cdot v_t$ in $b_t$ to replace the computationally expensive $\mathcal{L}_\mathrm{DM}$ term, namely \emph{Hutchinson Divergence Matching}:
{
\fontsize{8.5pt}{10.2pt}\selectfont
\begin{align}
    \mathcal{L}_{\mathrm{HDM}}(\theta;t)=\mathbb{E}_{x_0,x_1}\mathbb{E}_\epsilon\left[\left|D_\theta(x_t, t, t)+\epsilon^\top\nabla v_t(x_t)\epsilon\right|^2\right],
\end{align}}
where $\epsilon$ is sampled from a 0-mean distribution such as $\mathcal{N}(0,I)$, and it can be shown that $\arg\min_\theta \mathcal{L}_{\mathrm{HDM}}(\theta;t)=\arg\min_\theta \mathcal{L}_{\mathrm{DM}}(\theta;t)$. Though $\epsilon^\top\nabla v_t(x_t)$ can be implemented through \texttt{JVP}, making it more efficient than the exact divergence calculation, it remains computationally expensive during training, especially in large-scale systems, and suffers from high variance. 

\subsection{Scalable Likelihood Distillation via Conditional Divergence Matching}\label{sec:likelihood-flowmap-conditional-divergence-matching}
The computational bottleneck in \texttt{F2D2} is introduced by having to estimate $-\nabla\cdot v_t(x_t)=\dd_t\log p_t(x_t)$ whenever $b_t(x_t)$ needs to be calculated, which also suffers from high variance. Inspired by the success of Flow Matching \citep{lipman2023flow,liu2022rectifiedflowmarginalpreserving}, which leverages a conditional objective 
% \emph{Conditional Flow Matching} (CFM) 
to bypass the intractable velocity $v_t(x_t)$ during training, we raise the following question:
\begin{center}
    \textit{Is there any conditional objective that allows us to bypass the divergence calculation/estimation?}
\end{center}
% And the answer is, yes. 
Our answer is positive. The key derivation is given by the chain rule: given $x_t$ generated by a known velocity field $v_t$, the chain rule reads as
{
\fontsize{9.5pt}{11.4pt}\selectfont
\begin{align}
    \dd_t\log p_t(x_t)=\nabla\log p_t(x_t)\cdot v_t(x_t) + \partial_t\log p_t(x_t); \label{eq:chain-rule-for-logp-main}
\end{align}}
this identity is also used by \citet{2021Density} for density ratio estimation.
\begin{remark}
    Alternatively, \Cref{eq:chain-rule-for-logp-main} can be derived from the continuity equation as stated in \Cref{app:sec:cond-total-derivative-through-continuity-equation}.
\end{remark}
It is well known that the Stein score $\nabla\log p_t$ can be learned via \emph{Denoising Score Matching} \citep[DSM,][]{JMLR:v6:hyvarinen05a,vincent2011dsm,ho2020denoisingdiffusionprobabilisticmodels,song2021scorebasedgenerativemodelingstochastic}, while the time score $\partial_t\log p_t$ can be learned via \emph{Conditional time Score Matching} \citep[C$t$SM,][]{yu2025density,guth2026learningnormalizedimagedensities}. Combining both, we propose the \emph{Conditional Divergence Matching} (CDM) objective, which bypasses the divergence operator:
% \omar{Not a fan of the name because our main contribution is rewriting the likelihood distillation loss \textit{without} the divergence. In F2D2 they call the likelihood distillation loss a ``divergence matching" loss because there is a divergence. But in our case, the likelihood distillation loss is no longer a ``divergence matching" loss because there no longer is a divergence and therefore no Hutchinson. Do you agree? Can we find a different name? We can say something like: ``we rewrite the likelihood distillation loss (Eq X) without the divergence operator, and call it ``conditional trajectory likelihood matching" / ``conditional pathwise likelihood matching" / whatever.}
%
\begin{align}
    \mathcal{L}_\mathrm{CDM}(\theta;t)=\mathbb{E}_{t,z,x_{t}}\Bigl[\bigl|D_\theta(x_t, t, t) - v_{t}(x_{t}) \cdot \nabla\log p_{t}(x_{t}|\xi) - \partial_{t}\log p_{t}(x_{t}|\xi)\bigr|^2\Bigr]
\end{align}
where $\xi$ is a conditioning variable such that conditional time and space scores are available in closed-form. For example, when using the interpolation $x_t = (1-t) x_0 + t x_1$, then conditioning on $\xi = x_1$ leaves only one source of randomness in $x_0$, which is Gaussian. It follows that $p_{t}(x_{t}|\xi)$ becomes tractable, and plugging it into the above yields the following proposition:
%
% \hanlin{We need to stress that the following expression is for Cond OT path. Proposal:
% \begin{align}
%     \mathcal{L}_\mathrm{CDM}(\theta;t)&=\mathbb{E}_{t,z,x_{t}}\left[\left|D_\theta(x_t, t, t) - v_{t}(x_{t}) \cdot \nabla\log p_{t}(x_{t}|z) - \partial_{t}\log p_{t}(x_{t}|z)\right|^2\right]
% \end{align}
% Then state that it reduces to the current one for Cond OT path.
% }
%
\begin{proposition}[Conditional Divergence Matching]\label{prop:conditional-divergence-matching}
    Let $(p_t)_t$ be the laws of $x_t=(1-t)x_0+tx_1$, where $x_0\sim p_0=\mathcal{N}(0,I)$ and $x_1\sim p_1$.
    Let $(v_t)_t$ be the velocity field that generates $(p_t)_t$, and $D_\theta:\mathbb{R}^d\times[0,1]^2\rightarrow\mathbb{R}$ the parameterized divergence predictor.
    The Conditional Divergence Matching objective
    \begin{align}
        \mathcal{L}_\mathrm{CDM}(\theta;t)=\mathbb{E}_{x_0,x_1}\Bigl[\bigl|D_\theta(x_t, t, t)+\frac{x_0}{1-t}\cdot\bigl(v_t(x_t)-(x_1-x_0)\bigr)-\frac{1}{1-t}d\bigr|^2\Bigr]
    \end{align}
    equals to the Divergence Matching objective $\mathcal{L}_\mathrm{DM}(\theta;t)$ up to a constant independent of $\theta$, for all $t\in[0,1]$. Hence, $\nabla_\theta\mathcal{L}_\mathrm{CDM}(\theta;t)=\nabla_\theta\mathcal{L}_\mathrm{DM}(\theta;t)$.
\end{proposition}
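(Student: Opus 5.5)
The plan is the standard "conditional-to-marginal" argument used for Conditional Flow Matching: expand both squares, note that the $|D_\theta|^2$ terms coincide, and show that the cross terms agree in expectation. The cross-term agreement reduces to a single conditional-expectation identity,
\begin{align}
\mathbb{E}\bigl[\,v_t(x_t)\cdot\nabla\log p_t(x_t|x_1)+\partial_t\log p_t(x_t|x_1)\,\big|\,x_t\bigr]=-\nabla\cdot v_t(x_t).\label{eq:plan-key}
\end{align}
Everything left over after that is independent of $\theta$. It is convenient to write $g(x_t)=D_\theta(x_t,t,t)$, a function of $x_t$ only.

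The first step is to compute the conditional scores in closed form, with $\xi=x_1$. Given $x_1$, we have $x_t\sim\mathcal{N}(tx_1,(1-t)^2I)$, so
\begin{align}
\log p_t(x|x_1)=-\frac{\|x-tx_1\|^2}{2(1-t)^2}-d\log(1-t)+\text{const}.
\end{align}
The spatial score is $\nabla\log p_t(x_t|x_1)=-\frac{x_t-tx_1}{(1-t)^2}=-\frac{x_0}{1-t}$. For the time score we differentiate in $t$ at fixed $x$ and then substitute $x=x_t$:
\begin{align}
\partial_t\log p_t(x_t|x_1)=\frac{x_0\cdot x_1}{1-t}+\frac{\|x_0\|^2}{1-t}+\frac{d}{1-t}\cdot\Bigl(1\Bigr)\cdot\!\!\!\!\quad\text{(up to sign bookkeeping)}.
\end{align}
Carried out carefully, this gives $\partial_t\log p_t(x_t|x_1)=\frac{d}{1-t}-\frac{x_0\cdot(x_1-x_0)}{1-t}$ in the convention matching the proposition. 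Substituting these into the general $\mathcal{L}_\mathrm{CDM}$ target $v_t\cdot\nabla\log p_t(\cdot|\xi)+\partial_t\log p_t(\cdot|\xi)$ reproduces exactly the integrand in the proposition. Checking the signs here against the stated formula is routine but error-prone, so I would do it explicitly.

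The second step is to prove \eqref{eq:plan-key}. For the spatial part, fix $x$ and average the conditional score over $p(x_1|x_t=x)$. The identity $\nabla p_t(x)=\int\nabla p_t(x|x_1)p_1(x_1)\,dx_1$ gives $\mathbb{E}[\nabla\log p_t(x_t|x_1)|x_t]=\nabla\log p_t(x_t)$. Since $v_t(x_t)$ is $x_t$-measurable, the spatial term averages to $v_t\cdot\nabla\log p_t$. Likewise, differentiating $p_t(x)=\int p_t(x|x_1)p_1(x_1)\,dx_1$ in $t$ at fixed $x$ gives $\mathbb{E}[\partial_t\log p_t(x_t|x_1)|x_t]=\partial_t\log p_t(x_t)$. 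Adding the two and using \Cref{eq:chain-rule-for-logp-main} together with the augmented PF-ODE identity $\dd_t\log p_t(x_t)=-\nabla\cdot v_t(x_t)$ (equivalently, the continuity equation of \Cref{app:sec:cond-total-derivative-through-continuity-equation}) yields \eqref{eq:plan-key}.

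The last step is the expansion itself. Write $c$ for the conditional target, so that $\mathcal{L}_\mathrm{CDM}=\mathbb{E}|g(x_t)-c|^2=\mathbb{E}[g^2]-2\mathbb{E}[g\,\mathbb{E}[c|x_t]]+\mathbb{E}[c^2]$. By \eqref{eq:plan-key}, $\mathbb{E}[c|x_t]=-\nabla\cdot v_t(x_t)$, so this equals $\mathbb{E}|g+\nabla\cdot v_t|^2+\mathbb{E}[c^2]-\mathbb{E}|\nabla\cdot v_t|^2=\mathcal{L}_\mathrm{DM}+C$, where $C$ does not depend on $\theta$. Gradient equality then follows immediately.

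The main obstacle is justifying the interchange of derivatives and integrals, and keeping $C$ finite as $t\to1$, where the $\frac{1}{1-t}$ factors blow up. I would assume enough integrability (finite second moments of $p_1$, and $v_t$ square-integrable) for $t<1$, and treat the endpoint $t=1$ as a limit or exclude it as a measure-zero time.
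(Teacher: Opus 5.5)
Your route is the paper's own. The appendix proves a conditional-to-marginal theorem by exactly your square expansion (your third step). Its total-derivative derivation writes $\partial_t\log p_t(x_t)+v_t(x_t)\cdot\nabla\log p_t(x_t)$ as the posterior mean of the conditional space and time scores (your second step), for a general interpolant later specialized to $\alpha_t=t$, $\gamma_t=1-t$. Those two steps are fine, and your caveat about $t=1$ is a fair point that the statement glosses over. The problem is the one explicit computation in your first step, the conditional time score, and it is wrong. That computation is exactly what ties the abstract argument to the integrand in the statement.

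There is no convention to choose here. At fixed $x$,
\[
\partial_t\log p_t(x\mid x_1)=\frac{(x-tx_1)\cdot x_1}{(1-t)^2}-\frac{\|x-tx_1\|^2}{(1-t)^3}+\frac{d}{1-t},
\]
and substituting $x-tx_1=(1-t)x_0$ gives
\[
\partial_t\log p_t(x_t\mid x_1)=\frac{x_0\cdot x_1-\|x_0\|^2+d}{1-t}=\frac{d}{1-t}+\frac{x_0\cdot(x_1-x_0)}{1-t}.
\]
Your first display has the sign of $\|x_0\|^2$ wrong, and your ``careful'' version flips the sign of the whole $x_0\cdot(x_1-x_0)$ term.

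A one-line check exposes this. Any time score satisfies $\mathbb{E}[\partial_t\log p_t(x_t\mid x_1)\mid x_1]=0$. With $x_0\sim\mathcal{N}(0,I)$ independent of $x_1$, the correct expression has mean $(0-d+d)/(1-t)=0$, whereas yours has mean $2d/(1-t)$.

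With your sign, the substitution yields $D_\theta+\frac{x_0}{1-t}\cdot\bigl(v_t(x_t)+(x_1-x_0)\bigr)-\frac{d}{1-t}$. This is not the proposition's integrand, and it is not an unbiased conditional target either. It differs from the correct integrand by $\frac{2}{1-t}x_0\cdot(x_1-x_0)$, whose mean is $-2d/(1-t)$, so the resulting loss does not equal $\mathcal{L}_\mathrm{DM}$ up to a constant.

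With the corrected time score you get
\[
v_t\cdot\nabla\log p_t(x_t\mid x_1)+\partial_t\log p_t(x_t\mid x_1)=-\frac{x_0}{1-t}\cdot\bigl(v_t(x_t)-(x_1-x_0)\bigr)+\frac{d}{1-t},
\]
which matches the statement. It also agrees with the paper's conditional time score $\frac{1}{\gamma_t}\bigl(-d\dot\gamma_t+z\cdot\tilde v_t\bigr)$ with $\gamma_t=1-t$, $\tilde v_t=x_1-z$, and the noise $z$ playing the role of your $x_0$. The rest of your argument then goes through unchanged.
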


\begin{remark}
    The CDM objective can also be derived by first expressing the velocity as the posterior expectation over the conditional velocity, and then manipulating the divergence operator to obtain the divergence-operation-free training objective. Details are provided in \Cref{app:sec:cond-trace}.
\end{remark}

\begin{remark}
    While the velocity $v_t$ and the Stein score $\nabla\log p_t$ are connected by \Cref{eq:velocity-score-relation-in-diffusion-main} when $p_0=\mathcal{N}(0,I)$ and $x_t=(1-t)x_0+tx_1$, the mapping between the two becomes nontrivial for the general Stochastic Interpolants cases, as stated in \Cref{eq:velocity-score-relation-in-stochastic-interpolants}. Hence, we stick to treating $\nabla\log p_t$ as intractable. This can introduce additional variance to the training objective in the $\mathcal{N}(0,I)$ base cases. However, we found that it almost does not degrade performance in practice
    % , as stated in \tocite
    .
\end{remark}

% \hanlin{I was wondering about directly plugging in the score derived from the velocity into the above formula. I tried it but it does not seem to make much of a difference. It is a bit unclear whether this even reduces variances.}

\subsection{Vectorized  Divergence Matching}\label{sec:vectorized-divergence-matching}
% The optimization problem of learning the divergence and the log-density-change part in the likelihood flow map as scalars is generally less constrained than if they were vectors.
Because the divergence and log-density are scalar quantities, learning to predict them directly is more data-intensive than learning from a vector signal.
% Because the divergence and the log-density are inherently scalar quantities, learning a model to predict them directly requires more data than if they were vectors.
% \omar{``Because the divergence and the log-density are scalar quantities, learning their parameters is arguably more data-greedy than learning from a vector signal".}
To regress vectors instead of scalars, we propose the following vectorized variants of the divergence matching objectives: (1) \emph{Vectorized Divergence Matching} (DM-v), (2) \emph{Vectorized Hutchinson Divergence Matching} (HDM-v), and (3) \emph{Vectorized Conditional Divergence Matching} (CDM-v). In particular, the effective training target changes from the negative divergence $-\nabla\cdot v_t$ to the negative diagonal entries of the Jacobian $-\mathrm{diag}(\nabla v_t)$. For notational clarity, we use the symbolic notation $\nabla\odot v_t$ to denote $\mathrm{diag}(\nabla v_t)$. {In particular, the divergence predictor is now parameterized to output a vector, i.e. $D_\theta:\mathbb{R}^d\times[0,1]^2\rightarrow\mathbb{R}^d$}:
\begin{align}
    \mathcal{L}_\mathrm{DM\text{-}v}(\theta;t)&=\mathbb{E}_{x_0,x_1}\left[\left\|D_\theta(x_t, t, t)+\nabla\odot v_t(x_t)\right\|^2\right],\label{eq:vectorized-divergence-matching}\\
    \mathcal{L}_\mathrm{HDM\text{-}v}(\theta;t)&=\mathbb{E}_{x_0,x_1, \epsilon}\left[\left\|D_\theta(x_t, t, t)+\epsilon\odot\nabla v_t(x_t)\epsilon\right\|^2\right],\label{eq:vectorized-hutchinson-divergence-matching}\\
    \mathcal{L}_\mathrm{CDM\text{-}v}(\theta;t)&=\mathbb{E}_{x_0,x_1}\biggl[\Bigl\|D_\theta(x_t, t, t)+\frac{x_0}{1-t}\odot\bigl(v_t(x_t)-(x_1-x_0)\bigr)-\frac{1}{1-t}\mathbf{1}_d\Bigr\|^2\biggr].\label{eq:vectorized-conditional-divergence-matching}
\end{align}

\begin{proposition}[Vectorized Divergence Matching]\label{prop:vectorized-divergence-matching}
    Let $(v_t)_t$ and $(p_t)_t$ be defined as in~\Cref{prop:conditional-divergence-matching}, and $D_\theta:\mathbb{R}^d\times[0,1]^2\rightarrow\mathbb{R}^d$ be the parameterized \textbf{vectorized} divergence predictor.
    The vectorized learning objectives defined in \Cref{eq:vectorized-divergence-matching,eq:vectorized-hutchinson-divergence-matching,eq:vectorized-conditional-divergence-matching} are related as
    \begin{align}
        \arg\min_\theta \mathcal{L}_\mathrm{HDM\text{-}v}(\theta;t)&=\arg\min_\theta\mathcal{L}_\mathrm{DM\text{-}v}(\theta;t)\quad\text{and}\quad\nabla_\theta\mathcal{L}_\mathrm{CDM\text{-}v}(\theta;t)=\nabla_\theta\mathcal{L}_\mathrm{DM\text{-}v}(\theta;t),
    \end{align}
    for all $t\in[0,1]$. In particular, the optimal $\theta^*=\arg\min_\theta\mathcal{L}_\mathrm{DM\text{-}v}(\theta;t)$ recovers
    \begin{align}
        D_{\theta^*}(x_t, t, t)&=\nabla\odot v_t(x_t)\quad\text{and}\quad\mathbf{1}_d\cdot D_{\theta^*}(x_t, t, t)=\nabla\cdot v_t(x_t).
    \end{align}
\end{proposition}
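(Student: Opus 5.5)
The plan is to reduce all three vectorized objectives to coordinate-wise least-squares problems and compare their conditional-mean targets given $x_t$. Each loss has the form $\mathbb{E}\bigl[\|D_\theta(x_t,t,t)+Y\|^2\bigr]$ for a random vector $Y$ that depends on the sample. Expanding the square gives
\begin{align}
\mathbb{E}\bigl[\|D_\theta+Y\|^2\bigr]=\mathbb{E}\bigl[\|D_\theta\|^2\bigr]+2\,\mathbb{E}\bigl[D_\theta\cdot \mathbb{E}[Y\mid x_t]\bigr]+\mathbb{E}\bigl[\|Y\|^2\bigr].
\end{align}
So two such losses differ by a $\theta$-independent constant whenever their targets share the same conditional mean $\mathbb{E}[Y\mid x_t]$. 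Equal gradients and equal argmin sets then follow at once. The minimizer over a sufficiently expressive class is $D_{\theta^*}=-\mathbb{E}[Y\mid x_t]$. Summing its coordinates, i.e.\ taking $\mathbf{1}_d\cdot D_{\theta^*}$, recovers the scalar divergence. This is also the vectorized analogue of \Cref{prop:conditional-divergence-matching}.

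\textbf{HDM-v.} Here $Y=\epsilon\odot\nabla v_t(x_t)\epsilon$, with $\epsilon$ independent of $(x_0,x_1)$, zero mean and $\mathbb{E}[\epsilon\epsilon^\top]=I$. Coordinate $i$ of $Y$ is $\sum_j \epsilon_i\epsilon_j\,\partial_j v_{t,i}$. Its conditional expectation is therefore $\partial_i v_{t,i}(x_t)$, which gives $\mathbb{E}[Y\mid x_t]=\nabla\odot v_t(x_t)$. This is exactly the target of DM-v, which is deterministic given $x_t$. Hence $\mathcal{L}_\mathrm{HDM\text{-}v}-\mathcal{L}_\mathrm{DM\text{-}v}$ is a constant in $\theta$, namely the Hutchinson variance, and the argmins coincide.

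\textbf{CDM-v.} This is the main step: for $t<1$ we must show, coordinatewise,
\begin{align}
\mathbb{E}\Bigl[\tfrac{x_{0,i}}{1-t}\bigl(v_{t,i}(x_t)-(x_{1,i}-x_{0,i})\bigr)-\tfrac{1}{1-t}\Bigm| x_t=x\Bigr]=\partial_i v_{t,i}(x).
\end{align}
I would write $p_t(x)=\int p_1(x_1)\,\mathcal{N}\bigl(x;tx_1,(1-t)^2I\bigr)\,dx_1$, with $x_0=(x-tx_1)/(1-t)$. The conditional velocity representation then reads $p_t v_{t,i}=\int p_1\,(x_{1,i}-x_{0,i})\,\mathcal{N}\,dx_1$.

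Next I differentiate both sides in $x_i$. On the right I use $\partial_i x_{0,i}=1/(1-t)$ and $\partial_i\log\mathcal{N}=-x_{0,i}/(1-t)$. This gives
\begin{align}
\partial_i(p_t v_{t,i})=p_t\,\mathbb{E}\bigl[-\tfrac{1}{1-t}-\tfrac{x_{0,i}(x_{1,i}-x_{0,i})}{1-t}\bigm| x\bigr].
\end{align}
On the left I use the product rule together with $\partial_i p_t=p_t\,\mathbb{E}[-x_{0,i}/(1-t)\mid x]$. Rearranging yields the displayed identity.

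This step is a single-coordinate version of the chain-rule argument behind \Cref{prop:conditional-divergence-matching}, i.e.\ the conditional space and time scores of the Gaussian conditional. The only care needed is justifying differentiation under the integral and assuming $p_t>0$. Both hold for $t<1$ under mild moment conditions on $p_1$; the case $t=1$ is excluded because of the $1/(1-t)$ factor.

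\textbf{Sign convention.} The identity gives $\mathbb{E}[Y_\mathrm{CDM}\mid x_t]=\nabla\odot v_t=\mathbb{E}[Y_\mathrm{DM}\mid x_t]$, so the expansion above yields equal gradients. The minimizer is $D_{\theta^*}=-\nabla\odot v_t$, and $\mathbf{1}_d\cdot D_{\theta^*}=-\nabla\cdot v_t$. This is consistent with the convention $D_\theta(x_t,t,t)=-\nabla\cdot v_t$ in \Cref{eq:instantaneous-probability-flow-map}. I would point out that the displayed recovery statement should carry this minus sign, i.e.\ the "negative diagonal" target described in the text.
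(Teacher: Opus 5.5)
Your proof is correct, and its outer structure matches the paper's. Each objective is reduced to matching the conditional mean $\mathbb{E}[Y\mid x_t]$, which is exactly the marginalization theorem the paper proves at the start of its appendix.

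Where you differ is in how you obtain the key coordinatewise identity for CDM-v. The paper differentiates the posterior $p(x_0,x_1\mid x_t)$ inside $v_t=\mathbb{E}[\tilde v_t\mid x_t]$. This gives a full conditional Jacobian $\tilde J_v=-\tilde v_t(\nabla\log p_t+z/\gamma_t)^\top+\frac{\dot\gamma_t}{\gamma_t}I$ (\Cref{app:sec:cond-trace}). The paper then takes its diagonal via $\mathrm{diag}(ab^\top)=a\odot b$. Only afterwards does it trade the marginal score $\nabla\log p_t$ for $v_t$, using the zero-mean control variate of \Cref{eq:control-variate-with-h} together with Tweedie. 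A third derivation, via second-order Tweedie, is in \Cref{app:sec:connection-to-higher-order-score-matching}.

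You instead differentiate the unnormalized integral $p_t v_{t,i}$. The product rule then places $v_{t,i}$ next to $\partial_i\log p_t=-\mathbb{E}[x_{0,i}\mid x]/(1-t)$, so the CDM-v target appears in one step. You never introduce the marginal score and need no separate control-variate argument.

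The paper's route buys generality: arbitrary stochastic interpolants, a matrix-valued estimator from which the trace or diagonal can be read off, and a visible family of estimators differing by control variates. Yours is shorter and makes the regularity requirements explicit ($p_t>0$, differentiation under the integral, $t<1$). You also supply three things the paper leaves implicit:
\begin{itemize}
\item an actual argument for HDM-v, which in fact gives the stronger result that it differs from DM-v by a constant, so the gradients agree as well;
\item the exclusion of $t=1$;
\item the sign correction: with the losses as written, $D_{\theta^*}=-\nabla\odot v_t$ and $\mathbf{1}_d\cdot D_{\theta^*}=-\nabla\cdot v_t$, so the displayed recovery claim is off by a sign, as you say.
\end{itemize}

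One precision point. Your remark that this step is a single-coordinate version of the chain-rule argument is misleading. The chain-rule argument uses only that $v_t$ satisfies the continuity equation, and it yields only the trace identity. That identity holds for any $v_t=v_t^\ast+w_t$ with $\nabla\cdot(p_tw_t)=0$. The coordinatewise identity would additionally require $\partial_i(p_tw_{t,i})=0$ for every $i$, so it genuinely needs the canonical field $v_t=\mathbb{E}[x_1-x_0\mid x_t]$. That is what your product-rule step uses, and it is how the hypothesis ``the velocity field that generates $(p_t)_t$'' must be read, consistent with \Cref{assump:perfect-v}.
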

The same vectorization technique is also used by~\citet{ou2025improvingprobabilisticdiffusionmodels} for learning the covariance of the denoising kernel in Diffusion models, and~\citet{yu2025density} for learning the time score.

\paragraph{Connection to Second-Order Score Matching.} {Our CDM-v objective is related to the second-order Score Matching objective~\citep{mengEstimatingHighOrder2021}. To understand this connection recall the relationship between the velocity and score described in \Cref{eq:velocity-score-relation-in-diffusion-main}}
% To connect CDM-v with the second-order Score Matching, one should recall the connection between the velocity and score described in \Cref{eq:velocity-score-relation-in-diffusion-main}:
\begin{align}
    v_t(x_t)&=\frac{x_t}{t}+\frac{1-t}{t}\nabla\log p_t(x_t)\Longrightarrow \nabla v_t(x_t)=\frac{I}{t}+\frac{1-t}{t}\nabla^2\log p_t(x_t).
\end{align}
With access to $\nabla\log p_t$, $\nabla^2\log p_t$ can be learned with a conditional loss \citep{mengEstimatingHighOrder2021}.
Hence, learning the diagonal entries of $\nabla^2\log p_t(x_t)$ only is equivalent to our vectorized conditional divergence matching up to an independent constant.
We leave the detailed background and discussion in \Cref{app:sec:connection-to-higher-order-score-matching}.

\subsection{SCALLOP}
{Combing the training objectives from \Cref{sec:how-to-train-likelihood-flowmap} with our \emph{Conditional Divergence Matching} objective (\Cref{sec:likelihood-flowmap-conditional-divergence-matching}) and the vectorization technique (\Cref{sec:vectorized-divergence-matching}), we propose \scalloptt, a scalable and efficient pipeline for training a likelihood flow map. The complete training objective is written in \Cref{eq:scallop-loss}.}
% Combining the aforementioned techniques,  \omar{``aforementioned techniques" is unclear given that we speak of many things previously. How about: ``Combining the training objectives from Section 3.2 with our new likelihood distillation objective from Section 3.4 is what we call \scalloptt, a scalable and efficient pipeline for training a likelihood flow map. The complete training objective is written in Eq 22.}. 
Again, we consider a distillation setting following~\cite{ai2026f2d2}, where a pretrained velocity field $v_t^\phi\approx v_t$ is available. When no ambiguity arises, we denote $v_t$ as $v_t^\phi$ for notational simplicity. We focus on modelling molecular systems, where specific symmetries must be respected.
% , where specific  rotational and translational symmetries should be accounted
Related works are discussed in~\Cref{sec:introduction} and expanded on in \Cref{app:sec:related_work}.

\paragraph{Parameterization.}
We stick to using the vectorized objective, CDM-v, to learn the log-density-change part in likelihood flow map. Therefore, we parameterize $\Psi_\theta(x_t, t, s)=[x_t;0]+(t-s)f_\theta(x_t, t, s)$, where
$f_\theta(x_t, t, s)=[u_\theta(x_t, t, s);D_\theta(x_t, t, s)]$,
$u_\theta,D_\theta:\mathbb{R}^d\times[0,1]^2\rightarrow\mathbb{R}^{d}$, 
$x_t+(s-t)u_\theta(x_t, t, s)$ gives the generated samples, and $(s-t)D_\theta(x_t, t, s)$ gives the vectorized log-density-change. We employ shared encoding layers for feature extraction, while specifying two readout-heads for the samples and log-density-change respectively, analogous to \citet{ai2026f2d2}. As a byproduct of the vectorization trick, we employ the same architecture for these two readout-heads. In particular, we follow \citet{rehman2025falconfewstepaccuratelikelihoods} to use the standard Diffusion Transformer block \citep[DiT,][]{Peebles_2023_ICCV} with an additional time embedding head to adapt to the flow map setting. We use 6 shared DiT blocks, and 3 DiT blocks for each of the sample readout-head and log-density-change readout head.
% In particular, we modify the standard Diffusion Transformer block \citep[DiT][]{Peebles_2023_ICCV} in two ways to fit the likelihood flow map for molecular systems: (1) we add an additional time embedding head as \cite{rehman2025falconfewstepaccuratelikelihoods} to adapt to the flow map setting; (2) we enforce permutational equivariance by dropping the standard positional embedding in DiT, as well as incorporate the topology of the molecule in the attention mechanism, while \cite{rehman2025falconfewstepaccuratelikelihoods} employed a standard DiT which is not equivariant to permutations and therefore might be overfitting. As the log-density-change is obtained by summing over the permutational-equivariant vector, it is invariant to permutation by design.
% We provide more details on the architectural design in \Cref{app}.
% \tony{I'm not sure how to organize the word here. I'm afraid this will be attacked by FALCON's authors... but it is true that they didn't have perm-equiv as I can't find anything atbout this in their paper.}

\paragraph{Enforcing Soft Symmetries for Molecular Systems.} The generated samples should be equivariant to rotation and translation of the input, namely SE(3)-equivariant, and the predicted log-density-change should be SE(3)-invariant. Following \citet{tan2026scalableequilibriumsamplingsequential,rehman2025falconfewstepaccuratelikelihoods}, we subtract the mean of the data to enforce translational equivariance, while employing random rotation to softly impose equivariance of the rotations. After training, both $u_\theta$ and $D_\theta$ are SE(3)-equivariant. And again, by summing over the elements of $D_\theta$, the predicted log-density-change is SE(3)-invariant.

\paragraph{Learning Objective and Stabilizing Training.} Following \citet{boffi2025buildconsistencymodellearning,ai2026f2d2}, we use learned weights $\exp(-w_\psi(s, t))$ as $w_\mathrm{SD}(s, t)$ and $\exp(-w_\psi(t, t))$ as $w_v(t)$, where $\psi$ denotes a shallow scalar-valued neural network. However, for the divergence matching part, the objective has high variance when $t\rightarrow1$ as studied by \citet{song2021scorebasedgenerativemodelingstochastic,yu2025density} as well as a singularity at $t=1$. To stabilize training, we uniformly sample $t$ from $[0,1-\mathrm{eps}]$ with $\mathrm{eps}=10^{-5}$, as well as employ a linear weighting $w_\psi(t, t)(1-t)$ for the divergence matching part. For the self distillation part, $(s,t)$ is sampled uniformly on the upper triangle $t<s$ of the square $[0,1-\mathrm{eps}]^2$. Overall, the practical training objective reads
% \begin{multline}
%     \mathcal{L}_\mathrm{\scallop}(\theta)=\mathbb{E}_{t} \Bigl[e^{-w_\psi(t, t)}\bigl(\mathcal{L}_{v}(\theta;t)+(1-t)\mathcal{L}_{\mathrm{CDM\text{-}v}}(\theta;t)\bigr)+w_\psi(t, t)\Bigr]\\+ \mathbb{E}_{s, t}\Bigl[ e^{-w_\psi(t, s)}\mathcal{L}_\mathrm{SD}(\theta;t, s)+w_\psi(t, s)\Bigr].\label{eq:scallop-loss}
% \end{multline}
\begin{multline}
    \mathcal{L}_\mathrm{\scallop}(\theta)=\int_0^1 \Bigl[e^{-w_\psi(t, t)}\bigl(\mathcal{L}_{v}(\theta;t)+(1-t)\mathcal{L}_{\mathrm{CDM\text{-}v}}(\theta;t)\bigr)+w_\psi(t, t)\Bigr]\dd t\\+ \int_0^1\int_0^s\Bigl[ e^{-w_\psi(t, s)}\mathcal{L}_\mathrm{SD}(\theta;t, s)+w_\psi(t, s)\Bigr]\dd t\dd s.\label{eq:scallop-loss}
\end{multline}

\paragraph{Fast Generation and Likelihood Evaluation.} Given a discrete time schedule $0=t_0<...<t_K=1$, inference in \scalloptt is analogous to that in a standard flow map, with a small amount of additional overhead due to calling the log-density-change readout-head. In particular, we start by sampling $x_0\sim p_0$ and evaluate $\log p_0(x_0)$, then we iteratively update $x_{t_{i+1}}=x_{t_i}+(t_{i+1}-t_i)u_\theta(x_{t_i}, t_i, t_{i+1})$ and $\log p_{t_{i+1}}(x_{t_{i+1}})=\log p_{t_{i}}(x_{t_{i}})+(t_{i+1}-t_i)D_\theta(x_{t_i}, t_i, t_{i+1})$, see \Cref{alg:scallop-inference} for details. It is notable that, unlike methods with approximate invertibility \citep{draxler2024freeformflowsmakearchitecture, rehman2025falconfewstepaccuratelikelihoods}, \scalloptt does not require calculating or estimating the log-determinant of the Jacobian of the generator for likelihood evaluation, but a single forward pass of the neural network.
Overall, the few-steps and high-order-free nature of \scalloptt enables efficient data generation alongside their density evaluations.

\section{Experiments}
\label{sec:experiments}

\newcommand{\err}[1]{\color{gray}{\scriptscriptstyle \pm #1}}
\newcommand{\best}[1]{\textbf{#1}}
\newcommand{\second}[1]{\underline{#1}}
% Add this to your preamble (before \begin{document}):
% \usepackage[export]{adjustbox} 
\begin{table*}[t]
    \centering
    \caption{Results on ALA-$N$ systems, with $N=2,3,4,6$. 
    Results are obtained from three random seeds with $2\times10^5$ samples, reported as \texttt{mean} $\pm$ \texttt{std}.
    Best results are in \textbf{bold}, second ones are \underline{underlined}. Values of SBG-IS, FALCON, and FALCON-A are from \citet{rehman2025falconfewstepaccuratelikelihoods}.}
    \scriptsize
    \label{tab:main_results}
    
    % Adjusted to 0.17 to give the table slightly more room
    \begin{minipage}[t]{0.2\textwidth}
        \centering
        \vspace{0.8cm} % CRITICAL: Locks the top baseline with the right minipage
        \includegraphics[width=0.9\linewidth]{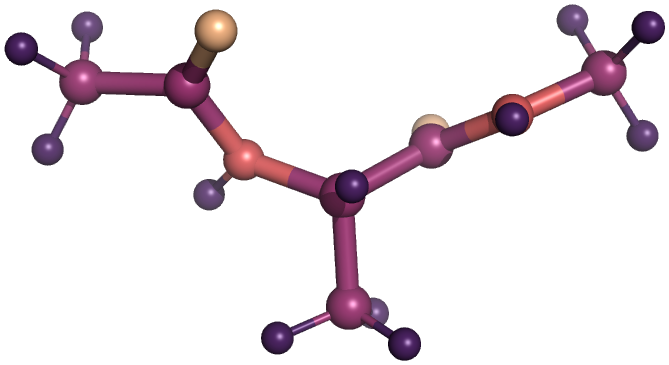}\\[0.2cm]
        \includegraphics[width=0.9\linewidth]{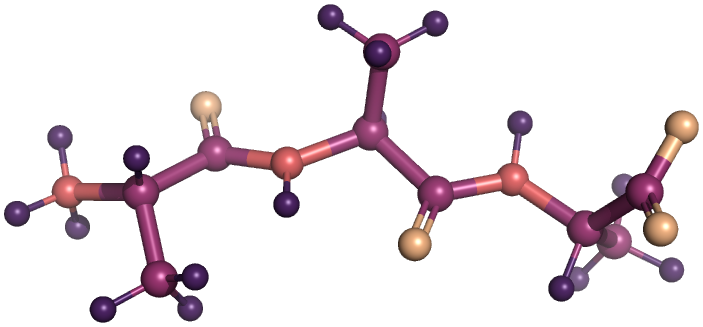}\\[0.2cm]
        \includegraphics[width=0.9\linewidth]{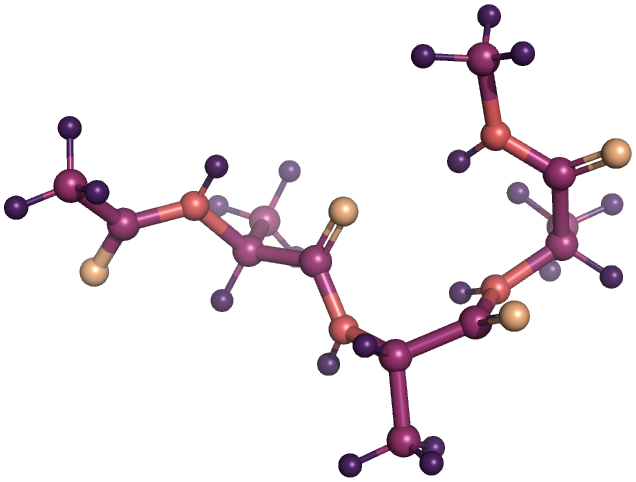}\\[0.2cm]
        \includegraphics[width=0.9\linewidth]{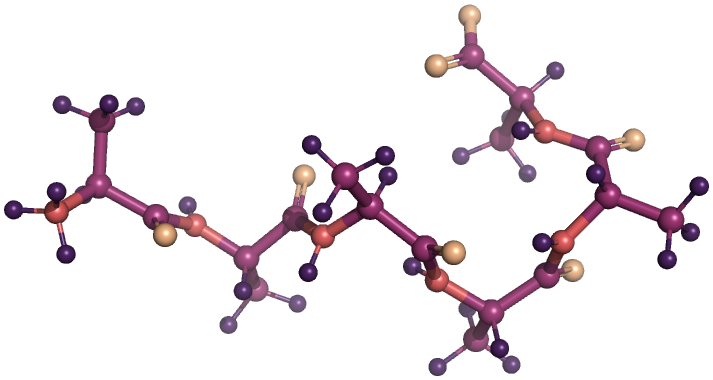}
    \end{minipage}%
    \begin{minipage}[t]{0.75\textwidth}
        \vspace{0pt} % CRITICAL: Locks the top baseline with the left minipage
        \renewcommand{\arraystretch}{1.6} 
        
        % CRITICAL: Reduces the default space between columns to 3pt
        \setlength{\tabcolsep}{7pt} 
        
        % Increased to 7 columns total
        \begin{tabular}{c l c c c c c} 
            \cmidrule[0.8pt]{1-7}
            % & & & & & & \\
            \textbf{} & & {\footnotesize SBG-IS} & {\footnotesize FALCON} & {\footnotesize FALCON-A} & {\footnotesize F2D2} & {\footnotesize \textbf{\scallop}} \\
            \cmidrule[0.5pt]{1-7}
            
            % --- ALA Dipeptide ---
            \multirow{4}{*}{\begin{tabular}[c]{@{}c@{}}\textbf{ALA-2} \\ ($d=66$)\end{tabular}}
            & ESS $\uparrow$   & $0.030 \err{0.012}$ & $0.067 \err{0.013}$ & $\best{0.097} \err{0.007}$ & $ 0.066\err{0.001}$ & $\second{0.088}\err{0.001}$ \\
            & $\mathcal{E}$-$\mathcal{W}_2$ $\downarrow$   & $0.873 \err{0.338}$ & $\best{0.225} \err{0.104}$ & $\underline{0.512} \err{0.038}$ & $ 0.958\err{0.114}$ & $0.555\err{0.116}$ \\
            & $\mathbb{T}$-$\mathcal{W}_2$ $\downarrow$   & $0.439 \err{0.129}$ & $0.402 \err{0.021}$ & $\best{0.180} \err{0.005}$ & $ 0.383\err{0.022}$ & $ \second{0.310}\err{0.027}$ \\
            & NFE  & 1 & 4 & 4 & 4 & 4\\
            \cmidrule{1-7}
            
            % --- ALA Tripeptide ---
            \multirow{4}{*}{\begin{tabular}[c]{@{}c@{}}\textbf{ALA-3} \\ ($d=99$)\end{tabular}} 
            & ESS $\uparrow$   & $0.052 \err{0.013}$ & $0.077 \err{0.004}$ & $\best{0.104} \err{0.004}$ & $0.079 \err{0.001}$ & $\second{0.088}\err{0.001}$ \\
            & $\mathcal{E}$-$\mathcal{W}_2$ $\downarrow$   & $\second{0.758} \err{0.506}$ & $\best{0.544} \err{0.013}$ & $1.385 \err{0.182}$ & $2.155 \err{0.033}$ & $1.596\err{0.028}$ \\
            & $\mathbb{T}$-$\mathcal{W}_2$ $\downarrow$   & $0.502 \err{0.016}$ & $0.452 \err{0.011}$ & $0.343 \err{0.004}$ & $\underline{0.219} \err{0.001}$ & $ \best{0.212}\err{0.001}$ \\
            & NFE  & 1 & 8 & 8 & 8 & 8\\
            \cmidrule{1-7}
    
            % --- ALA Tetrapeptide ---
            \multirow{4}{*}{\begin{tabular}[c]{@{}c@{}}\textbf{ALA-4} \\ ($d=126$)\end{tabular}}
            & ESS $\uparrow$   & $0.046 \err{0.014}$ & $0.055 \err{0.003}$ & $\best{0.094} \err{0.007}$ & $0.053 \err{0.001}$ & $\second{0.067} \err{0.001}$ \\
            & $\mathcal{E}$-$\mathcal{W}_2$ $\downarrow$   & $\second{1.068} \err{0.495}$ & $\best{0.686} \err{0.047}$ & $2.929 \err{0.068}$ & $4.209 \err{0.066}$ & $1.545\err{0.033}$ \\
            & $\mathbb{T}$-$\mathcal{W}_2$ $\downarrow$   & $\second{0.969} \err{0.067}$ & $\best{0.858}  \err{0.077}$ & $1.094  \err{0.034}$ & $1.435 \err{0.023}$ & $1.399 \err{0.039}$ \\
            & NFE  & 1 & 8 & 8 & 8 & 8\\
            \cmidrule{1-7}
            
            % --- ALA Hexapeptide ---
            \multirow{4}{*}{\begin{tabular}[c]{@{}c@{}}\textbf{ALA-6} \\ ($d=189$)\end{tabular}}
            & ESS $\uparrow$   & $0.034 \err{0.015}$ & $\second{0.060}  \err{0.017}$ & $\best{0.077} \err{0.007}$ & $0.033\err{0.001}$ & $0.042\err{0.001}$ \\
            & $\mathcal{E}$-$\mathcal{W}_2$ $\downarrow$   & $\second{1.021} \err{0.239}$ & $\best{0.892}   \err{0.311}$ & $1.211 \err{0.105}$ & $10.140 \err{0.065}$ & $5.845 \err{0.101}$ \\
            & $\mathbb{T}$-$\mathcal{W}_2$ $\downarrow$   & $1.431 \err{0.085}$ & $1.256\err{0.132}$ & $1.163  \err{0.112}$ & $\second{1.143} \err{0.039}$ & $\best{1.120} \err{0.014}$ \\
            & NFE  & 1 & 16 & 16 & 16 & 16\\
            \cmidrule[0.8pt]{1-7}
        \end{tabular}
    \end{minipage}
\end{table*}

\paragraph{Instantiation.} We optimize \Cref{eq:scallop-loss} to train \texttt{SCALLOP}. In particular, we use the LSD loss (\Cref{eq:lsd-loss}) as the self distillation loss, and leave the other alternatives for future work.

\paragraph{Molecular Systems.} We conduct experiments on several alanine systems of varying lengths, including Alanine Dipeptide (ALA-2), Alanine Tripeptide (ALA-3), Alanine Tetrapeptide (ALA-4), and Alanine Hexapeptide (ALA-6). Data are obtained from simulating Molecular Dynamics in implicit solvent, with the AMBER-14 force field. Details of the simulation are provided in \Cref{app:sec:molecule-exp-details}.
Following \citet{tan2026scalableequilibriumsamplingsequential,rehman2025falconfewstepaccuratelikelihoods}, models are trained on the biased dataset. The goal is to correct the model using self-normalized importance sampling, enabling it to sample equilibrium conformations corresponding to the Boltzmann distribution.

\paragraph{Baselines.} We benchmark \texttt{SCALLOP} against the most promising discrete- and continuous- flow baselines identified by \citet{rehman2025falconfewstepaccuratelikelihoods}: SBG \citep{tan2026scalableequilibriumsamplingsequential} and FALCON \citep{rehman2025falconfewstepaccuratelikelihoods}. Specifically, to better focus on likelihood evaluation, we compare against the SBG with standard SNIS (denoted as SBG-IS). Although an SMC-based variant (SBG-SMC) exists and can be seamlessly integrated with both FALCON and \texttt{SCALLOP},
% (as detailed in~\Cref{app:sec:scallop-smc})
we exclude it from our current benchmark to ensure fair comparison with SNIS, leaving the exploration of \texttt{SCALLOP}-SMC for future work.

\paragraph{Metrics.} We evaluate \texttt{SCALLOP} and the baselines using standard SNIS. To measure sampling efficiency, we report the Effective Sample Size (ESS). To quantify the effectiveness of the reweighted samples, we compute the 2-Wasserstein distance for both the distributions of energy ($\ewas$) and torsional angle ($\twas$). Specifically, $\ewas$ is sensitive to extreme energy values—often caused by atoms being too close to one another—and thus reflects local structural refinements. Conversely, because torsional angles act as the primary collective variables capturing the most information in alanine systems, $\twas$ measures global structural quality. Further details are provided in Appendix~\ref{app:sec:molecule-exp-details}.

% \subsection{\texttt{SCALLOP} achieves comparable results against baselines}
% The performance of \texttt{SCALLOP} and the other baselines are reported in \Cref{tab:main_results}.

\subsection{SCALLOP as a Boltzmann Generator}
\paragraph{Comparable Performance against SOTA.}
The performance of \texttt{SCALLOP} and the other baselines are reported in \Cref{tab:main_results}. Across different systems, \texttt{SCALLOP} consistently outperforms F2D2 across all considered metrics, while achieving highly competitive results against the state-of-the-art FALCON and FALCON-A. Specifically, \texttt{SCALLOP} excels at capturing the global conformational space, achieving the best $\twas$ on the ALA-3 and ALA-6 systems. Furthermore, it consistently yields a high ESS, indicating substantial overlap with the target distribution and a strong resistance to mode collapse—a conclusion further corroborated by its strong $\twas$ performance.

While FALCON generally achieves a lower $\ewas$, \texttt{SCALLOP} drastically improves upon F2D2, nearly halving the $\ewas$ error in larger systems like ALA-4 and ALA-6. More importantly, higher $\ewas$ values do not undermine the overall validity of the drawn samples, as the Wasserstein-2 distance is notoriously sensitive to energy outliers caused by microscopic structural deviations. The low $\twas$ demonstrates that \texttt{SCALLOP} correctly captures the global torsional angle structure, confirming that the macroscopic metastable states and their statistical weights are accurately recovered. Because the underlying conformational macrostates are correct, the remaining local structural noise that inflates the $\ewas$ can be efficiently mitigated by applying a computationally cheap local relaxation to align the local structures.

\begin{wrapfigure}[10]{r}{0.5\textwidth}
    \centering
    \includegraphics[width=\linewidth]{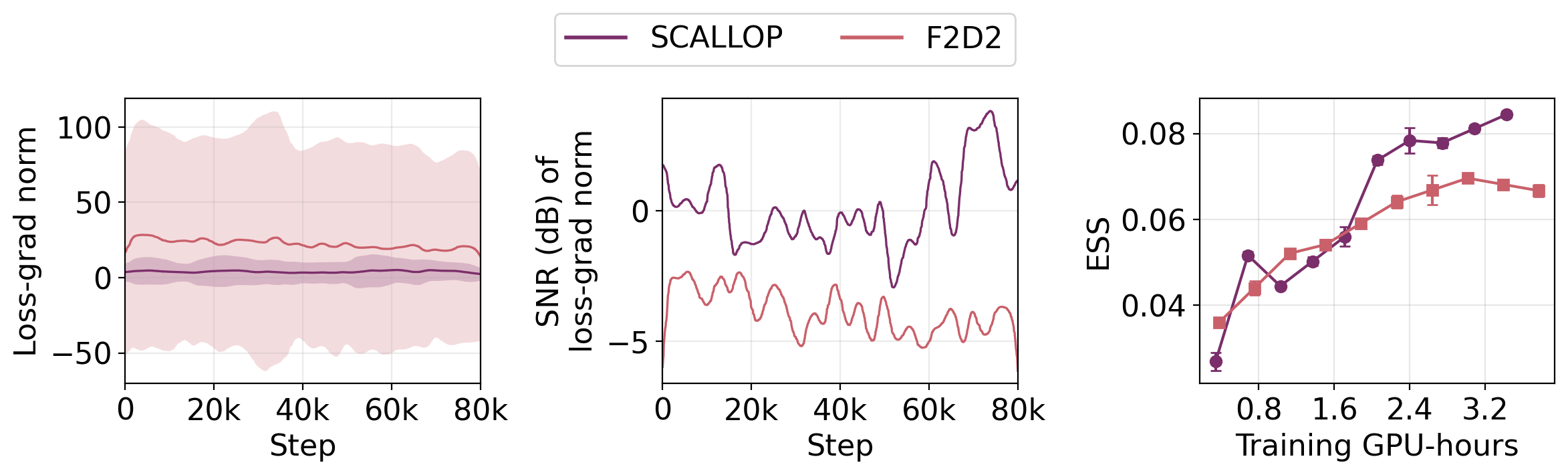}
    \caption{Training behavior comparison between \texttt{SCALLOP} and F2D2 on ALA-2.}
    \label{fig:training_analysis}
\end{wrapfigure}
% \begin{figure}[h]
%     \centering
%     \includegraphics[width=\linewidth]{figs/training_analysis_aldp/training_convergence_row.png}
%     \caption{Training behavior comparison between \texttt{SCALLOP} and F2D2 on ALA-2.}
%     \label{fig:training_analysis}
% \end{figure}
\paragraph{Improved Training Stability against F2D2.} To investigate the behavioral differences during training between \texttt{SCALLOP} and F2D2, we present an analysis on ALA-2. The left panel of \Cref{fig:training_analysis} illustrates the loss-gradient norms throughout training, with variance estimated via a sliding window. To compare the effective training signals, we compute the Signal-to-Noise Ratio (SNR) of the loss gradients, defined as $10\log_{10}(|\mathrm{mean}|/\mathrm{std})$, over the training process, depicted in the middle panel. The consistently higher log SNR exhibited by \texttt{SCALLOP} highlights its enhanced training stability. Furthermore, we report the Effective Sample Size (ESS) as a function of training GPU-hours, demonstrating that \texttt{SCALLOP} ultimately converges to a superior optimum.

\begin{wrapfigure}[16]{r}{0.5\textwidth}
    \centering
    \includegraphics[width=\linewidth]{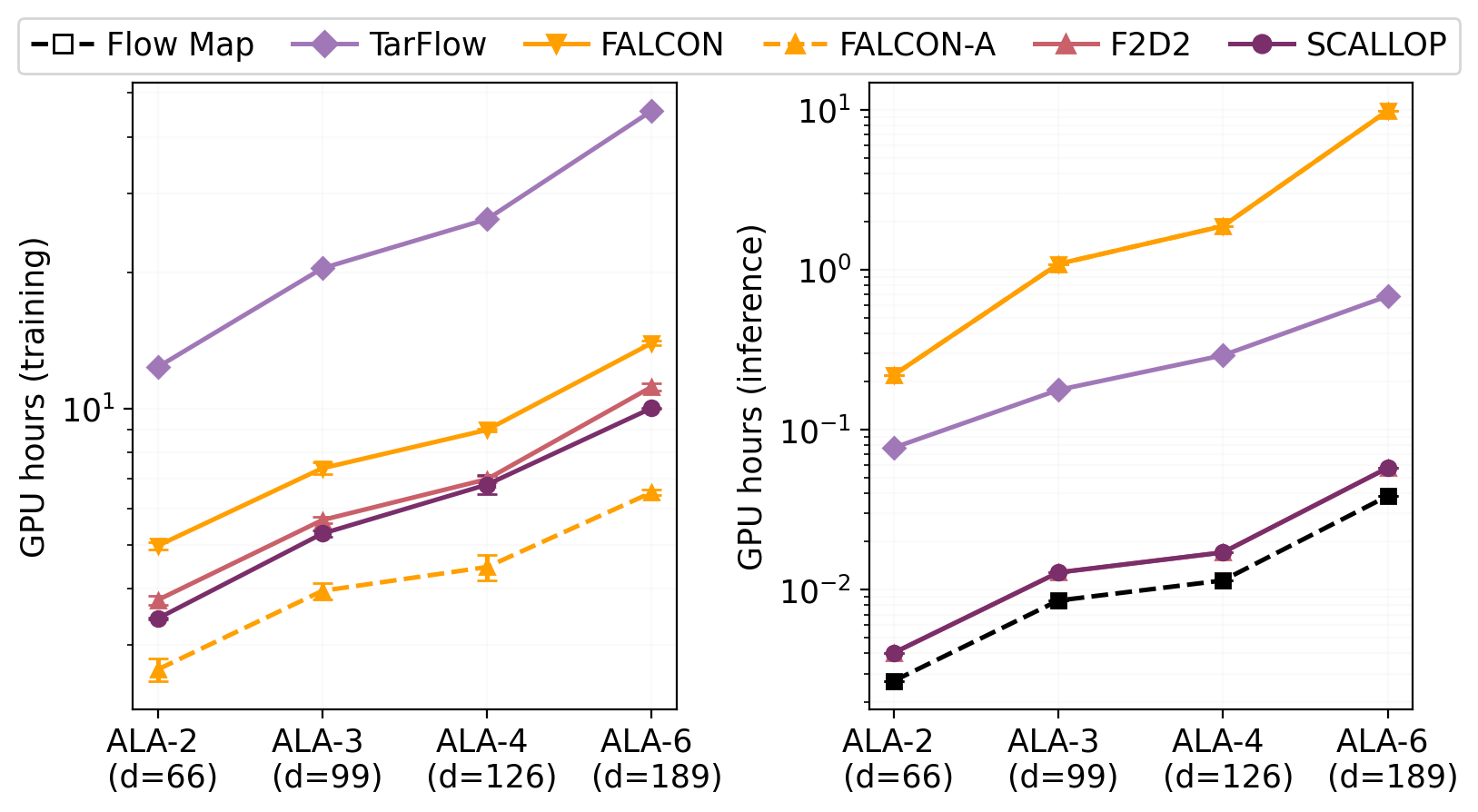}
    \caption{\textbf{(Left)} Training time comparison; \textbf{(Right)} Inference time comparison, for generating $2\times10^5$ samples with their densities. 
    }
    % \vspace{-20em}
    \label{fig:time-comparison}
\end{wrapfigure}
% \begin{figure}[h]
%     \centering
%     \includegraphics[width=\linewidth]{figs/joint_time_benchmark_plot.png}
%     \caption{\textbf{(Left)} Training time comparison; \textbf{(Right)} Inference time comparison, for generating $2\times10^5$ samples with their densities. 
%     }
%     % \vspace{-20em}
%     \label{fig:time-comparison}
% \end{figure}
\paragraph{Efficient Training and Likelihood Evaluation.} We evaluate training and inference times on a single \texttt{NVIDIA RTX PRO 6000} GPU, detailed in \Cref{fig:time-comparison}. The left panel reports training times evaluated at a batch size of $1024$ with the same amount of network updates. By avoiding the JVP used in Hutchinson's trace estimator, \texttt{SCALLOP} achieves roughly $10\%$ speedup, compared to F2D2.

The right panel reports the time required to generate $2\times10^5$ samples alongside their densities. Due to the memory-intensive full Jacobian computations, FALCON requires heavily restricted batch sizes $(1000,1000,500,100)$ for ALA-$(2,3,4,6)$, respectively, leading to inference costs that scale steeply with dimensionality. Conversely, \texttt{SCALLOP} and F2D2 estimate likelihoods via appended DiT blocks and possess nearly identical inference costs. This approach bypasses the Jacobian bottleneck, supports large batch sizes ($10^4$ for all systems), and adds modest overhead over a baseline Flow Map that lacks likelihood evaluation.

\begin{table}[t]
\centering
\caption{Evaluation on CelebA-64. The reference flow-matching model achieves BPD $1.75$ with 1024 Euler steps and FID $2.48$ with 200 Euler steps. $|\Delta|$ denotes the absolute deviation from reference BPD and Err denotes the mean absolute error against the reference per-sample BPD estimates. All models are trained with the LSD loss (\Cref{eq:lsd-loss}).}
\label{tab:bpd-fid}
\scriptsize
\setlength{\tabcolsep}{4pt}
\resizebox{\linewidth}{!}{
\begin{tabular}{lcccccccccccc}
\toprule
\multirow{2}{*}{Steps}
& \multicolumn{4}{c}{\footnotesize Flow map}
& \multicolumn{4}{c}{\footnotesize F2D2}
& \multicolumn{4}{c}{\footnotesize \textbf{\scallop}} \\
\cmidrule(lr){2-5} \cmidrule(lr){6-9} \cmidrule(lr){10-13}
& NLL
& $|\Delta|$ $\downarrow$
& Err $\downarrow$
& FID $\downarrow$
& NLL
& $|\Delta|$ $\downarrow$
& Err $\downarrow$
& FID $\downarrow$
& NLL
& $|\Delta|$ $\downarrow$
& Err $\downarrow$
& FID $\downarrow$ \\
\midrule
$1$
& $-69.83$ & $71.58$ & $71.64$ & $12.96$
& $1.64$ & $\best{0.11}$ & $0.53$ & $6.94$
& $1.60$ & $0.15$ & $\best{0.17}$ & $\best{6.91}$ \\
$2$
& $-32.72$ & $34.47$ & $34.46$ & $6.32$
& $1.73$ & $\best{0.02}$ & $0.33$ & $3.86$
& $1.69$ & $0.06$ & $\best{0.12}$ & $\best{3.81}$ \\
$4$
& $-14.89$ & $16.64$ & $16.59$ & $4.04$
& $1.75$ & $\best{0.00}$ & $0.24$ & $2.75$
& $1.75$ & $\best{0.00}$ & $\best{0.12}$ & $\best{2.70}$ \\
$8$
& $-6.78$ & $8.53$ & $8.45$ & $3.33$
& $1.64$ & $0.11$ & $0.18$ & $2.41$
& $1.68$ & $\best{0.07}$ & $\best{0.10}$ & $\best{2.37}$ \\
\bottomrule
\end{tabular}
}
\vspace{-2em}
\end{table}
\subsection{Effectiveness and Scalability beyond Molecules}

To test whether the benefits of \scalloptt generalize beyond molecular data, we evaluate it on unconditional image generation using CelebA-64 \citep{liu2015faceattributes}. Following the F2D2 evaluation protocol, we report FID for sample quality and BPD, i.e., negative log-likelihood measured in bits per dimension, for likelihood estimation. We further report the absolute deviation from the 1024-step teacher BPD and the mean absolute per-sample BPD error with respect to the same teacher reference. All metrics are evaluated using 1, 2, 4, and 8 Euler steps for both sampling and likelihood computation. As shown in~\Cref{tab:bpd-fid}, \scalloptt achieves BPD and FID comparable to F2D2 across all steps, while consistently reducing the per-sample BPD error. \Cref{fig:loss-grad-comparison} further shows that \scalloptt has more stable training dynamics, with lower variance in both training loss and gradient norm compared with F2D2 over 60k training steps. Moreover, its loss computation takes only $83.1\%$ of the wall-clock time of F2D2, demonstrating improved computational efficiency. Additional details are provided in~\Cref{app:sec:img-exp-details}.

% \begin{table}[t]
% \centering
% \small
% \caption{Evaluation on CelebA-64. The reference flow-matching model achieves BPD $1.75$ with 1024 Euler steps and FID $2.48$ with 200 Euler steps. $|\Delta|$ denotes the absolute deviation from reference BPD and Err denotes the mean absolute error against the reference per-sample BPD estimates.}
% \vspace{0.5em}
% \label{tab:bpd-fid}
% \setlength{\tabcolsep}{4pt}
% \begin{tabular}{lcccccccc}
% \toprule
% \multirow{2}{*}{Steps}
% & \multicolumn{4}{c}{F2D2}
% & \multicolumn{4}{c}{\textbf{\scallop}} \\
% \cmidrule(lr){2-5} \cmidrule(lr){6-9}
% & NLL
% & $|\Delta|$ $\downarrow$
% & Err $\downarrow$
% & FID $\downarrow$
% & NLL
% & $|\Delta|$ $\downarrow$
% & Err $\downarrow$
% & FID $\downarrow$ \\
% \midrule
% 8
% & 1.64 & 0.11 & 0.18 & 2.41
% & 1.68 & \best{0.07} & \best{0.10} & \best{2.37} \\
% 4
% & 1.75 & \best{0.00} & 0.24 & 2.75
% & 1.75 & \best{0.00} & \best{0.12} & \best{2.70} \\
% 2
% & 1.73 & \best{0.02} & 0.33 & 3.86
% & 1.69 & 0.06 & \best{0.12} & \best{3.81} \\
% 1
% & 1.64 & \best{0.11} & 0.53 & 6.94
% & 1.60 & 0.15 & \best{0.17} & \best{6.91} \\
% \bottomrule
% \end{tabular}
% \vspace{-2em}
% \end{table}

\section{Discussion}

In conclusion, we proposed \textsc{SCAlable LikeLihood distillation of flOw maPs} (\scallop), an efficient procedure for training few-step models to jointly generate samples and  their densities. This capability allows \scalloptt to serve as a Boltzmann generator for sampling from unnormalized densities, where it is competitive with the state-of-the-art while being significantly faster to evaluate at inference time. 

\paragraph{Limitations.} Despite its empirical success, \scalloptt has two main limitations. First, the conditional divergence matching objective relies on the assumption that the velocity is perfectly learnt,
% while we theoretically quantify this gap in \Cref{app:sec:error-bounds}, the derived bounds can become loose as the approximation error in the pretrained velocity increases.
while the theoretical gap and resulting bias
currently remain unquantified.
Second, while \scalloptt can generate samples and their densities in $1$ step, it practically uses $\sim 10$ steps for high-dimensional tasks. 

\newcommand{\RKOYack}{RKOY acknowledges the UK Engineering and Physical Sciences Research Council (EPSRC) grant EP/L016516/1 for the University of Cambridge Centre for Doctoral Training, the Cambridge Centre for Analysis.}
\newcommand{\HYack}{\space HY acknowledges the research environment provided by ELLIS Institute Finland and was supported by the Research Council of Finland Flagship programme: Finnish Center for Artificial Intelligence FCAI.}
\newcommand{\XAack}{}
\newcommand{\YHack}{}
\newcommand{\NMBack}{}
\newcommand{\PRack}{\space PR acknowledges the support of AFRL and DARPA via FA8750-23-2-1015, ONR via N00014-23-1-2368, and NSF via IIS-1955532.}
\newcommand{\JMHLack}{JMHL acknowledges support from EPSRC funding under grant EP/Y028805/1. JMHL also acknowledges support from a Turing AI Fellowship under grant EP/V023756/1.}
\newcommand{\MSack}{}
\newcommand{\BKMack}{}
\newcommand{\OCack}{}
\newcommand{\Computeack}{This work was performed using HPC resources from GENCI–IDRIS (AD011015234R1).
This project acknowledges the resources provided by the Cambridge Service for Data-Driven Discovery (CSD3) operated by the University of Cambridge Research Computing Service (\href{https://www.csd3.cam.ac.uk}{www.csd3.cam.ac.uk}), provided by Dell EMC and Intel using Tier-2 funding from the Engineering and Physical Sciences Research Council (capital grant EP/T022159/1), and DiRAC funding from the Science and Technology Facilities Council (\href{https://www.dirac.ac.uk}{www.dirac.ac.uk}).}

\section*{Acknowledgments} % Standard spelling is usually plural 's'
\RKOYack
\HYack
\XAack
\YHack
\NMBack
\PRack
\JMHLack
\MSack
\BKMack
\OCack

\Computeack

% Bibliography
\bibliographystyle{abbrvnat}
\bibliography{references}

\begin{thebibliography}{55}
\providecommand{\natexlab}[1]{#1}
\providecommand{\url}[1]{\texttt{#1}}
\expandafter\ifx\csname urlstyle\endcsname\relax
  \providecommand{\doi}[1]{doi: #1}\else
  \providecommand{\doi}{doi: \begingroup \urlstyle{rm}\Url}\fi

\bibitem[Aggarwal et~al.(2025)Aggarwal, Chen, Boffi, and Koes]{aggarwal2025boltzncelearninglikelihoodsboltzmann}
R.~Aggarwal, J.~Chen, N.~M. Boffi, and D.~R. Koes.
\newblock Boltznce: Learning likelihoods for boltzmann generation with stochastic interpolants and noise contrastive estimation, 2025.
\newblock URL \url{https://arxiv.org/abs/2507.00846}.

\bibitem[Ai et~al.(2026)Ai, He, Gu, Salakhutdinov, Kolter, Boffi, and Simchowitz]{ai2026f2d2}
X.~Ai, Y.~He, A.~Gu, R.~Salakhutdinov, J.~Kolter, N.~Boffi, and M.~Simchowitz.
\newblock Joint distillation for fast likelihood evaluation and sampling in flow-based models.
\newblock In \emph{International Conference on Learning Representations}, 2026.

\bibitem[Albergo et~al.(2025)Albergo, Boffi, and Vanden-Eijnden]{albergo2025interpolant}
M.~Albergo, N.~M. Boffi, and E.~Vanden-Eijnden.
\newblock Stochastic interpolants: A unifying framework for flows and diffusions.
\newblock \emph{Journal of Machine Learning Research}, 26\penalty0 (209):\penalty0 1--80, 2025.

\bibitem[Barducci et~al.(2008)Barducci, Bussi, and Parrinello]{barducci2008well}
A.~Barducci, G.~Bussi, and M.~Parrinello.
\newblock Well-tempered metadynamics: a smoothly converging and tunable free-energy method.
\newblock \emph{Physical review letters}, 100\penalty0 (2):\penalty0 020603, 2008.

\bibitem[Boffi et~al.(2025)Boffi, Albergo, and Vanden-Eijnden]{boffi2025buildconsistencymodellearning}
N.~M. Boffi, M.~S. Albergo, and E.~Vanden-Eijnden.
\newblock How to build a consistency model: Learning flow maps via self-distillation, 2025.
\newblock URL \url{https://arxiv.org/abs/2505.18825}.

\bibitem[Boffi et~al.(2026)Boffi, Albergo, and Vanden-Eijnden]{boffi2026flowmaps}
N.~M. Boffi, M.~S. Albergo, and E.~Vanden-Eijnden.
\newblock How to build a consistency model: Learning flow maps via self-distillation.
\newblock In \emph{Annual Conference on Neural Information Processing Systems}, 2026.

\bibitem[Chen et~al.(2019)Chen, Rubanova, Bettencourt, and Duvenaud]{chen2019neuralordinarydifferentialequations}
R.~T.~Q. Chen, Y.~Rubanova, J.~Bettencourt, and D.~Duvenaud.
\newblock Neural ordinary differential equations, 2019.
\newblock URL \url{https://arxiv.org/abs/1806.07366}.

\bibitem[Choi et~al.(2021)Choi, Meng, Song, and Ermon]{2021Density}
K.~Choi, C.~Meng, Y.~Song, and S.~Ermon.
\newblock Density ratio estimation via infinitesimal classification.
\newblock \emph{arXiv e-prints}, 2021.

\bibitem[Draxler et~al.(2024)Draxler, Sorrenson, Zimmermann, Rousselot, and Köthe]{draxler2024freeformflowsmakearchitecture}
F.~Draxler, P.~Sorrenson, L.~Zimmermann, A.~Rousselot, and U.~Köthe.
\newblock Free-form flows: Make any architecture a normalizing flow, 2024.
\newblock URL \url{https://arxiv.org/abs/2310.16624}.

\bibitem[Frans et~al.(2025)Frans, Hafner, Levine, and Abbeel]{frans2025stepdiffusionshortcutmodels}
K.~Frans, D.~Hafner, S.~Levine, and P.~Abbeel.
\newblock One step diffusion via shortcut models, 2025.
\newblock URL \url{https://arxiv.org/abs/2410.12557}.

\bibitem[Gao et~al.(2020)Gao, Nijkamp, Kingma, Xu, Dai, and Wu]{gao2020flowcontrastive}
R.~Gao, E.~Nijkamp, D.~P. Kingma, Z.~Xu, A.~M. Dai, and Y.~N. Wu.
\newblock Flow contrastive estimation of energy-based models.
\newblock In \emph{2020 {IEEE}/{CVF} conference on computer vision and pattern recognition ({CVPR})}, pages 7515--7525, 2020.
\newblock \doi{10.1109/CVPR42600.2020.00754}.

\bibitem[Geng et~al.(2025)Geng, Deng, Bai, Kolter, and He]{geng2025meanflowsonestepgenerative}
Z.~Geng, M.~Deng, X.~Bai, J.~Z. Kolter, and K.~He.
\newblock Mean flows for one-step generative modeling, 2025.
\newblock URL \url{https://arxiv.org/abs/2505.13447}.

\bibitem[Grathwohl et~al.(2018)Grathwohl, Chen, Bettencourt, Sutskever, and Duvenaud]{grathwohl2018ffjordfreeformcontinuousdynamics}
W.~Grathwohl, R.~T.~Q. Chen, J.~Bettencourt, I.~Sutskever, and D.~Duvenaud.
\newblock Ffjord: Free-form continuous dynamics for scalable reversible generative models, 2018.
\newblock URL \url{https://arxiv.org/abs/1810.01367}.

\bibitem[Grubm{\"u}ller(1995)]{grubmuller1995predicting}
H.~Grubm{\"u}ller.
\newblock Predicting slow structural transitions in macromolecular systems: Conformational flooding.
\newblock \emph{Physical Review E}, 52\penalty0 (3):\penalty0 2893, 1995.

\bibitem[Guth et~al.(2026)Guth, Kadkhodaie, and Simoncelli]{guth2026learningnormalizedimagedensities}
F.~Guth, Z.~Kadkhodaie, and E.~P. Simoncelli.
\newblock Learning normalized image densities via dual score matching, 2026.
\newblock URL \url{https://arxiv.org/abs/2506.05310}.

\bibitem[Gutmann and Hyvärinen(2012)]{gutmann2012nce}
M.~U. Gutmann and A.~Hyvärinen.
\newblock Noise-{Contrastive} {Estimation} of {Unnormalized} {Statistical} {Models}, with {Applications} to {Natural} {Image} {Statistics}.
\newblock \emph{Journal of Machine Learning Research}, 13\penalty0 (11):\penalty0 307--361, 2012.
\newblock URL \url{http://jmlr.org/papers/v13/gutmann12a.html}.

\bibitem[He et~al.(2026)He, Hernández-Lobato, Du, and Vargas]{he2026rneplugandplaydiffusioninferencetime}
J.~He, J.~M. Hernández-Lobato, Y.~Du, and F.~Vargas.
\newblock Rne: plug-and-play diffusion inference-time control and energy-based training, 2026.
\newblock URL \url{https://arxiv.org/abs/2506.05668}.

\bibitem[Ho et~al.(2020)Ho, Jain, and Abbeel]{ho2020denoisingdiffusionprobabilisticmodels}
J.~Ho, A.~Jain, and P.~Abbeel.
\newblock Denoising diffusion probabilistic models, 2020.
\newblock URL \url{https://arxiv.org/abs/2006.11239}.

\bibitem[Hutchinson(1990)]{Hutchinson01011990}
M.~Hutchinson.
\newblock A stochastic estimator of the trace of the influence matrix for laplacian smoothing splines.
\newblock \emph{Communications in Statistics - Simulation and Computation}, 19\penalty0 (2):\penalty0 433--450, 1990.
\newblock \doi{10.1080/03610919008812866}.
\newblock URL \url{https://doi.org/10.1080/03610919008812866}.

\bibitem[Hyv{{\"a}}rinen(2005)]{JMLR:v6:hyvarinen05a}
A.~Hyv{{\"a}}rinen.
\newblock Estimation of non-normalized statistical models by score matching.
\newblock \emph{Journal of Machine Learning Research}, 6\penalty0 (24):\penalty0 695--709, 2005.
\newblock URL \url{http://jmlr.org/papers/v6/hyvarinen05a.html}.

\bibitem[Kish(1957)]{kish1957confidence}
L.~Kish.
\newblock Confidence intervals for clustered samples.
\newblock \emph{American Sociological Review}, 22\penalty0 (2):\penalty0 154--165, 1957.

\bibitem[Klein et~al.(2023)Klein, Kr\"{a}mer, and Noe]{NEURIPS2023_bc827452}
L.~Klein, A.~Kr\"{a}mer, and F.~Noe.
\newblock Equivariant flow matching.
\newblock In A.~Oh, T.~Naumann, A.~Globerson, K.~Saenko, M.~Hardt, and S.~Levine, editors, \emph{Advances in Neural Information Processing Systems}, volume~36, pages 59886--59910. Curran Associates, Inc., 2023.
\newblock URL \url{https://proceedings.neurips.cc/paper_files/paper/2023/file/bc827452450356f9f558f4e4568d553b-Paper-Conference.pdf}.

\bibitem[K{\"o}hler et~al.(2020)K{\"o}hler, Klein, and Noe]{pmlr-v119-kohler20a}
J.~K{\"o}hler, L.~Klein, and F.~Noe.
\newblock Equivariant flows: Exact likelihood generative learning for symmetric densities.
\newblock In H.~D. III and A.~Singh, editors, \emph{Proceedings of the 37th International Conference on Machine Learning}, volume 119 of \emph{Proceedings of Machine Learning Research}, pages 5361--5370. PMLR, 13--18 Jul 2020.
\newblock URL \url{https://proceedings.mlr.press/v119/kohler20a.html}.

\bibitem[Laio and Parrinello(2002)]{laio2002escaping}
A.~Laio and M.~Parrinello.
\newblock Escaping free-energy minima.
\newblock \emph{Proceedings of the national academy of sciences}, 99\penalty0 (20):\penalty0 12562--12566, 2002.

\bibitem[Lipman et~al.(2023)Lipman, Chen, Ben-Hamu, Nickel, and Le]{lipman2023flow}
Y.~Lipman, R.~T.~Q. Chen, H.~Ben-Hamu, M.~Nickel, and M.~Le.
\newblock Flow matching for generative modeling.
\newblock In \emph{International Conference on Learning Representations}, 2023.

\bibitem[Liu(2022)]{liu2022rectifiedflowmarginalpreserving}
Q.~Liu.
\newblock Rectified flow: A marginal preserving approach to optimal transport, 2022.
\newblock URL \url{https://arxiv.org/abs/2209.14577}.

\bibitem[Liu et~al.(2025)Liu, Du, Deng, and Zhang]{liu2025hutch}
X.~Liu, H.~Du, W.~Deng, and R.~Zhang.
\newblock Optimal stochastic trace estimation in generative modeling.
\newblock In Y.~Li, S.~Mandt, S.~Agrawal, and E.~Khan, editors, \emph{Proceedings of the 28th international conference on artificial intelligence and statistics}, volume 258 of \emph{Proceedings of machine learning research}, pages 4600--4608. PMLR, May 2025.
\newblock URL \url{https://proceedings.mlr.press/v258/liu25k.html}.

\bibitem[Liu et~al.(2015)Liu, Luo, Wang, and Tang]{liu2015faceattributes}
Z.~Liu, P.~Luo, X.~Wang, and X.~Tang.
\newblock Deep learning face attributes in the wild.
\newblock In \emph{Proceedings of International Conference on Computer Vision (ICCV)}, December 2015.

\bibitem[Lu and Song(2025)]{lu2025simplifying}
C.~Lu and Y.~Song.
\newblock Simplifying, stabilizing and scaling continuous-time consistency models.
\newblock In \emph{The Thirteenth International Conference on Learning Representations}, 2025.
\newblock URL \url{https://openreview.net/forum?id=LyJi5ugyJx}.

\bibitem[Meng et~al.(2021)Meng, Song, Li, and Ermon]{mengEstimatingHighOrder2021}
C.~Meng, Y.~Song, W.~Li, and S.~Ermon.
\newblock Estimating {High} {Order} {Gradients} of the {Data} {Distribution} by {Denoising}.
\newblock In M.~Ranzato, A.~Beygelzimer, Y.~Dauphin, P.~S. Liang, and J.~W. Vaughan, editors, \emph{Advances in {Neural} {Information} {Processing} {Systems}}, volume~34, pages 25359--25369. Curran Associates, Inc., 2021.

\bibitem[Meyer et~al.()Meyer, Musco, Musco, and Woodruff]{meyer2021hutchinson}
R.~A. Meyer, C.~Musco, C.~Musco, and D.~P. Woodruff.
\newblock \emph{Hutch++: Optimal Stochastic Trace Estimation}, pages 142--155.
\newblock \doi{10.1137/1.9781611976496.16}.

\bibitem[Nam et~al.(2025)Nam, M{\'a}t{\'e}, Toshev, Kaniselvan, G{\'o}mez-Bombarelli, Chen, Wood, Liu, and Miller]{nam2025enhancing}
J.~Nam, B.~M{\'a}t{\'e}, A.~P. Toshev, M.~Kaniselvan, R.~G{\'o}mez-Bombarelli, R.~T. Chen, B.~Wood, G.-H. Liu, and B.~K. Miller.
\newblock Enhancing diffusion-based sampling with molecular collective variables.
\newblock \emph{arXiv preprint arXiv:2510.11923}, 2025.

\bibitem[Noé et~al.(2019)Noé, Olsson, Köhler, and Wu]{noé2019boltzmanngeneratorssampling}
F.~Noé, S.~Olsson, J.~Köhler, and H.~Wu.
\newblock Boltzmann generators: {Sampling} equilibrium states of many-body systems with deep learning.
\newblock \emph{Science}, 365\penalty0 (6457):\penalty0 eaaw1147, 2019.
\newblock \doi{10.1126/science.aaw1147}.
\newblock tex.eprint: https://www.science.org/doi/pdf/10.1126/science.aaw1147.

\bibitem[Ou et~al.(2025)Ou, Zhang, Zhang, Xiao, Li, and Barber]{ou2025improvingprobabilisticdiffusionmodels}
Z.~Ou, M.~Zhang, A.~Zhang, T.~Z. Xiao, Y.~Li, and D.~Barber.
\newblock Improving probabilistic diffusion models with optimal diagonal covariance matching, 2025.
\newblock URL \url{https://arxiv.org/abs/2406.10808}.

\bibitem[OuYang et~al.(2026{\natexlab{a}})OuYang, Grenioux, and Hernández-Lobato]{ouyang2026diffusiveclassificationlosslearning}
R.~OuYang, L.~Grenioux, and J.~M. Hernández-Lobato.
\newblock A diffusive classification loss for learning energy-based generative models, 2026{\natexlab{a}}.
\newblock URL \url{https://arxiv.org/abs/2601.21025}.

\bibitem[OuYang et~al.(2026{\natexlab{b}})OuYang, Qiang, and Hernández-Lobato]{ouyang2026bnemboltzmannsamplerbased}
R.~OuYang, B.~Qiang, and J.~M. Hernández-Lobato.
\newblock Bnem: A boltzmann sampler based on bootstrapped noised energy matching, 2026{\natexlab{b}}.
\newblock URL \url{https://arxiv.org/abs/2409.09787}.

\bibitem[Papamakarios et~al.(2021)Papamakarios, Nalisnick, Rezende, Mohamed, and Lakshminarayanan]{papamakarios2021normalizing}
G.~Papamakarios, E.~Nalisnick, D.~J. Rezende, S.~Mohamed, and B.~Lakshminarayanan.
\newblock Normalizing flows for probabilistic modeling and inference.
\newblock \emph{Journal of Machine Learning Research}, 22\penalty0 (57):\penalty0 1--64, 2021.

\bibitem[Peebles and Xie(2023)]{Peebles_2023_ICCV}
W.~Peebles and S.~Xie.
\newblock Scalable diffusion models with transformers.
\newblock In \emph{Proceedings of the IEEE/CVF International Conference on Computer Vision (ICCV)}, pages 4195--4205, October 2023.

\bibitem[Plainer et~al.(2026)Plainer, Wu, Klein, Günnemann, and Noé]{plainer2026consistentsamplingsimulationmolecular}
M.~Plainer, H.~Wu, L.~Klein, S.~Günnemann, and F.~Noé.
\newblock Consistent sampling and simulation: Molecular dynamics with energy-based diffusion models, 2026.
\newblock URL \url{https://arxiv.org/abs/2506.17139}.

\bibitem[Rehman et~al.(2026)Rehman, Akhound-Sadegh, Gazizov, Bengio, and Tong]{rehman2025falconfewstepaccuratelikelihoods}
D.~Rehman, T.~Akhound-Sadegh, A.~Gazizov, Y.~Bengio, and A.~Tong.
\newblock {FALCON}: {Few}-step accurate likelihoods for continuous flows.
\newblock In \emph{The fourteenth international conference on learning representations}, 2026.

\bibitem[Rezende and Mohamed(2015)]{rezende2015variational}
D.~Rezende and S.~Mohamed.
\newblock Variational inference with normalizing flows.
\newblock In \emph{International conference on machine learning}, pages 1530--1538. PMLR, 2015.

\bibitem[Rissanen et~al.(2025)Rissanen, OuYang, He, Chen, Heinonen, Solin, and Hernández-Lobato]{rissanen2025progressivetemperingsamplerdiffusion}
S.~Rissanen, R.~OuYang, J.~He, W.~Chen, M.~Heinonen, A.~Solin, and J.~M. Hernández-Lobato.
\newblock Progressive tempering sampler with diffusion, 2025.
\newblock URL \url{https://arxiv.org/abs/2506.05231}.

\bibitem[Schebek et~al.(2025)Schebek, Noé, and Rogal]{schebek2025scalableboltzmanngeneratorsequilibrium}
M.~Schebek, F.~Noé, and J.~Rogal.
\newblock Scalable boltzmann generators for equilibrium sampling of large-scale materials, 2025.
\newblock URL \url{https://arxiv.org/abs/2509.25486}.

\bibitem[Song and Dhariwal(2024)]{song2024improved}
Y.~Song and P.~Dhariwal.
\newblock Improved techniques for training consistency models.
\newblock In \emph{The Twelfth International Conference on Learning Representations}, 2024.
\newblock URL \url{https://openreview.net/forum?id=WNzy9bRDvG}.

\bibitem[Song et~al.(2021)Song, Sohl-Dickstein, Kingma, Kumar, Ermon, and Poole]{song2021scorebasedgenerativemodelingstochastic}
Y.~Song, J.~Sohl-Dickstein, D.~P. Kingma, A.~Kumar, S.~Ermon, and B.~Poole.
\newblock Score-based generative modeling through stochastic differential equations, 2021.
\newblock URL \url{https://arxiv.org/abs/2011.13456}.

\bibitem[Song et~al.(2023)Song, Dhariwal, Chen, and Sutskever]{song2023consistency}
Y.~Song, P.~Dhariwal, M.~Chen, and I.~Sutskever.
\newblock Consistency models.
\newblock In \emph{International Conference on Machine Learning}, pages 32211--32252. PMLR, 2023.

\bibitem[Tan et~al.(2026{\natexlab{a}})Tan, Bose, Lin, Klein, Bronstein, and Tong]{tan2026scalableequilibriumsamplingsequential}
C.~B. Tan, A.~J. Bose, C.~Lin, L.~Klein, M.~M. Bronstein, and A.~Tong.
\newblock Scalable equilibrium sampling with sequential boltzmann generators, 2026{\natexlab{a}}.
\newblock URL \url{https://arxiv.org/abs/2502.18462}.

\bibitem[Tan et~al.(2026{\natexlab{b}})Tan, Hassan, Klein, Syed, Beaini, Bronstein, Tong, and Neklyudov]{tan2026amortizedsamplingtransferablenormalizing}
C.~B. Tan, M.~Hassan, L.~Klein, S.~Syed, D.~Beaini, M.~M. Bronstein, A.~Tong, and K.~Neklyudov.
\newblock Amortized sampling with transferable normalizing flows, 2026{\natexlab{b}}.
\newblock URL \url{https://arxiv.org/abs/2508.18175}.

\bibitem[Tong et~al.(2024)Tong, Fatras, Malkin, Huguet, Zhang, Rector-Brooks, Wolf, and Bengio]{tong2024condflow}
A.~Tong, K.~Fatras, N.~Malkin, G.~Huguet, Y.~Zhang, J.~Rector-Brooks, G.~Wolf, and Y.~Bengio.
\newblock Improving and generalizing flow-based generative models with minibatch optimal transport.
\newblock \emph{Transactions on Machine Learning Research}, 2024.
\newblock ISSN 2835-8856.
\newblock Expert Certification.

\bibitem[Torrie and Valleau(1977)]{torrie1977nonphysical}
G.~M. Torrie and J.~P. Valleau.
\newblock Nonphysical sampling distributions in monte carlo free-energy estimation: Umbrella sampling.
\newblock \emph{Journal of computational physics}, 23\penalty0 (2):\penalty0 187--199, 1977.

\bibitem[Vincent(2011)]{vincent2011dsm}
P.~Vincent.
\newblock A {Connection} {Between} {Score} {Matching} and {Denoising} {Autoencoders}.
\newblock \emph{Neural Computation}, 23\penalty0 (7):\penalty0 1661--1674, 2011.
\newblock \doi{10.1162/NECO_a_00142}.

\bibitem[Xie et~al.(2026)Xie, Winkler, Sun, Lewis, Foster, Luna, Hempel, Gastegger, Chen, Zaporozhets, et~al.]{xie2026enhanced}
Y.~Xie, L.~Winkler, L.~Sun, S.~Lewis, A.~E. Foster, J.~J. Luna, T.~Hempel, M.~Gastegger, Y.~Chen, I.~Zaporozhets, et~al.
\newblock Enhanced diffusion sampling: Efficient rare event sampling and free energy calculation with diffusion models.
\newblock \emph{arXiv preprint arXiv:2602.16634}, 2026.

\bibitem[Yu et~al.(2025)Yu, Klami, Hyvarinen, Korba, and Chehab]{yu2025density}
H.~Yu, A.~Klami, A.~Hyvarinen, A.~Korba, and O.~Chehab.
\newblock Density ratio estimation with conditional probability paths.
\newblock In \emph{International Conference on Machine Learning}, 2025.

\bibitem[Yu et~al.(2026)Yu, OuYang, Kaushik, Klami, Gutmann, and Chehab]{yu2026learningenergybasedmodelsstochastic}
H.~Yu, R.~OuYang, P.~Kaushik, A.~Klami, M.~U. Gutmann, and O.~Chehab.
\newblock Learning energy-based models from stochastic interpolants using spatiotemporal differences, 2026.
\newblock URL \url{https://arxiv.org/abs/2605.26850}.

\bibitem[Zhai et~al.(2025)Zhai, Zhang, Nakkiran, Berthelot, Gu, Zheng, Chen, Bautista, Jaitly, and Susskind]{zhai2025normalizingflowscapablegenerative}
S.~Zhai, R.~Zhang, P.~Nakkiran, D.~Berthelot, J.~Gu, H.~Zheng, T.~Chen, M.~A. Bautista, N.~Jaitly, and J.~Susskind.
\newblock Normalizing flows are capable generative models, 2025.
\newblock URL \url{https://arxiv.org/abs/2412.06329}.

\end{thebibliography}

% Appendix
\clearpage
\beginappendix
\startcontents[app]
\printcontents[app]{l}{1}{\setcounter{tocdepth}{2}}
% \appendix, \section*{Appendix}, and appendix TOC (\printcontents[app]) are in main_arxiv.tex
\paragraph*{Notation and Terminology.} 
Throughout this appendix, we operate within the framework of generative modeling via continuous-time paths. Here, $x_0$ and $x_1$ denote the endpoint data variables, $x_t$ represents the intermediate state at time $t \in [0,1]$, and $z$ is the latent noise variable driving the process. Unless otherwise specified, conditional expectations of the form $\mathbb{E}[\cdot|x_t]$ are taken over the joint posterior distribution $p(x_0,x_1|x_t)$ of the endpoints given the intermediate state. Because the noise $z$ is uniquely determined once $(x_0,x_1,x_t)$ are known, expectations involving the noise (such as $\mathbb{E}[z|x_t]$) are implicitly evaluated under this same posterior. 

Furthermore, we adopt a slight abuse of terminology for convenience: we frequently refer to analytical formulas containing exact population expectations as ``estimators.'' Strictly speaking, these mathematical expressions represent theoretical population limits. In practice, it is understood that these exact expectations are substituted with their corresponding unbiased finite-sample estimators via empirical Monte Carlo averages over mini-batches during training.

\section{Conditional Estimators for Flow-based Model's Likelihood: Proofs and Connections}
In this section, we provide conditional estimators for estimating the model density, i.e. likelihoods, of samples generated by the following ODE in $\mathbb{R}^d$
\begin{align}
    \dd x_t=v_t(x_t)\dd t, \label{eq:generation-ode}
\end{align}
which bridges the base distribution $p_0$ and target distribution $p_1$, and induces the laws $(p_t^v)_t$. 
While many solutions exist, we focus on the one that induces the same laws, $(p_t)_t$ as the following \emph{Stochastic Interpolant}
\begin{align}
    x_t=\alpha_t x_1+\beta_t x_0+\gamma_t z,\label{eq:stochastic-interpolants-def}
\end{align}
where $(x_0,x_1)$ are sampled from any coupling distribution $\pi$ that marginalize on $p_0,p_1$ respectively, $\alpha,\beta,\gamma$ are non-negative functions satisfying certain conditions, and $z$ follows a standard Gaussian distribution $\mathcal{N}(0,I)$.

\begin{remark}
    By choosing $p_0=\delta_0$, i.e. $X_0\equiv0$, the above formulation recover the standard setting in \emph{Diffusion Models} (DM) and \emph{Flow Matching} (FM).
\end{remark}
\begin{remark}
    While the velocity field generating a given marginal probability path $p_t$ is not unique, the canonical {marginal velocity field} induced by the stochastic interpolant in \Cref{eq:stochastic-interpolants-def}, $(v_t^*)_{t\in[0,1]}$, is uniquely defined via conditional expectation \citep{albergo2025interpolant}:
    \begin{align}
        v_t^*(x_t) &= \mathbb{E}[\dot\alpha_t x_1 + \dot\beta_t x_0 + \dot\gamma_t z \mid x_t] \label{eq:velocity-def-in-stochastic-interpolants} \\
        &= \mathbb{E}[\dot\alpha_t x_1 + \dot\beta_t x_0 \mid x_t] + \dot\gamma_t\mathbb{E}[z \mid x_t]. \nonumber
    \end{align}
    Defining $f_t(x_t) := \mathbb{E}[\dot\alpha_t x_1 + \dot\beta_t x_0 \mid x_t]$ and applying Tweedie's formula to the Gaussian noise term, we can express the marginal velocity as:
    \begin{align}
        v_t^*(x_t) = f_t(x_t) - \dot\gamma_t\gamma_t\nabla\log p_t(x_t). \label{eq:velocity-score-relation-in-stochastic-interpolants}
    \end{align}
    Observe that when the learned velocity field $v_t$ in \Cref{eq:generation-ode} is imperfect (i.e., $v_t \neq v_t^*$), the induced probability path diverges from the true marginal path, meaning $p_t^v \neq p_t$. Consequently, $\nabla\log p_t^v \neq \nabla\log p_t$, which introduces compounding mismatches in the subsequent estimators.
\end{remark}
In the following content, we aim to estimate the likelihood induced by \Cref{eq:generation-ode}, which is \begin{align}
    \log p_t^v(x_t)&=\log p_0(x_0)+\int_0^t{\dd}_t\log p^v_u(x_u)\dd u, \quad\text{where }\\
    {\dd}_t\log p^v_u(x_u)&=-\nabla\cdot v_u(x_u)=-\mathrm{tr}\bigl(\nabla v_u(x_u)\bigr),
\end{align}
and $(x_u)_{u\in[0,t]}$ follow the dynamic \ref{eq:generation-ode}. For clarity, we present the estimators and connections based on the following assumption:
\begin{assumption}\label{assump:perfect-v}
    We assume $v_t=v_t^*$, which implies $p_t^v=p_t^{v^*}$ and $\nabla\log p_t^v=\nabla\log p_t^{v^*}$.
\end{assumption}
Under \Cref{assump:perfect-v}, we denote $p_t$ as the marginals for clarity.
% , while the induced bias would be clarified and discussed in \Cref{app:sec:error-bounds}.

\subsection{Desiderata for a Consistent Objective}

We would like to obtain an estimator for $\dd_{t}\log p_{t}(x_{t})$ using the marginalization trick \citep{lipman2023flow}, so as to avoid forming the objective using gradients. We rely on the following identity \citep{tong2024condflow,yu2025density}
\begin{theorem}
Consider two functions, $f_{t}(x_{t}|\xi)$ and $g_{t}(x_{t})$
\footnote{By a slight abuse of notation, we write $f_t(x_t|\xi)$ instead of $f_t(x_t,\xi)$ for convenience.}
, satisfying the property that $g_{t}(x_{t}) = \E_{p_{t}(\xi | x_{t})}\left[ f_{t}(x_{t}|\xi) \right]$. The two loss functions
\begin{align}
\mathcal{L}_{f}(\theta) &= \E_{p(t,\xi,x_{t})}\left[ \lambda(t) \norm{ f_{t}(x_{t}|\xi) - s_{\theta}(\xi,t) }^{2} \right],\\
\mathcal{L}_{g}(\theta) &= \E_{p(t,x_{t})}\left[ \lambda(t) \norm{g_{t}(x_{t}) - s_{\theta}(x_{t}, t) }^{2} \right],
\end{align}
have the same gradient with respect to $\theta$.

\end{theorem}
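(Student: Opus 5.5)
The argument is the standard marginalization trick from Flow Matching. I read the regressor in $\mathcal{L}_f$ as $s_\theta(x_t,t)$, a function of $(x_t,t)$ only; the written argument $\xi$ is a typo, since otherwise the two losses are not comparable. The plan is to expand both squared norms and show that their $\theta$-dependent parts agree. The two losses then differ by an additive constant independent of $\theta$, and so they have the same gradient.

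\emph{Expansion.} I would expand
\begin{align}
\norm{f_t(x_t|\xi)-s_\theta(x_t,t)}^2 = \norm{f_t(x_t|\xi)}^2 - 2\langle f_t(x_t|\xi), s_\theta(x_t,t)\rangle + \norm{s_\theta(x_t,t)}^2,
\end{align}
and expand the integrand of $\mathcal{L}_g$ in the same way. The first terms, $\E[\lambda(t)\norm{f_t(x_t|\xi)}^2]$ and $\E[\lambda(t)\norm{g_t(x_t)}^2]$, do not depend on $\theta$. The third term depends only on $(t,x_t)$, so after marginalizing $\xi$ it is the same under $p(t,\xi,x_t)$ and under $p(t,x_t)$.

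\emph{Cross term.} I would factor $p(t,\xi,x_t)=p(t,x_t)\,p_t(\xi|x_t)$ and apply the tower property. Because $s_\theta(x_t,t)$ and $\lambda(t)$ are measurable in $(t,x_t)$, they come outside the inner conditional expectation:
\begin{align}
\E_{p(t,\xi,x_t)}\bigl[\lambda(t)\langle f_t(x_t|\xi), s_\theta(x_t,t)\rangle\bigr] = \E_{p(t,x_t)}\Bigl[\lambda(t)\bigl\langle \E_{p_t(\xi|x_t)}[f_t(x_t|\xi)], s_\theta(x_t,t)\bigr\rangle\Bigr].
\end{align}
By the hypothesis $g_t(x_t)=\E_{p_t(\xi|x_t)}[f_t(x_t|\xi)]$, the right-hand side equals the cross term of $\mathcal{L}_g$. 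Hence
\begin{align}
\mathcal{L}_f(\theta)-\mathcal{L}_g(\theta)=\E\bigl[\lambda(t)\norm{f_t(x_t|\xi)}^2\bigr]-\E\bigl[\lambda(t)\norm{g_t(x_t)}^2\bigr],
\end{align}
which is constant in $\theta$. Differentiating gives $\nabla_\theta\mathcal{L}_f=\nabla_\theta\mathcal{L}_g$.

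\emph{Main obstacle.} The only delicate point is integrability, which justifies splitting the expectations and differentiating under the integral sign. I would assume $\E[\lambda(t)\norm{f_t(x_t|\xi)}^2]<\infty$; by Jensen's inequality this also controls the $g$ term. I would further assume that $s_\theta$ and $\nabla_\theta s_\theta$ are square-integrable, locally uniformly in $\theta$, and then invoke dominated convergence. This matters in the paper's applications: the CDM target contains the factor $1/(1-t)$, which blows up as $t\to1$, so the conclusion should be stated on $[0,1-\mathrm{eps}]$ or with a weighting $\lambda$ that tames the singularity, consistent with the training setup. With these conditions in place, the identity gives both Proposition~\ref{prop:conditional-divergence-matching} and Proposition~\ref{prop:vectorized-divergence-matching}, taking $f_t(x_t|\xi)$ to be the conditional divergence target and $g_t=-\nabla\cdot v_t$ (respectively $-\nabla\odot v_t$).
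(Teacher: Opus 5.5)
Your proposal is correct and follows essentially the same route as the paper's proof. Both expand the squares, drop the $\theta$-independent $\|f_t\|^2$ and $\|g_t\|^2$ terms, and observe that the $\|s_\theta\|^2$ terms coincide; the paper then matches the cross terms by writing the posterior via Bayes' rule and swapping integrals, which is your tower-property step, and its proof likewise uses $s_\theta(x_t,t)$ in place of the typo $s_\theta(\xi,t)$, with your integrability conditions simply making explicit what the paper leaves implicit.
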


The proof can be found in \citet{yu2025density}. We provide a proof here for completeness.

\begin{proof}

Note that
\begin{align}
g_{t}(x_{t}) = \E \left[  f_{t}(x_{t}|\xi) \big| x_{t} = x_{t} \right] = \int \frac{p_{t}(x|\xi) p(\xi)}{p_{t}(x)} f(x,t|\xi) \dd \xi.
\end{align}

We first express the gradients of the two loss functions,

\begin{align}
\nabla_{\theta}\mathcal{L}_{f}(\theta) &= \nabla_{\theta} \E_{p(t),p(\xi),p_{t}(x_{t}|\xi)}\left[ \lambda(t) \norm{ f_{t}(x_{t}|\xi) - s_{\theta}(x,t) }^{2} \right]\\
&= \nabla_{\theta} \E_{p(t),p(\xi),p_{t}(x_{t}|\xi)}\left[ \lambda(t) \left( \norm{ f_{t}(x_{t}|\xi) }^{2} - 2 \langle f_{t}(x_{t}|\xi), s_{\theta}(x_{t},t) \rangle + \norm{s_{\theta}(x_{t},t)}^{2} \right) \right]\\
&= \nabla_{\theta} \E_{p(t),p(\xi),p_{t}(x_{t}|\xi)}\left[ \lambda(t) \left( \norm{s_{\theta}(x_{t},t)}^{2} - 2 \langle f_{t}(x_{t}|\xi), s_{\theta}(x_{t},t) \rangle \right) \right],
\end{align}

\begin{align}
\nabla_{\theta}\mathcal{L}_{g}(\theta) &= \nabla_{\theta} \E_{p(t),p_{t}(x_{t})}\left[ \lambda(t) \norm{ g_{t}(x_{t}) - s_{\theta}(x_{t},t) }^{2} \right]\\
&= \nabla_{\theta} \E_{p(t),p_{t}(x_{t})}\left[ \lambda(t) \left( \norm{ g_{t}(x_{t}) }^{2} - 2 \langle g_{t}(x_{t}), s_{\theta}(x_{t},t) \rangle + \norm{s_{\theta}(x_{t},t)}^{2} \right) \right]\\
&= \nabla_{\theta} \E_{p(t),p_{t}(x_{t})}\left[ \lambda(t) \left( \norm{s_{\theta}(x_{t},t)}^{2} - 2 \langle g_{t}(x_{t}), s_{\theta}(x_{t},t) \rangle \right) \right].
\end{align}

Observe that these two expressions are the same when 
\begin{align}
    \E_{p_{t}(x_{t})} \norm{s_{\theta}(x_{t},t)}^{2} &= \E_{p(\xi) p_{t}(x_{t}|\xi)} \norm{s_{\theta}(x_{t},t)^{2}}\\
    \E_{p_{t}(x_{t})} \langle g_{t}(x_{t}), s_{\theta}(x_{t},t) \rangle &= \E_{p(\xi),p_{t}(x_{t}|\xi)} \langle f_{t}(x_{t}|\xi) , s_{\theta}(x_{t},t) \rangle.
\end{align}
The first equality trivially holds. For the second, observe that
\begin{align}
\E_{p_{t}(x_{t})} \langle g_{t}(x_{t}), s_{\theta}(x_{t},t) \rangle &= \E_{p_{t}(x_{t})} \left\langle \int \frac{p_{t}(x_{t}|\xi) p(\xi)}{p_{t}(x_{t})} f_{t}(x_{t}|\xi) \dd \xi, s_{\theta}(x,t) \right\rangle \\
&= \int p_{t}(x_{t}) \left\langle \int \frac{p_{t}(x_{t}|\xi) p(\xi)}{p_{t}(x_{t})} f_{t}(x_{t}|\xi) \dd \xi, s_{\theta}(x_{t},t) \right\rangle \dd x \\
&= \int \left\langle \int p_{t}(x_{t}|\xi) p(\xi) f_{t}(x_{t}|\xi) \dd \xi, s_{\theta}(x,t) \right\rangle \dd x \\
&= \int \int \langle f_{t}(x_{t}|\xi),s_{\theta}(x_{t},t) \rangle p_{t}(x|\xi) p(\xi) \dd \xi \dd x\\
&= \E_{p(\xi),p_{t}(x_{t}|\xi)} \langle f_{t}(x_{t}|\xi) , s_{\theta}(x_{t},t) \rangle .
\end{align}

As such, $\nabla_{\theta}\mathcal{L}_{f}(\theta) = \nabla_{\theta}\mathcal{L}_{g}(\theta)$.

\end{proof}

% With the above theoretical result, all we need to obtain a consistent learning objective for $\dd_{t}\log p_{t}(x)$ is to construct an estimate whose posterior expectation can lead to it. For instance, it can be an estimate whose posterior expectation is exactly equal to $\dd_{t}\log p_{t}(x)$, or an estimate whose posterior expectation when summed together is equal to it.
Leveraging this theorem, establishing a consistent learning objective for $\dd_{t}\log p_{t}(x)$ simply requires constructing a tractable conditional estimator. Specifically, we need a function $f_t(x_t|\xi)$ whose conditional expectation $\mathbb{E}_{p(\xi|x_t)}[f_t(x_t|\xi)]$ recovers the target marginal quantity. This can be achieved either by designing an estimator whose expectation strictly matches $\dd_{t}\log p_{t}(x)$, or by formulating a combination of component estimators whose expectations sum to the desired target.

\subsection{Derivation through Total Derivative}\label{app:sec:cond-total-derivative}
For clarity, we define the following notation throughout the paper:
\begin{align}
        \partial_t\tilde v_t(x_t|x_0,x_1):&= \dot\alpha_tx_1+\dot\beta_tx_0+\dot\gamma_tz,\quad{\text{where }}z=\frac{x_t-(\alpha_tx_1+\beta_tx_0)}{\gamma_t},
\end{align}
and $v_t(x_t)=\mathbb{E}[\tilde v_t(x_t|x_0,x_1)|x_t]$.

To estimate $\dd_t\log p_t(x_t)$, one could expand it through total derivative
\begin{align}
    \dd_t\log p_t(x_t)&=\partial_t\log p_t(x_t)+\nabla\log p_t(x_t)\cdot\dd_tx_t\\
    &= \partial_t\log p_t(x_t)+\nabla\log p_t(x_t)\cdot v_t(x_t),
\end{align}
where both $\nabla\log p_t(x_t)$ and $\partial_t\log p_t(x_t)$ can be expressed as conditional expectations \citep{song2021scorebasedgenerativemodelingstochastic,yu2025density}, respectively, as follows
% \footnote{Unless otherwise specified, conditional expectations of the form $\mathbb{E}[\cdot|x_t]$ are taken over the joint posterior distribution $p(x_0,x_1|x_t)$. Note that because the noise variable $z$ is fully determined given $(x_0,x_1,x_t)$, expectations such as $\mathbb{E}[z|x_t]$ are evaluated under this same posterior.}
:
\begin{align}
    \nabla\log p_t(x_t) &= \mathbb{E}\bigl[\nabla_{x_t}\log p(x_t|x_0,x_1)|x_t\bigr]\\
    &= -\frac{1}{\gamma_t}\mathbb{E}\bigl[z|x_t=x_t\bigr],\\
    \partial_t\log p_t(x_t) &= \mathbb{E}\bigl[\partial_t\log p(x_t|x_0,x_1)|x_t\bigr]\\
    &= \frac{1}{\gamma_t}\mathbb{E}\biggl[-d{\dot{\gamma_t}} + z\cdot\left( \dot{\alpha}_t x_{1} + \dot{\beta}_t x_{0}+{\dot{\gamma_t}} z \right)\Big|x_t\biggr]\\
    &= \frac{1}{\gamma_t}\mathbb{E}\biggl[-d{\dot{\gamma_t}} + z\cdot\tilde v_t(x_t|x_0,x_1)\Big|x_t\biggr].
\end{align}
Therefore, the total derivative can be expressed as a conditional expectation
\begin{align}
    \dd_t\log p_t(x_t)&= -\mathbb{E}\biggl[\frac{z}{\gamma_t}\cdot\Bigl(v_t(x_t)-\tilde v_t(x_t|x_0,x_1)\Bigr)+\frac{\dot\gamma_t}{\gamma_t}d\Big|x_t\biggr],\label{eq:cond-total-derivative-estimator}
\end{align}
which can be learned efficiently without any high-order derivatives once having access to $v_t$.

\subsection{Derivation through the Continuity Equation}\label{app:sec:cond-total-derivative-through-continuity-equation}
Alternatively, \Cref{eq:chain-rule-for-logp-main} can be derived from the (log) continuity equation
    % \footnote{Note that, the continuity equation holds for any $x_t\in\mathbb{R}^d$, while both the chain rule and instantaneous change-of-variable only hold for $x_t$ generated by the ODE $\dd x_t/\dd t=v_t(x_t)$.} 
    {
    \fontsize{9.5pt}{11.4pt}\selectfont
    \begin{align}
        \partial_t p_t(x_t)+\nabla\cdot(v_t(x_t)p_t(x_t))=0
        \Leftrightarrow
        \partial_{t}\log p_{t}(x_{t}) = -\nabla\log p_{t}(x_{t}) \cdot v_{t}(x_{t}) - \nabla \cdot v_{t}(x_{t}).
    \end{align}}
    \begin{minipage}[c]{0.65\textwidth} % Adjusted width
        By the instantaneous change-of-variable \citep{chen2019neuralordinarydifferentialequations}, $\dd_t\log p_t(x_t)=-\nabla\cdot v_t(x_t)$, we recover \Cref{eq:chain-rule-for-logp-main}. That also means, chain rule + continuity equation derives the instantaneous change-of-variable, providing an alternatively proof orthogonal to \citet{chen2019neuralordinarydifferentialequations}. These relations are visualized in the diagram in the right hand side. 
        % \tony{We could leave this to appendix}
    \end{minipage}%
    \hspace{0em}
    \begin{minipage}[c]{0.3\textwidth} % Made this much smaller to test
        \begin{center}
        % --- RESIZEBOX ADDED HERE ---
        \resizebox{\linewidth}{!}{
            \begin{tikzpicture}
                % Define the three nodes
                \node (top) at (0, 1.5) {$\nabla\log p_t(x_t)\cdot v_t(x_t) + \partial_t\log p_t(x_t)$};
                \node (left) at (-1.5, 0) {$\dd_t \log p_t(x_t)$};
                \node (right) at (1.5, 0) {$-\nabla\cdot v_t(x_t)$};
                
                % Combined and Aligned Legend Node
                \node (legend) at (0, -1.25) {
                    \begin{tabular}{rl}
                        $\color{finn}=\joinrel=$& only holds for $\dd x_t=v_t(x_t)\dd t$ \\
                        $\color{orange}=\joinrel=$& holds for any $x_t\in\mathbb{R}^d$
                    \end{tabular}
                };
                
                % Connect Bottom-Left to Top (Chain Rule)
                \path (left) -- (top) 
                    node[midway, sloped] {\color{finn}$=\joinrel=\joinrel=$} 
                    node[midway, above left, yshift=-4pt, xshift=-4pt, font=\footnotesize] {Chain Rule};
                    
                % Connect Bottom-Left to Bottom-Right (Inst. Change of Variable)
                \path (left) -- (right) 
                    node[midway] {\color{finn}$=\joinrel=\joinrel=$} 
                    node[midway, below=3pt, font=\footnotesize] {Inst. Change of Var.};
                    
                % Connect Bottom-Right to Top (Continuity Equation)
                \path (right) -- (top) 
                    node[midway, sloped] {\color{orange}$\mathbf{=\joinrel=\joinrel=}$} 
                    node[midway, above right, yshift=-4pt, xshift=4pt, font=\footnotesize] {Continuity Eq.};
                    
            \end{tikzpicture}
        } 
        % --- END RESIZEBOX ---
        % \captionof{figure}{Relationship diagram.}
        \end{center}
    \end{minipage}

\subsection{Derivation through Manipulating the Trace Operator}\label{app:sec:cond-trace}
On the other hand, we could then expand the Jacobian of the velocity, $J_v(x_t)=\nabla v_t(x_t)$, as follows
\begin{align}
    \nabla v_t(x_t)&=\nabla_{x_t}\int p(x_0,x_1|x_t)\tilde v_t(x_t|x_0,x_1)dx_0dx_1\notag\\
    &\overset{(i)}{=}\int \biggl(\tilde v_t(x_t|x_0,x_1)\nabla_{x_t}p(x_0,x_1|x_t)^\top+p(x_0,x_1|x_t)\nabla_{x_t}\tilde v_t(x_t|x_0,x_1)\biggr)dx_0dx_1\notag\\
    &\overset{(ii)}{=}\int p(x_0,x_1|x_t)\biggl(\tilde v_t(x_t|x_0,x_1)\nabla_{x_t}\log p(x_0,x_1|x_t)^\top+\nabla_{x_t}\tilde v_t(x_t|x_0,x_1)\biggr)dx_0dx_1\notag\\
    &\overset{(iii)}{=}\int p(x_0,x_1|x_t)\biggl(\tilde v_t(x_t|x_0,x_1)\nabla_{x_t}\log p(x_t|x_0,x_1)^\top\notag\\
    &\qquad\qquad\qquad\qquad-\tilde v_t(x_t|x_0,x_1)\nabla_{x_t}\log p_t(x_t)^\top+\nabla_{x_t}\tilde v_t(x_t|x_0,x_1)\biggr)dx_0dx_1,
\end{align}
where $(i)$ is by $\nabla (pf)=f(\nabla p)^\top+p\nabla f$ for any vector-valued function $f$; $(ii)$ is according to $\nabla p=p\nabla\log p$; and $(iii)$ is based on $\nabla_{x_t}\log p(x_0,x_1|x_t)=\nabla_{x_t}\log p(x_t|x_0,x_1)-\nabla\log p_t(x_t)$. By plugging
\begin{align}
    \nabla_{x_t}\log p(x_t|x_0,x_1)&=\frac{\alpha_tx_1+\beta_tx_0-x_t}{\gamma_t^2}=-\frac{z}{\gamma_t},\\
    \nabla_{x_t}\tilde v_t(x_t|x_0,x_1)&=\nabla_{x_t}\biggl(\dot\alpha_tx_1+\dot\beta_tx_0+\dot\gamma_t\frac{x_t-\alpha_tx_1-\beta_tx_0}{\gamma_t}\biggr)=\frac{\dot\gamma_t}{\gamma_t}I,
\end{align}
one can simply write the Jacobian as a conditional expectation:
\begin{align}
    \nabla v_t(x_t)=\mathbb{E}\bigl[\tilde J_v(x_t|x_0,x_1)|x_t\bigr],
\end{align}
where
\begin{align}
    \tilde J_v(x_t|x_0,x_1) &= -\bigl(\dot\alpha_tx_1+\dot\beta_tx_0+\dot\gamma_tz\bigr)\biggl(\nabla\log p_t(x_t)+\frac{z}{\gamma_t}\biggr)^\top+\frac{\dot\gamma_t}{\gamma_t}I\\
    &= -\tilde v_t(x_t|x_0,x_1)\biggl(\nabla\log p_t(x_t)+\frac{z}{\gamma_t}\biggr)^\top+\frac{\dot\gamma_t}{\gamma_t}I.
\end{align}
The divergence is then obtained by applying a trace operator:
\begin{align}
    \nabla\cdot v_t(x_t)=\mathrm{tr}\biggl(\mathbb{E}\bigl[\tilde J_v(x_t|x_0,x_1)|x_t\bigr]\biggr)=\mathbb{E}\biggl[\mathrm{tr}\Bigl(\tilde J_v(x_t|x_0,x_1)\Bigr)|x_t\biggr],
\end{align}
where
\begin{align}
    \mathrm{tr}\Bigl(\tilde J_v(x_t|x_0,x_1)\Bigr) &= -\bigl(\dot\alpha_tx_1+\dot\beta_tx_0+\dot\gamma_tz\bigr)\cdot\biggl(\nabla\log p_t(x_t)+\frac{z}{\gamma_t}\biggr)+\frac{\dot\gamma_t}{\gamma_t}d\\
    &= -\tilde v_t(x_t|x_0,x_1)\cdot\biggl(\nabla\log p_t(x_t)+\frac{z}{\gamma_t}\biggr)+\frac{\dot\gamma_t}{\gamma_t}d.
\end{align}

\paragraph{Example 1.} When $p_0=\delta_0$, i.e. $X_0\equiv0$, the formulation reduce to a standard Diffusion model or Flow matching. By further choosing the optimal path, i.e. $x_t=tx_1+(1-t)z$, and notice that
\begin{align}
    v_t(x_t)=\frac{x_t}{t} +\frac{1-t}{t}\nabla\log p_t(x_t).\label{eq:velocity-score-relation-in-diffusion-with-optimal-path}
\end{align}
One could simply write the above conditional Jacobian as
\begin{align}
    \tilde J_v(x_t|x_0,x_1) &= -\bigl(x_1-z\bigr)\biggl(\frac{tv_t(x_t)-x_t}{1-t}+\frac{z}{1-t}\biggr)^\top-\frac{1}{1-t}I\\
    &= -\bigl(x_1-z\bigr)\biggl(\frac{tv_t(x_t)-t(x_1-z)}{1-t}\biggr)^\top-\frac{1}{1-t}I\\
    &=-\frac{t}{1-t}\bigl(x_1-z\bigr)\Big(v_t(x_t)-\bigl(x_1-z\bigr)\Big)^\top -\frac{1}{1-t}I.
\end{align}

\subsection{Connection between Two Estimators, and Beyond}
We now demonstrate that the two estimators for $\dd_t\log p_t(x_t)$ introduced in \Cref{app:sec:cond-total-derivative,app:sec:cond-trace} are mathematically equivalent, differing only by a zero-mean term. Recall the two formulations:
\begin{align}
    \dd_t\log p_t(x_t) &= -\mathbb{E}\biggl[\frac{z}{\gamma_t}\cdot\Bigl(v_t(x_t)-\tilde v_t(x_t|x_0,x_1)\Bigr)+\frac{\dot\gamma_t}{\gamma_t}d\Big|x_t\biggr]\label{eq:cond-total-derivative-estimator-restate}\\
    &= -\mathbb{E}\biggl[-\tilde v_t(x_t|x_0,x_1)\cdot\biggl(\nabla\log p_t(x_t)+\frac{z}{\gamma_t}\biggr)+\frac{\dot\gamma_t}{\gamma_t}d\Big|x_t\biggr]\label{eq:cond-trace-estimator-restate}.
\end{align}
Notice that for any vector-valued function $h_t(x_t)$, 
according to \Cref{eq:velocity-def-in-stochastic-interpolants}, we have
\begin{align}
    \mathbb{E}\Bigl[h_t(x_t)\cdot\bigl(v_t(x_t)-\tilde v_t(x_t|x_0,x_1)\bigr)\Big|x_t\Bigr] &= h_t(x_t)\cdot\Bigl(v_t(x_t)-\mathbb{E}\bigl[\tilde v_t(x_t|x_0,X=x_1)|x_t\bigr]\Bigr)\notag\\
    &= h_t(x_t)\cdot\Bigl(v_t(x_t)-v_t(x_t)\Bigr)=0.\label{eq:control-variate-with-h}
\end{align}
Then \Cref{eq:cond-total-derivative-estimator-restate} can be written as
\begin{align}
    &-\mathbb{E}\biggl[\biggl(\frac{z}{\gamma_t}+\nabla\log p_t(x_t)\biggr)\cdot\Bigl(v_t(x_t)-\tilde v_t(x_t|x_0,x_1)\Bigr)+\frac{\dot\gamma_t}{\gamma_t}d\Big|x_t\biggr]\\
    =&-\mathbb{E}\biggl[-\tilde v_t(x_t|x_0,x_1)\cdot\biggl(\frac{z}{\gamma_t}+\nabla\log p_t(x_t)\biggr)+\frac{\dot\gamma_t}{\gamma_t}d\Big|x_t\biggr]\notag\\
    &\qquad\qquad\qquad\qquad\qquad\qquad\qquad\qquad-v_t(x_t)\cdot\biggl(\underbrace{\frac{1}{\gamma_t}\mathbb{E}[z|x_t]+\nabla\log p_t(x_t)}_{\overset{(i)}{=}-\nabla\log p_t(x_t)+\nabla\log p_t(x_t)=0}\biggr),
\end{align}
where $(i)$ is according to the Tweedie's formula.

% \paragraph{Variance reduction through control variates.} According to \Cref{eq:control-variate-with-h}, one could choose a functional $h_t$ to construct control variates to reduce the variance of estimators. In general, let $M_t(x_0,x_1;x_t)$ be the $\dd_t\log p_t(x_t)$ estimator induced by either \Cref{eq:cond-total-derivative-estimator-restate} or \Cref{eq:cond-trace-estimator-restate}, such that $\mathbb{E}[M_t|x_t]=\dd_t\log p_t(x_t)$. Given any vector-valued function $h_t$ and a parameter $c_t$, one could construct an unbiased estimator
% \begin{align}
%     M_t(x_0,x_1;x_t,h_t,c_t) = M_t(x_0,x_1;x_t)+c_t(x_t)h_t(x_t)\cdot\bigl(v_t(x_t)-\tilde v_t(x_t|x_0,x_1)\bigr),
% \end{align}
% where the optimal $c_t^*$ is choosen to minimize the total variance:
% \begin{align}
%     c_t^*(x_t) = -\frac{\mathrm{Cov}\Bigl(M_t(x_0,x_1;x_t), h_t(x_t)\cdot\bigl(v_t(x_t)-\tilde v_t(x_t|x_0,x_1)\bigr)\Bigr)}{2\mathrm{Var}\Bigl(h_t(x_t)\cdot\bigl(v_t(x_t)-\tilde v_t(x_t|x_0,x_1)\bigr)\Bigr)}.
% \end{align}

% \tony{I leave the CV first}

\subsection{Vectorized Estimators}
We first conclude that, the vectorized estimators obtained by replacing the inner-product $\cdot$ in \Cref{eq:cond-total-derivative-estimator-restate,eq:cond-trace-estimator-restate} with an elementwise-product $\odot$, are unbiased estimators for $\mathrm{diag}(-\nabla v_t(x_t))$.

The conclusion can be simply obtained by diving in the derivation of the conditional trace estimator, \Cref{eq:cond-trace-estimator-restate}, where 
\begin{align}
    \mathrm{diag}\bigl(ab^\top\bigr)=a\odot b\quad\text{and}\quad \mathrm{tr}\bigl(ab^\top\bigr)=a\cdot b.
\end{align}
And the conclusion applies for the condtional total derivative estimator, \Cref{eq:cond-total-derivative-estimator-restate}, since the difference between \Cref{eq:cond-total-derivative-estimator-restate,eq:cond-trace-estimator-restate} is a term with 0 expectation as stated in their connection.

\subsection{Connection with Higher-Order Score Matching}\label{app:sec:connection-to-higher-order-score-matching}
In this part, we connect our estimators with the second-order Tweedie's formula. We first recep the following lemmas from \cite{mengEstimatingHighOrder2021}:
\begin{lemma}\label{lemma:high-order-tweedie-formula}
    Given a $d$-dimensional distribution $p_\mathrm{data}$, and $q_\sigma(\tilde x|x)=\mathcal{N}(\tilde x;x,\sigma^2I)$, we have the following for any integer $n\geq0$:
    \begin{align}
        \mathbb{E}[\otimes^{n+1}x|\tilde x]=\sigma^2\frac{\partial}{\partial\tilde x}\mathbb{E}[\otimes^n x|\tilde x] + \sigma^2\mathbb{E}[\otimes^n x|\tilde x]\otimes\biggl(\nabla\log\tilde p(\tilde x)_+\frac{\tilde x}{\sigma^2}\biggr),
    \end{align}
    where $\otimes ^nx\in\mathbb{R}^{D^n}$ denotes $n$-fold tensor multiplications.
\end{lemma}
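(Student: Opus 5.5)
The plan is to prove the identity directly by differentiating the posterior moment under the integral sign, in the same way the paper handles $\nabla v_t$ in \Cref{app:sec:cond-trace} (the rule $\nabla p = p\nabla\log p$ followed by Bayes' rule). Write $\tilde p(\tilde x)=\int p_\mathrm{data}(x)\,q_\sigma(\tilde x|x)\,\dd x$ for the noisy marginal. The posterior is then $p(x|\tilde x)=p_\mathrm{data}(x)q_\sigma(\tilde x|x)/\tilde p(\tilde x)$, and
\begin{align}
\mathbb{E}[\otimes^n x|\tilde x]=\int \otimes^n x\; p(x|\tilde x)\,\dd x .
\end{align}
I read the "$_+$" in the statement as a typo for "$+$". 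The tensor convention I will fix is that the derivative index $\partial/\partial\tilde x$ is appended as the last tensor slot, so that it matches the right-multiplication by $\otimes(\cdot)$ on the right-hand side.

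The first step is to compute the posterior score in $\tilde x$. Bayes' rule gives
\begin{align}
\nabla_{\tilde x}\log p(x|\tilde x)=\nabla_{\tilde x}\log q_\sigma(\tilde x|x)-\nabla\log\tilde p(\tilde x)=\frac{x-\tilde x}{\sigma^2}-\nabla\log\tilde p(\tilde x).
\end{align}
The prior $p_\mathrm{data}$ does not depend on $\tilde x$, so it drops out of this derivative.

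The second step is to differentiate under the integral and use $\nabla p = p\nabla\log p$:
\begin{align}
\frac{\partial}{\partial\tilde x}\mathbb{E}[\otimes^n x|\tilde x]
=\mathbb{E}\Bigl[\otimes^n x\otimes\Bigl(\frac{x-\tilde x}{\sigma^2}-\nabla\log\tilde p(\tilde x)\Bigr)\Big|\tilde x\Bigr]
=\frac{1}{\sigma^2}\mathbb{E}[\otimes^{n+1}x|\tilde x]-\mathbb{E}[\otimes^n x|\tilde x]\otimes\Bigl(\frac{\tilde x}{\sigma^2}+\nabla\log\tilde p(\tilde x)\Bigr).
\end{align}
In the second equality, the factor depending only on $\tilde x$ is pulled out of the conditional expectation. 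Multiplying by $\sigma^2$ and moving the last term to the other side gives exactly the claimed recursion. As a sanity check, the case $n=0$ reduces to the usual first-order Tweedie formula $\mathbb{E}[x|\tilde x]=\tilde x+\sigma^2\nabla\log\tilde p(\tilde x)$, since the derivative of the constant $1$ vanishes.

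The main obstacle I expect is justifying the interchange of $\partial/\partial\tilde x$ with $\int\dd x$. My first attempt is dominated convergence. The Gaussian kernel and its $\tilde x$-derivative are bounded, locally uniformly in $\tilde x$, by $C(1+\|x\|)e^{-\|x-\tilde x\|^2/2\sigma^2}$. Hence it suffices that $\int\|x\|^{n+1}p_\mathrm{data}(x)\,\dd x<\infty$, or that $p_\mathrm{data}$ has compact support. Under either condition both $\int \otimes^n x\, p_\mathrm{data} q_\sigma\,\dd x$ and $\tilde p$ are differentiable, and the quotient rule yields the computation above. The remaining work is bookkeeping: keeping the tensor-index ordering consistent with the stated right-multiplication convention.
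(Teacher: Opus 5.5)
Your proof is correct. With $p(x|\tilde x)=p_{\mathrm{data}}(x)\,q_\sigma(\tilde x|x)/\tilde p(\tilde x)$, the posterior score is $\nabla_{\tilde x}\log p(x|\tilde x)=(x-\tilde x)/\sigma^2-\nabla\log\tilde p(\tilde x)$. Differentiating $\mathbb{E}[\otimes^n x|\tilde x]$ under the integral and rearranging gives exactly the stated recursion, and your $n=0$ check recovers Tweedie's formula.

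The paper gives no proof of this lemma of its own; it quotes it from \citet{mengEstimatingHighOrder2021}. Your direct computation therefore supplies the missing argument, and it is the standard one.

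One refinement: the moment assumption $\int\|x\|^{n+1}p_{\mathrm{data}}(x)\,\mathrm{d}x<\infty$ is unnecessary. Instead of bounding the Gaussian factor by $1$, use it to absorb the polynomial growth. For $\tilde x$ in a ball $B$,
\[
\sup_{\tilde x\in B}\|x\|^{n}(1+\|x\|)\,e^{-\|x-\tilde x\|^2/(2\sigma^2)}\le C_B
\]
uniformly in $x$, so the differentiated integrand is dominated by $C_B\,p_{\mathrm{data}}(x)$. That bound is integrable for every probability distribution. Combined with $\tilde p>0$ everywhere, the interchange and the quotient rule are justified with no assumption on $p_{\mathrm{data}}$, which matches the lemma as stated.
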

According to \Cref{eq:velocity-score-relation-in-stochastic-interpolants} and under \Cref{assump:perfect-v}, the Jacobian of velocity can be written as
\begin{align}
    \nabla v_t(x_t)=\nabla f_t(x_t)-\dot\gamma_t\gamma_t\nabla^2\log p_t(x_t).
\end{align}
\Cref{lemma:high-order-tweedie-formula} with $n=1$ implies
\begin{align}
    &\mathbb{E}[\otimes^2 x|\tilde x]=\sigma^2I + \sigma^4\nabla^2\log\tilde p(\tilde x) + \otimes^2\bigl(\tilde x+\sigma^2\nabla\log\tilde p(\tilde x)\bigr)\\
    \Longrightarrow &\nabla^2\log\tilde p(\tilde x)=\frac{1}{\sigma^4}\Bigl(\mathbb{E}[\otimes^2 x|\tilde x]-\sigma^2I-\otimes^2\bigl(\tilde x+\sigma^2\nabla\log\tilde p(\tilde x)\bigr)\Bigr).
\end{align}
Let $x=\alpha_tx_1+\beta_tx_0$, we have
\begin{align}
    \tilde p(\tilde x)=\int \pi(x_0,x_1)\delta(\alpha_tx_1+\beta_tx_0-x)\mathcal{N}(\tilde x;x,\gamma_t^2I)dxdx_0dx_1=p_t(\tilde x).
\end{align}
Hence, by applying the above second-order Tweedie's formula, the hessian of $\log p_t(x_t)$ can be written as
\begin{align}
    \nabla^2\log p_t(x_t) &= \frac{1}{\gamma_t^4}\Bigl(\mathbb{E}[\otimes^2\bigl(\alpha_t x_1+\beta_t x_0\bigr)|x_t]-\gamma_t^2I-\otimes^2\bigl( x_t+\gamma_t^2\nabla\log p_t(x_t)\bigr)\Bigr)
\end{align}

For clarity, we focus on the example of Diffusion models, i.e. $p_0=\delta_0$. In such a case, the velocity and score can be related as
\begin{align}
      v_t(x_t)=\frac{\dot\alpha_t}{\alpha_t}x_t+\biggl(\frac{\dot\alpha_t}{\alpha_t}-\frac{\dot\gamma_t}{\gamma_t}\biggr)\gamma_t^2\nabla\log p_t(x_t),\label{eq:velocity-score-relation-in-diffusion-general}
\end{align}
% \Cref{eq:velocity-score-relation-in-diffusion-with-optimal-path}, and therefore
and therefore
\begin{align}
     \nabla v_t(x_t)=\frac{\dot\alpha_t}{\alpha_t}I+\biggl(\frac{\dot\alpha_t}{\alpha_t}-\frac{\dot\gamma_t}{\gamma_t}\biggr)\gamma_t^2\nabla^2\log p_t(x_t).
\end{align}
For clarity, we denote $c_t=\dot\alpha_t/\alpha_t-\dot\gamma_t/\gamma_t$. 
The hessian can be simplified as follows:
\begin{align}
    \nabla^2\log p_t(x_t) &= \frac{1}{\gamma_t^4}\Bigl(\mathbb{E}[\alpha_t^2 x_1 x_1^\top|x_t]-\gamma_t^2I-\otimes^2\bigl( x_t+\gamma_t^2\nabla\log p_t(x_t)\bigr)\Bigr).
\end{align}
Since $\dd_t\log p_t(x_t)$ is the negative trace of the Jacobian of $v_t$, we focus on the diagonal entry of the hessian:
\begin{align}
    \mathrm{diag}\bigl(\nabla^2\log p_t(x_t)\bigr) &= \frac{1}{\gamma_t^4}\Bigl(\mathbb{E}[\alpha_t^2\odot^2 x_1|x_t]-\gamma_t^2\mathbf{1}_d-\odot^2 x_t \notag\\
    &\qquad\qquad- 2\gamma_t^2x_t\odot \nabla\log p_t(x_t) - \gamma_t^4\odot^2 \nabla\log p_t(x_t)\Bigr),
\end{align}
where we denote $\odot^2$ as $2$-fold elementwise product for clarity. And notice that
\begin{align}
    \mathbb{E}[\alpha_t^2\odot^2 x_1|x_t] &=\mathbb{E}[\odot^2(x_t-\gamma_t z)|x_t]\\
    &=\odot^2x_t - 2\gamma_tx_t\odot\mathbb{E}[z|x_t]+\gamma_t^2\mathbb{E}[\odot^2 z|x_t]\\
    &\overset{(i)}{=}\odot^2x_t + 2\gamma_t^2x_t\odot\nabla\log p_t(x_t)+\gamma_t^2\mathbb{E}[\odot^2 z|x_t],
\end{align}
where $(i)$ is by Tweedie's formula. The diagonal of hessian can be simplified as
\begin{align}
    \mathrm{diag}\bigl(\nabla^2\log p_t(x_t)\bigr) &= \mathbb{E}\biggl[\odot^2\biggl(\frac{1}{\gamma_t}z\biggr)\Big|x_t\biggr] - \odot^2\nabla\log p_t(x_t)-\frac{1}{\gamma_t^2}\mathbf{1}_d.
\end{align}
Further notice that 
\begin{align}
    \mathbb{E}\biggl[\odot^2\biggl(\frac{1}{\gamma_t}z\biggr)\Big|x_t\biggr] &= \mathbb{E}\biggl[\odot^2\biggl(\frac{1}{\gamma_t}z+\nabla\log p_t(x_t)\biggr)\Big|x_t\biggr] \notag\\
    &\quad- 2\mathbb{E}\biggl[\frac{1}{\gamma_t}z\Big|x_t\biggr]\odot \nabla\log p_t(x_t)-\odot^2\nabla\log p_t(x_t)\\
    &\overset{(i)}{=}\mathbb{E}\biggl[\odot^2\biggl(\frac{1}{\gamma_t}z+\nabla\log p_t(x_t)\biggr)\Big|x_t\biggr]+\odot^2\nabla\log p_t(x_t),
\end{align}
where $(i)$ is again by Tweedie's formula. We have the simplification
\begin{align}
     \mathrm{diag}\bigl(\nabla^2\log p_t(x_t)\bigr) &= \mathbb{E}\biggl[\odot^2\biggl(\frac{1}{\gamma_t}z+\nabla\log p_t(x_t)\biggr)\Big|x_t\biggr]-\frac{1}{\gamma_t^2}\mathbf{1}_d\\
     &= \mathbb{E}\biggl[\frac{z}{\gamma_t}\odot\biggl(\frac{z}{\gamma_t}+\nabla\log p_t(x_t)\biggr)\Big|x_t\biggr]-\frac{1}{\gamma_t^2}\mathbf{1}_d.
\end{align}
Therefore,
\begin{align}
    \mathrm{diag}\bigl(\nabla v_t(x_t)\bigr)&=\frac{\dot\alpha_t}{\alpha_t}\mathbf{1}_d+c_t\gamma_t^2
    \mathbb{E}\biggl[\frac{z}{\gamma_t}\odot\biggl(\frac{z}{\gamma_t}+\nabla\log p_t(x_t)\biggr)\Big|x_t\biggr]-c_t\mathbf{1}_d\\
    &\overset{(i)}{=}\mathbb{E}\biggl[\frac{z}{\gamma_t}\odot\biggl(c_t\gamma_t z+v_t(x_t)-\frac{\dot\alpha_t}{\alpha_t}x_t\biggr)\Big|x_t\biggr]+\frac{\dot\gamma_t}{\gamma_t}\mathbf{1}_d\\
    &\overset{(ii)}{=}\frac{1}{\gamma_t}\mathbb{E}\biggl[z\odot\Bigl(v_t(x_t)-\tilde v_t(x_t)\Bigr)\Big|x_t\biggr]+\frac{\dot\gamma_t}{\gamma_t}\mathbf{1}_d,
\end{align}
where $(i)$ is according to the relationship between velocity and score stated in \Cref{eq:velocity-score-relation-in-diffusion-with-optimal-path} and $(ii)$ is by 
\begin{align}
    \frac{\dot\alpha_t}{\alpha_t}x_t-c_t\gamma_tz&=\frac{\dot\alpha_t}{\alpha_t}(\alpha_tx_1+\cancel{\gamma_tz})-\biggl(\cancel{\frac{\dot\alpha_t}{\alpha_t}}-{\frac{\dot\gamma_t}{\gamma_t}}\biggr)\gamma_tz= \dot\alpha_tx_1+\dot\gamma_tz=\tilde v_t(x_t).
\end{align}
Remarkbly, it recovers the estimator derived from the conditional total derivative, \Cref{eq:cond-total-derivative-estimator}, in the diffusion setting.

\paragraph{Example.} When choosing the optimal path, i.e. $\alpha_t=t$ and $\gamma_t=1-t$, the diagonal of Jacobian results in
\begin{align}
    \mathrm{diag}\bigl(\nabla v_t(x_t)\bigr){=}\frac{1}{1-t}\mathbb{E}\biggl[z\odot\Bigl(v_t(x_t)-(x_1-z)\Bigr)\Big|x_t\biggr]-\frac{1}{1-t}\mathbf{1}_d,
\end{align}

\section{Expanded Discussion of Related Works}
\label{app:sec:related_work}

The following are three dominant approaches used in computational chemistry for training neural networks that support efficient sampling \textit{and} likelihood.
Once such neural networks are trained, they can be used as a \textit{proposal} distribution, known in computational chemistry as a Boltzmann generator, for self-normalized importance sampling of a Boltzmann distribution.

\paragraph{Discrete and Continuous Normalizing Flows}
% \paragraph{Boltzmann Generators with Exact Likelihood.}
Boltzmann Generators (BGs) are tractable proposal distributions used in importance sampling, allowing both efficient sampling and likelihood evaluation. They were first applied to molecular systems by \cite{noé2019boltzmanngeneratorssampling}, who parameterized them using neural networks in the form of Normalizing Flows (NFs).
Subsequent work has evolved along two main directions. The first replaces NFs with their continuous counterpart, Continuous Normalizing Flows \citep[CNFs][]{chen2019neuralordinarydifferentialequations} that can be parameterized using more flexible model architectures. For instance, \cite{pmlr-v119-kohler20a,NEURIPS2023_bc827452} develop equivariant CNFs tailored to molecular systems. However, this approach does not scale well: computing the log-likelihood requires integrating the divergence of the velocity field (as in~\eqref{eq:augmented-pf-ode}), which becomes prohibitively expensive in high dimensions.
The second direction instead focuses on improving the expressivity and sampling efficiency of standard NFs. Recent work, such as TarFlow \citep{zhai2025normalizingflowscapablegenerative}, introduces scalable transformer-based architectures, achieving state-of-the-art performance. Building on this line, \cite{tan2026scalableequilibriumsamplingsequential,tan2026amortizedsamplingtransferablenormalizing,schebek2025scalableboltzmanngeneratorsequilibrium} pushed the boundaries of NFs for atomic systems. In particular, \cite{tan2026scalableequilibriumsamplingsequential} also proposes an inference-time scaling strategy to further improve sampling efficiency.

\paragraph{Energy-based Diffusion Models.} Training diffusion models with energy parameterization acts a bridge to alleviate the computational overhead of solving the divergence integration in CNFs, where the learned energies define Boltzmann distributions on different noise levels. Recent works focus on improving energy-based diffusion models training beyond Denoising Score Matching. \cite{aggarwal2025boltzncelearninglikelihoodsboltzmann,ouyang2026diffusiveclassificationlosslearning} incorporate Noise-Contrastive Estimation \citep{gutmann2012nce}, where \cite{aggarwal2025boltzncelearninglikelihoodsboltzmann} treats the learned energy of clean data as the approximated normalized likelihood and \cite{ouyang2026diffusiveclassificationlosslearning} leverages the learned intermediate energies for further inference-time control. \cite{he2026rneplugandplaydiffusioninferencetime} improves the training via imposing the Bayes' rule constraint, with an application of the EBMs for free energy estimation. \cite{plainer2026consistentsamplingsimulationmolecular} regularizes the EBMs to enforce the Fokker-Planck Equation, and uses the learned EBMs as a force field. \cite{yu2026learningenergybasedmodelsstochastic} proposes a unified framework for learning EBMs. In contrast to the standard data-driven setting, \cite{ouyang2026bnemboltzmannsamplerbased} proposes an energy matching objective for learning a Boltzmann neural sampler when data is unavailable, relying instead on the unnormalized target density.

\paragraph{Flow Maps with Approximate Invertibility.} 
% \paragraph{Boltzmann Generators with Approximated Invertibility.} 
NFs enables exact likelihood evaluation through its invertibility, which also limits its flexibility. To enable the usage of more flexible network architecture, Free-form Flows \citep{draxler2024freeformflowsmakearchitecture} propose to softly enforce the invertibility through an additional reconstruction loss. To incorporate with the recent advanced few-steps generative models, FALCON \citep{rehman2025falconfewstepaccuratelikelihoods} proposes to make flow map invertible, by additionally imposing reconstruction losses on the jump between two times $t<s$. FALCON enables flow map to have fast yet accurate likelihood estimation and shows its effectiveness and scalability on molecular systems. However, FALCON requires the exact log-determinant of the Jacobian of the flow map for likelihood evaluation; over $T$ steps, this is computationally expensive, resulting in $\mathcal{O}(Td)$ function evaluations and an additional $\mathcal{O}(Td^3)$ cost.

\section{Experimental Details for Molecules}\label{app:sec:molecule-exp-details}

The molecular experiments are done on one \texttt{NVIDIA RTX PRO 6000} GPU.

\subsection{Details of Training}
\paragraph{Model Architecture.}
Following \cite{rehman2025falconfewstepaccuratelikelihoods}, we adopt the same Diffusion Transformer (DiT) backbone for both \scalloptt and F2D2, with an additional time conditioning adapted for flow map, across all benchmarks. We use the vectorized divergence matching objectives stated in \Cref{sec:vectorized-divergence-matching} for both \scalloptt and F2D2. To match similar number of parameters as FALCON, we use 6 DiT blocks as shared parameters, and 3 additional DiT blocks for log-density change read-out head.
The details of the backbone configuration are in \Cref{tab:model_arch_details} below.
\begin{table}[h]
    \centering
    \caption{Overview of \scalloptt and F2D2 configurations across benchmarks.}
    \label{tab:model_arch_details}
    \begin{tabular}{lcccccc}
        \toprule
        System & Hid. Size & Shared Blocks & log-density Blocks & Heads & Cond. Dim. & Param. (M) \\
        \midrule
        ALA-2     & 192 & 6 &3 & 6 & 64 & 6.25 \\
        ALA-3           & 192 &  6 &3 & 6 & 64 & 6.25 \\
        ALA-4  & 192 &  6 &3 & 6 & 64 & 6.25 \\
        ALA-6          & 192 &  6 &3 & 6 & 64 & 6.25 \\
        \bottomrule
    \end{tabular}
\end{table}

\paragraph{Hyperparameter for Training.}
We train all models with an exponential moving average (EMA) on
the weights using a decay rate of 0.999. We use the AdamW optimizer with learning rate $10^{-4}$,
gradient-norm clip $10$, and weight decay $10^{-4}$. We employ a square-root annealing learning rate schedule with
a warm-up phase covering $2.5$\% of the training iterations. All trainings are using batch size $1000$, with $4\times10^6$ updates in total.

\paragraph{Data Augmentation.} To preserve SE(3) symmetries, we employ the Center-of-Mass adjustment and standardlize the data using mean and std estimated from the training set. We further employ random rotation to the data every time it is sampled, to enforce soft constraint on the rotational equivariance.

\subsection{Details of Inference}

\paragraph{Hyperparameters.}
In all evaluations, we draw $2\times10^5$ samples from the model and employ SNIS for resampling. In SNIS, logit values are clipped at 0.002, as in standard baselines \citep{tan2026scalableequilibriumsamplingsequential,tan2026amortizedsamplingtransferablenormalizing,rehman2025falconfewstepaccuratelikelihoods}. We use batch size $10^4$ for both \scalloptt, F2D2, and SBG-IS during inference to maximize GPU utility, for all benchmarks. However, FALCON only fits with significant smaller batch size due to the need of exact Jacobian calculation. On a single \texttt{NVIDIA RTX PRO 6000} GPU, FALCON only fits batch size (1000, 1000, 500, 100) for ALA-(2, 3, 4, 6), respectively. We employ an linear schedule for all generation in \scalloptt and F2D2.

\paragraph{Algorithms.} We provide the pseudo code for inferences of \scalloptt and F2D2 in \Cref{alg:scallop-inference} below:
\begin{algorithm}[H]
\caption{Inference for \texttt{SCALLOP} and F2D2}
\label{alg:scallop-inference}
\begin{algorithmic}[1] % [1] adds line numbers
\State \textbf{Input:} base distribution $p_0$, trained likelihood flow map $f_\theta=[u_\theta;D_\theta]$, time schedule $\{t_i\}_{i=0}^N$
\State \textbf{Output:} Samples $x_N \sim p_1^\theta$ with associated log likelihoods $\log p_N$

\State $x_0 \sim p_0$
\State $p_0 \leftarrow \log p_0(x_0)$

\For{$i = 1$ \textbf{to} $N$}
    \State $(t, s) \leftarrow (t_{i-1}, t_i)$
    \State $x_i \leftarrow x_{i-1} + (s-t)u_\theta(x_{i-1}, t, s)$
    
    \If{$D_\theta(x_{i-1}, t, s) \in \mathbb{R}$}
        \State $\log p_i \leftarrow \log p_{i-1} + (s-t)D_\theta(x_{i-1}, t, s)$
    \ElsIf{$D_\theta(x_{i-1}, t, s) \in \mathbb{R}^d$}
        \State $\log p_i \leftarrow \log p_{i-1} + (s-t)D_\theta(x_{i-1}, t, s) \cdot \mathbf{1}_d$\Comment{for vectorized divergence}
    \EndIf
\EndFor

\State \Return $x_N, p_N$
\end{algorithmic}
\end{algorithm}

\subsection{Details of Alanine Systems}
For all the alanine systems, including ALA-(2, 3, 4, 6), we follow the same setting as \cite{tan2026scalableequilibriumsamplingsequential,rehman2025falconfewstepaccuratelikelihoods} to split the data into training, validation, and test sets, with sizes $10^5$, $2\times10^4$, and $10^4$. Data are obtained from MD for 1 $\mu$s, with a timestep of 1 fs, at 300K, 310K, 300K, 310K for ALA-(2, 3, 4, 6) respectively, provided by the official repository of \cite{tan2026scalableequilibriumsamplingsequential}. The force fields for ALA-(2, 3, 4, 6) are Amber ff99SBildn, Amber 14, and Amber ff99SBildn, respectively.

\subsection{Details of Metrics}

\paragraph{Effective Sample Size.} ESS \citep{kish1957confidence} quantifies the resampling efficiency of SNIS. Given $N$ generated samples $x_{1:N}$ with associated unnormalized importance weights $w_{1:N}$, the ESS is defined as:
\begin{align}
    \mathrm{ESS}(w_{1:N})=\frac{(\sum_{n=1}^N w_n)^2}{N\sum_{n=1}^N w_n^2}\in[0,1],
\end{align}
where a higher ESS generally reflects a better performance of the SNIS. Therefore, ESS can be treated as a proxy of evaluating how accurate the estimated log densities of samples are, given that the model can generate high quality samples.

\paragraph{2-Wasserstein Distance.} The 2-Wasserstain distance ($\mathcal{W}_2$) is a measure of difference between two probability distributions $p$ and $q$ defined on $\mathbb{R}^m$,
defined as follows
\begin{align}
    \mathcal{W}_2(p, q)^2=\min_{\pi\in\Pi(p,q)}\int_{\mathbb{R}^m\times\mathbb{R}^m}c(x, y)\dd\pi(x, y),
\end{align}
where $\Pi$ is the set of admissible couplings and $c:\mathbb{R}^m\times\mathbb{R}^m\rightarrow\mathbb{R}$ is the cost function.
\begin{itemize}[leftmargin=*]
    \item 2-Wasserstein Distance of energy ($\ewas$) measures the energy histogram difference between generated samples and reference ones, which admits $m=1$ and uses $c(x, y)=|x-y|$.
    \item 2-Wasserstein Distance of torus ($\twas$) measures the torsional-angle histogram difference between generated samples and reference ones. For a ALA-N system, there are $2(N-1)$ torsional angles in total. Let $\mathrm{Deheral}(x)=(\phi_1,\psi_1,...,\phi_{N-1},\psi_{N-1})\in[-\pi,\pi]^{2(N-1)}$ be the torsional angles of a sample $x$.     
    To account for the fact that these torsional angles live in a torus $[-\pi,\pi]$, we use:
    \begin{align}
        c(x, y)= \sum_{i=1}^{2(N-1)}\left[(\mathrm{Dehedral}(x)_i-\mathrm{Dehedral}(y)_i+\pi)\space\mathrm{mod}\space2\pi-\pi\right]^2
    \end{align}
    as the cost function.
\end{itemize}

\section{Additional Experimental Results for Molecules}

\newcommand{\imgw}{0.18\textwidth} % one column width; tweak if overfull
\begin{figure}[t]
  \centering
  \setlength{\tabcolsep}{3pt}
  \renewcommand{\arraystretch}{1.0}
  \begin{tabular}{@{}c@{\hspace{3pt}}c@{\hspace{3pt}}c@{\hspace{3pt}}c@{\hspace{3pt}}c@{}}
    % ----- row 1: ALDP only at (1,1) -----
    \includegraphics[width=\imgw]{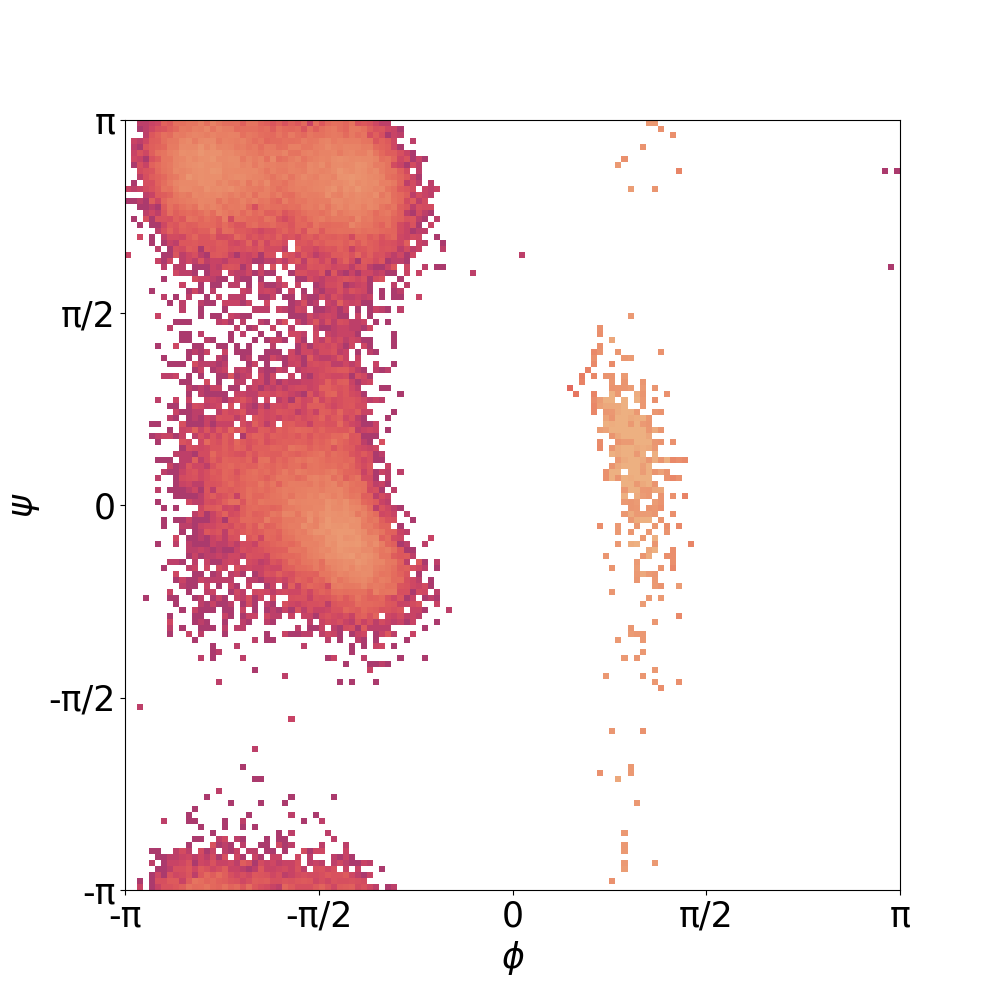} &
    & & & \\
    \includegraphics[width=\imgw]{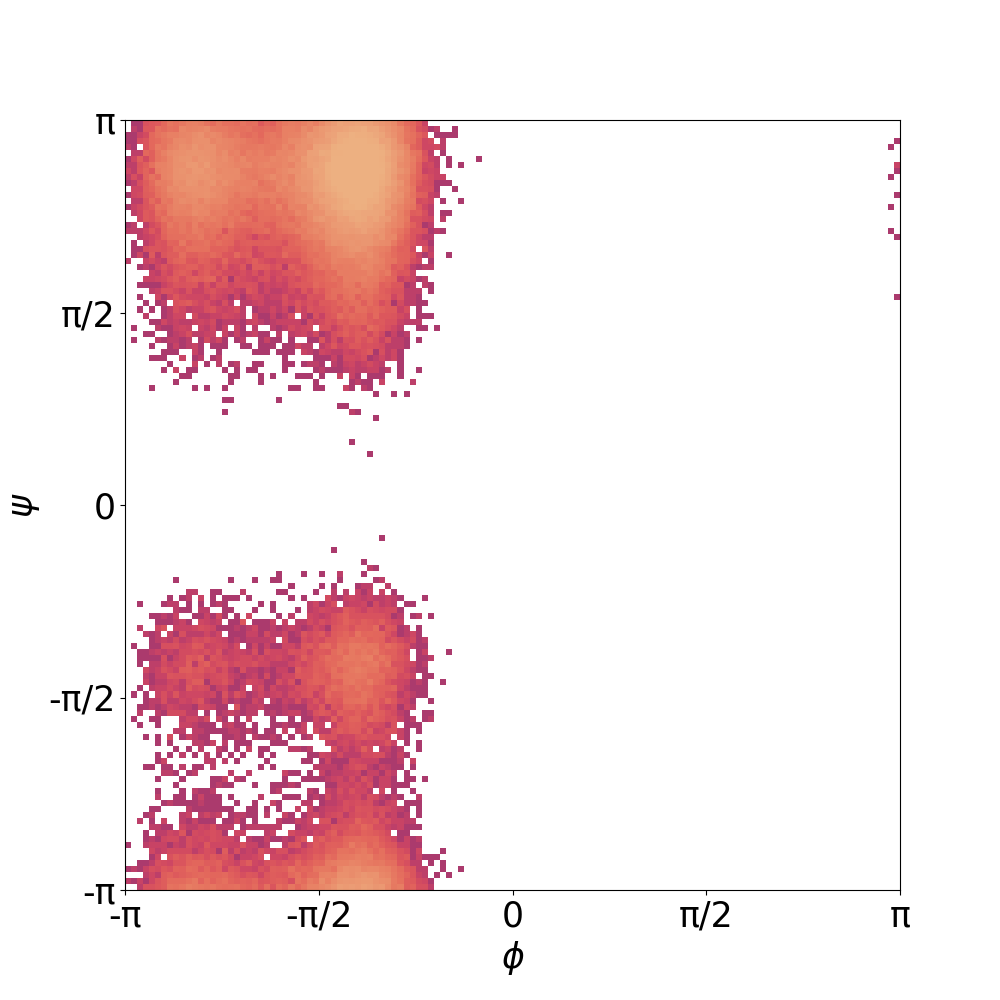} &
    \includegraphics[width=\imgw]{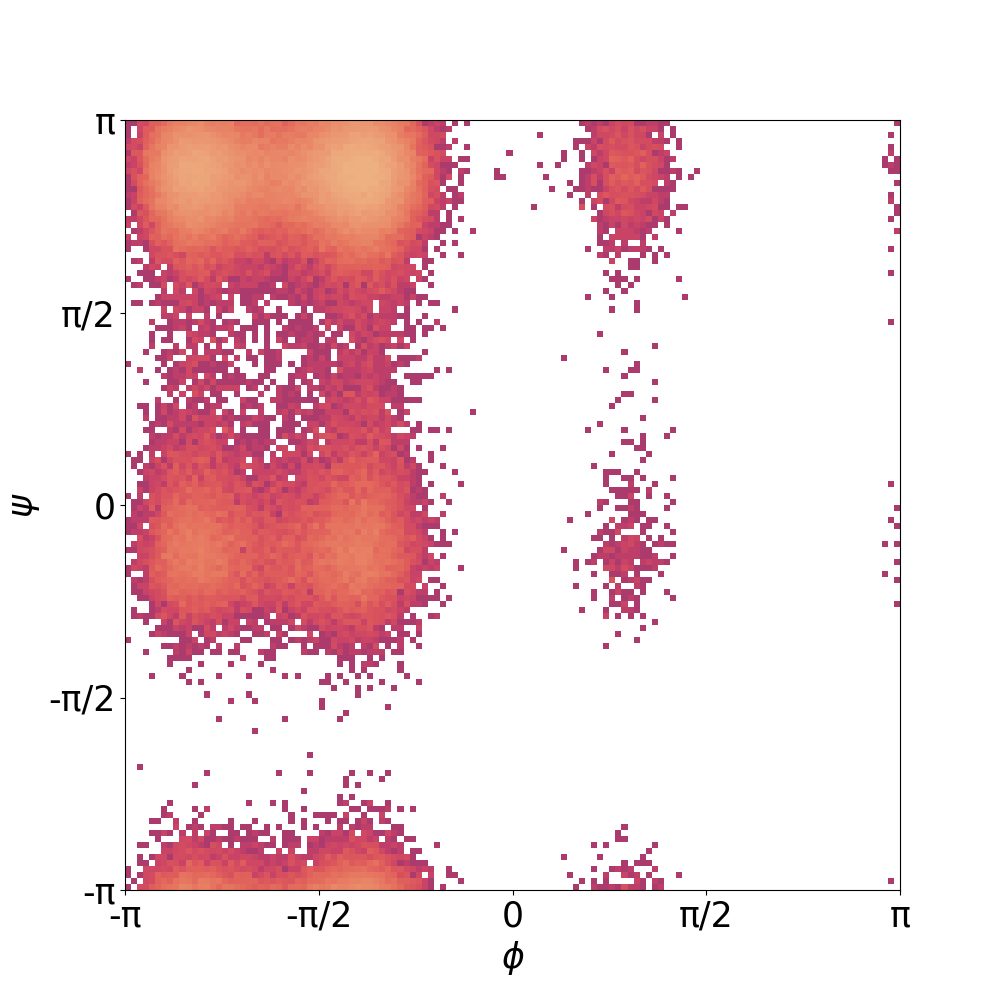} & & &
    \\
    \includegraphics[width=\imgw]{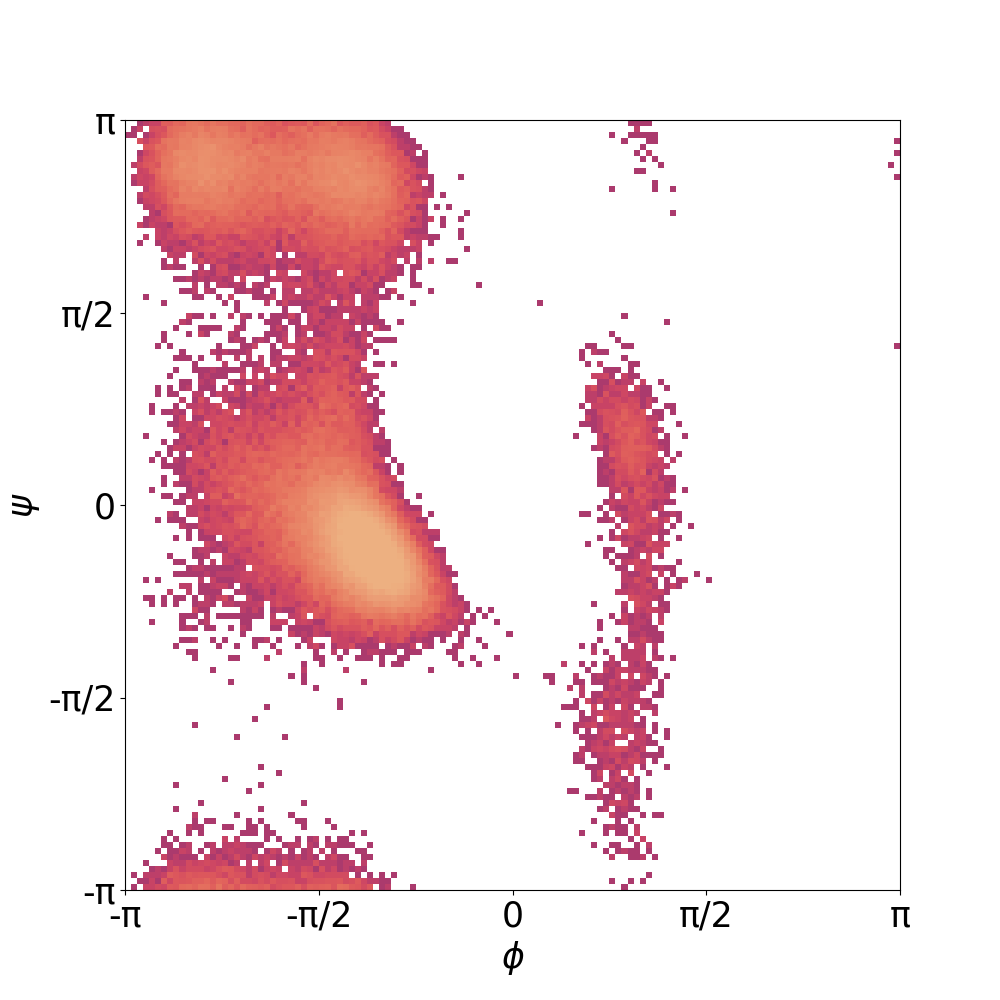} &
    \includegraphics[width=\imgw]{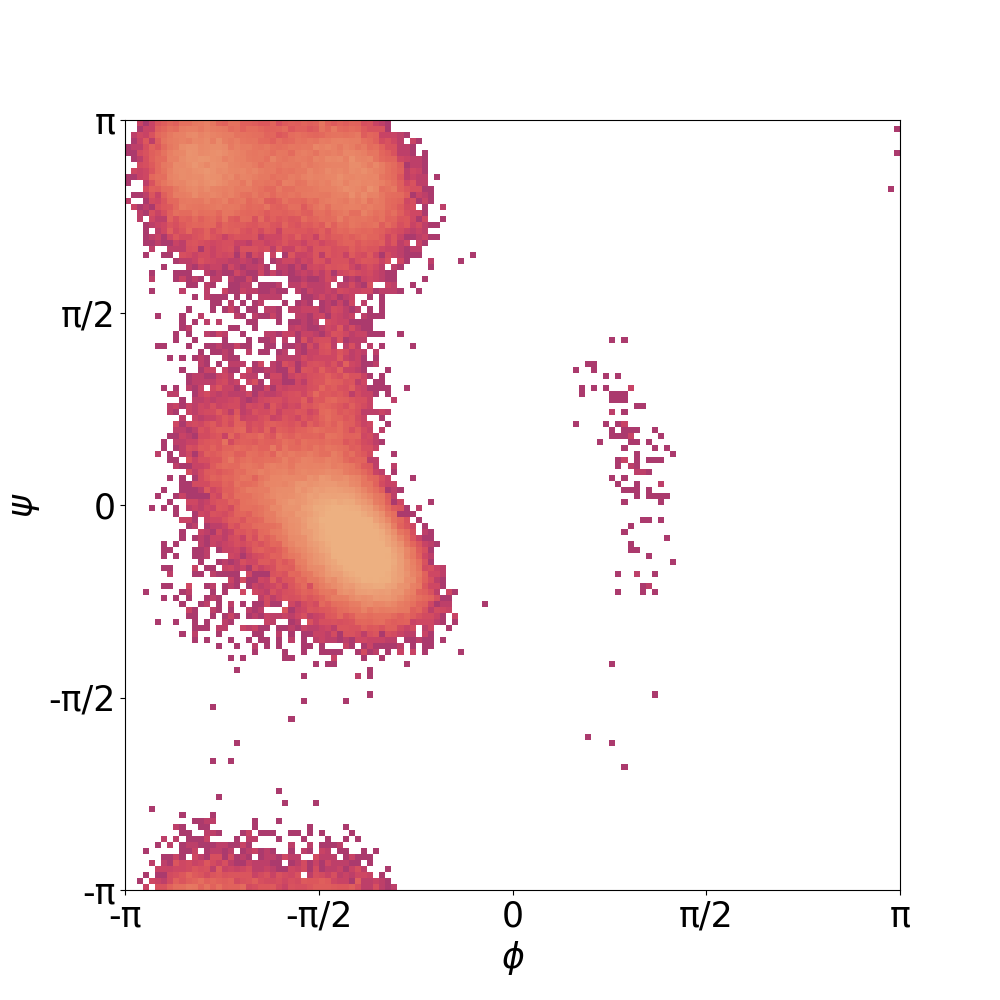} &
    \includegraphics[width=\imgw]{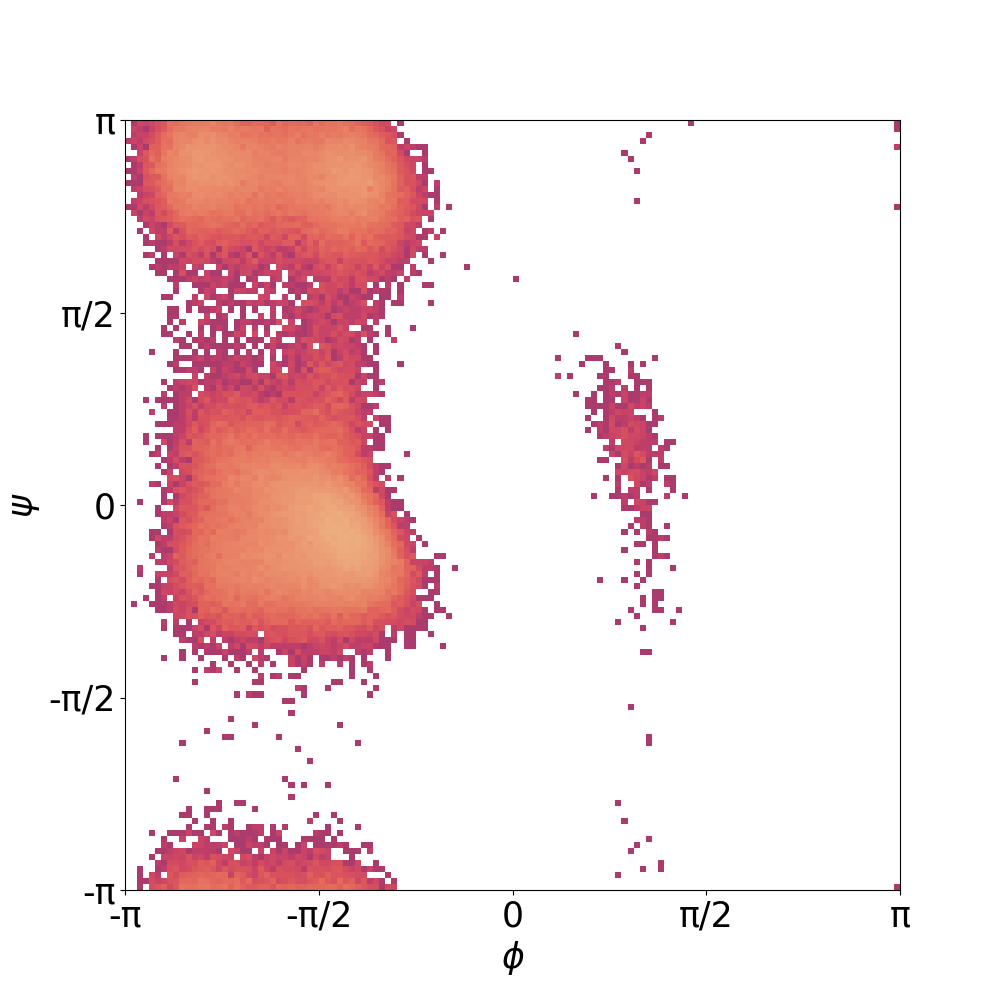} &
    &
    \\
    % ----- row 4: ala6 energy at (4,1); rest free -----
    \includegraphics[width=\imgw]{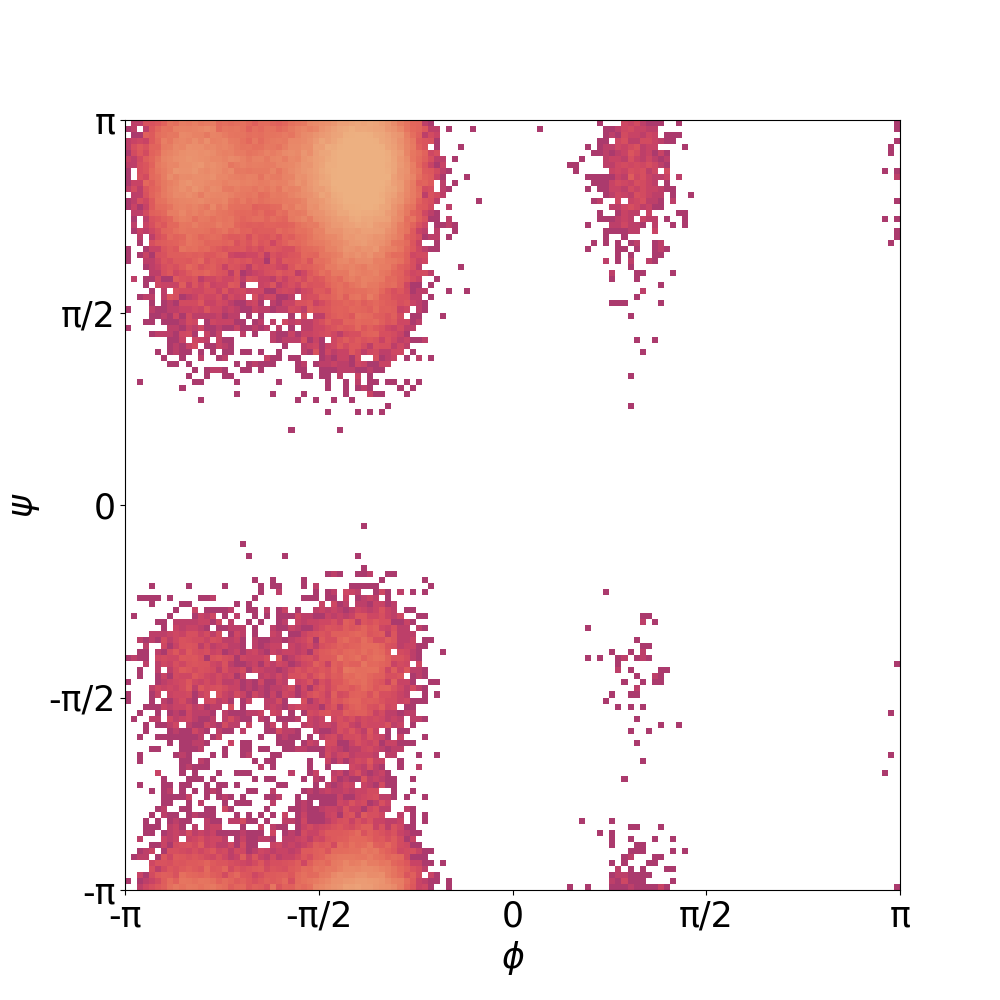} &
    \includegraphics[width=\imgw]{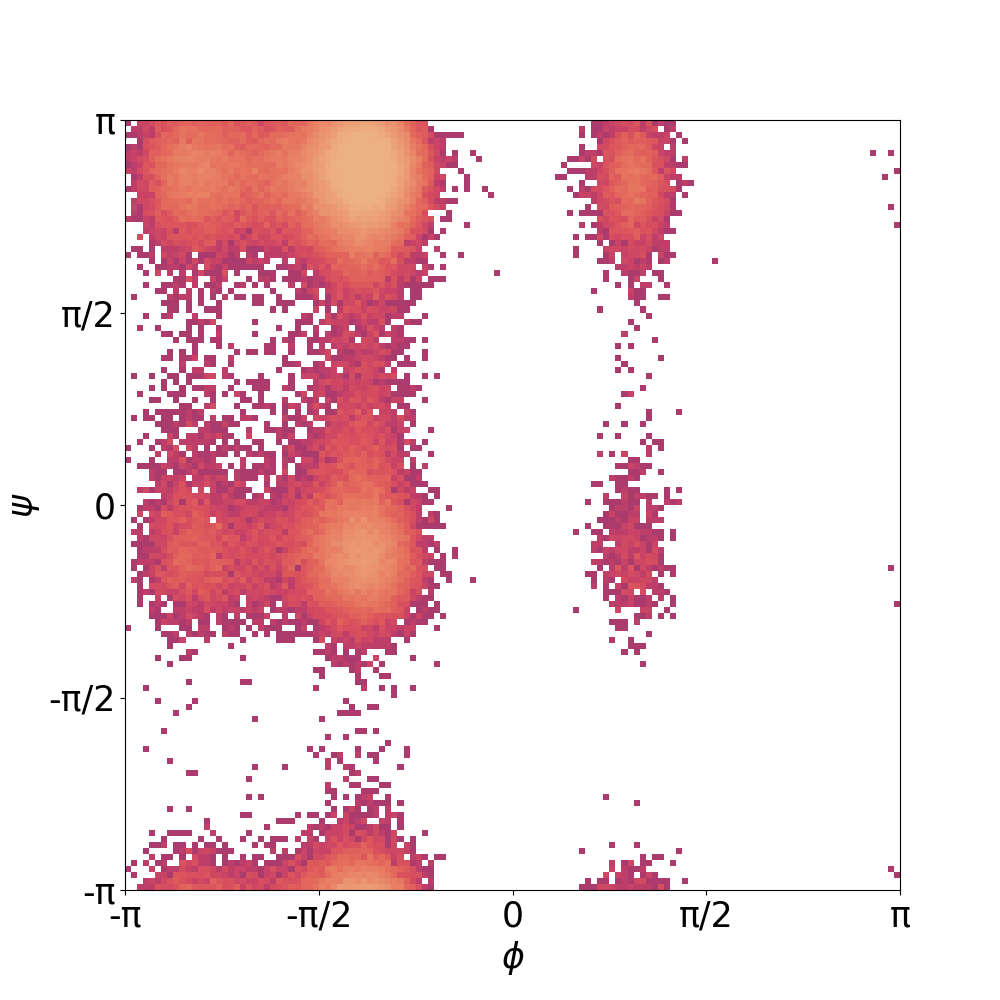}
    & \includegraphics[width=\imgw]{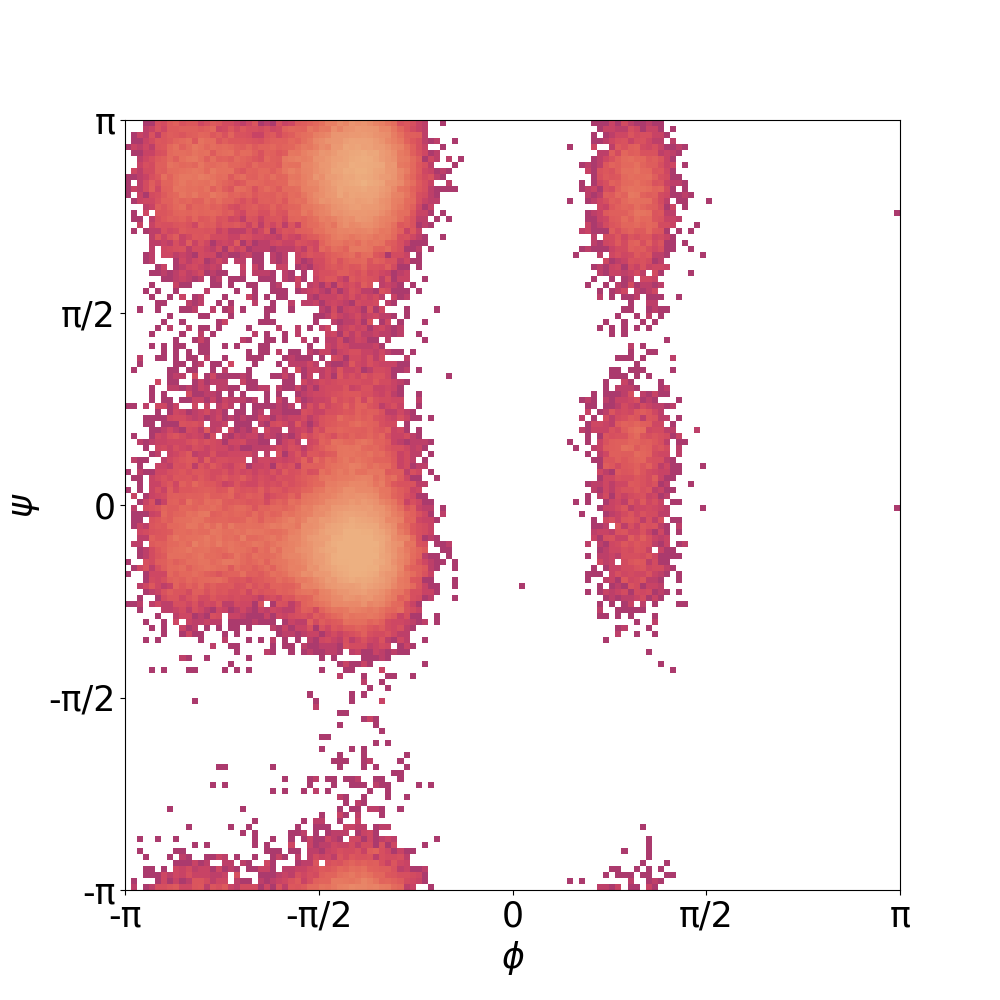}&\includegraphics[width=\imgw]{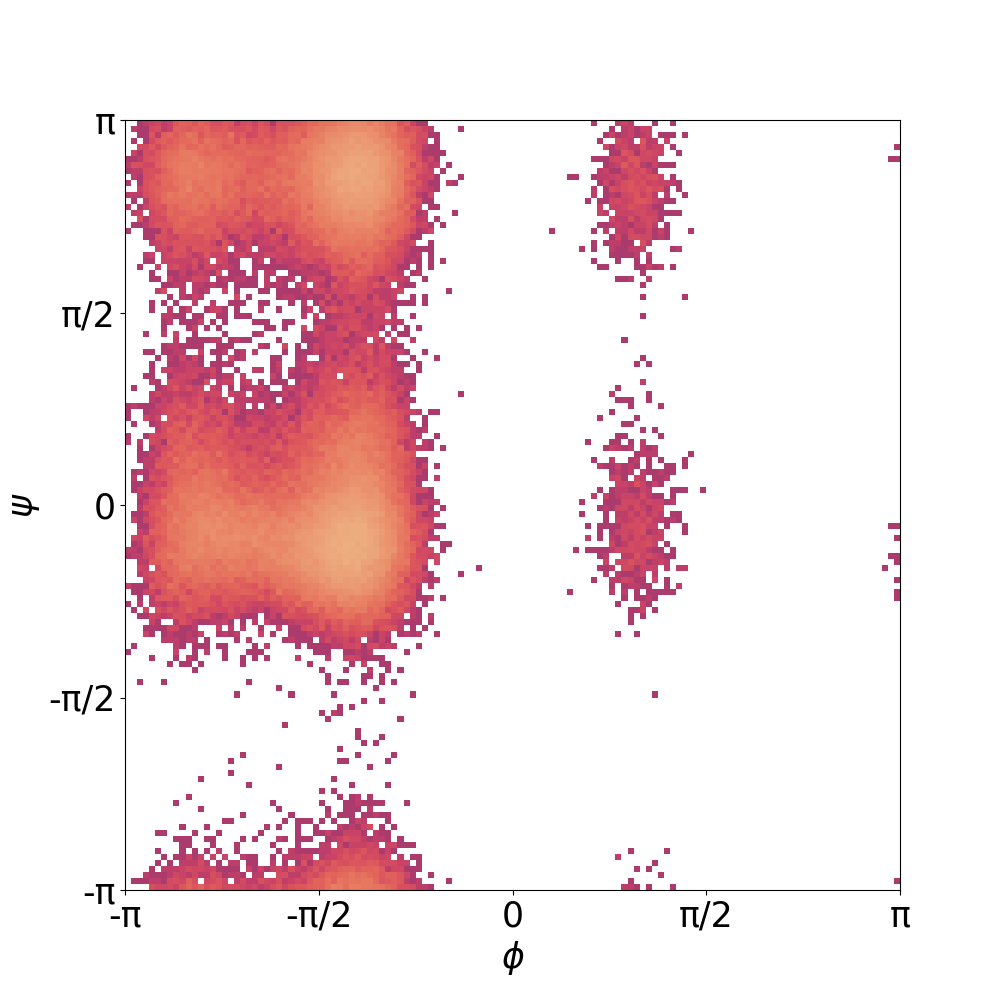} & \includegraphics[width=\imgw]{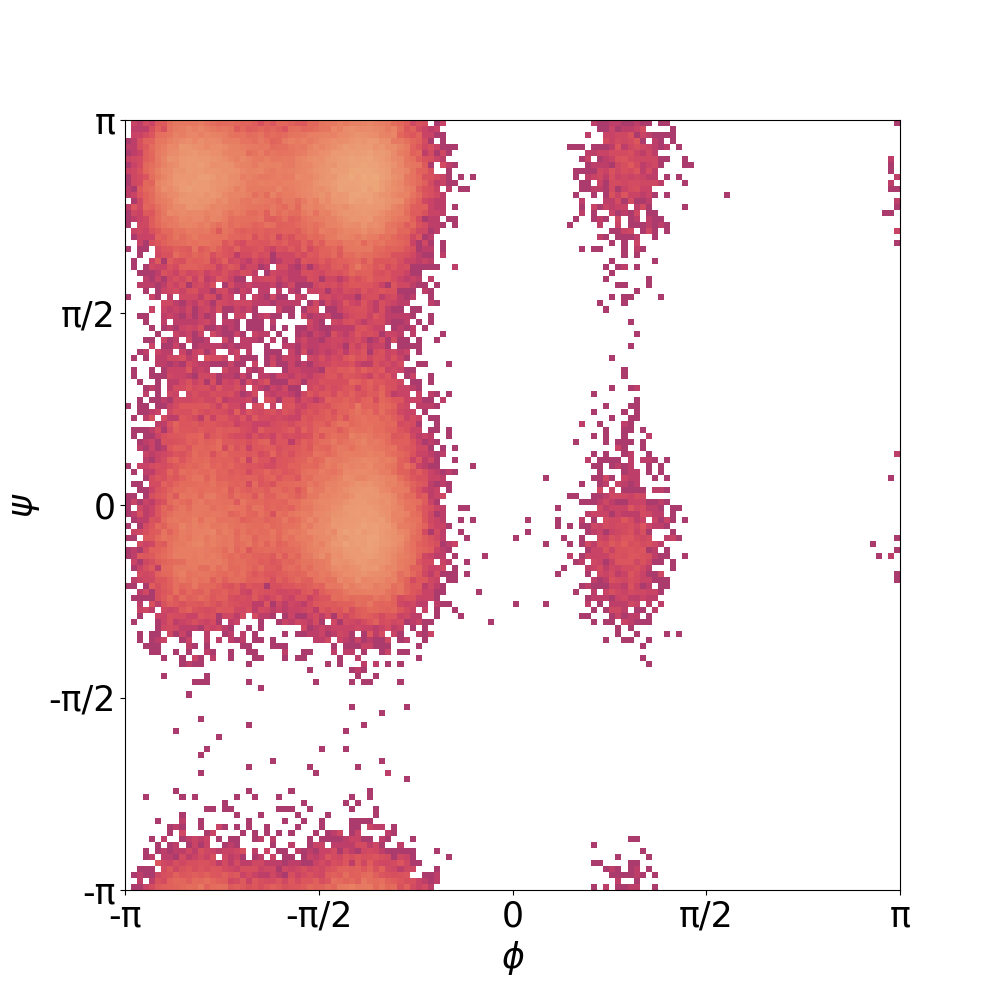}\\
  \end{tabular}
  \caption{Ramachandran plots for ALA-$N$ systems, from ALA-2 (TOP) to ALA-6 (BOTTOM). Torsional angle indices are ordered from LEFT to RIGHT.}
  \label{fig:all-diheral}
\end{figure}

\begin{figure}[t]
  \centering
  \setlength{\tabcolsep}{3pt}
  \renewcommand{\arraystretch}{1.0}
  \begin{tabular}{@{}c@{\hspace{3pt}}c@{\hspace{3pt}}c@{\hspace{3pt}}c@{\hspace{3pt}}c@{}}
    % ----- row 1: ALDP only at (1,1) -----
    \includegraphics[width=\imgw]{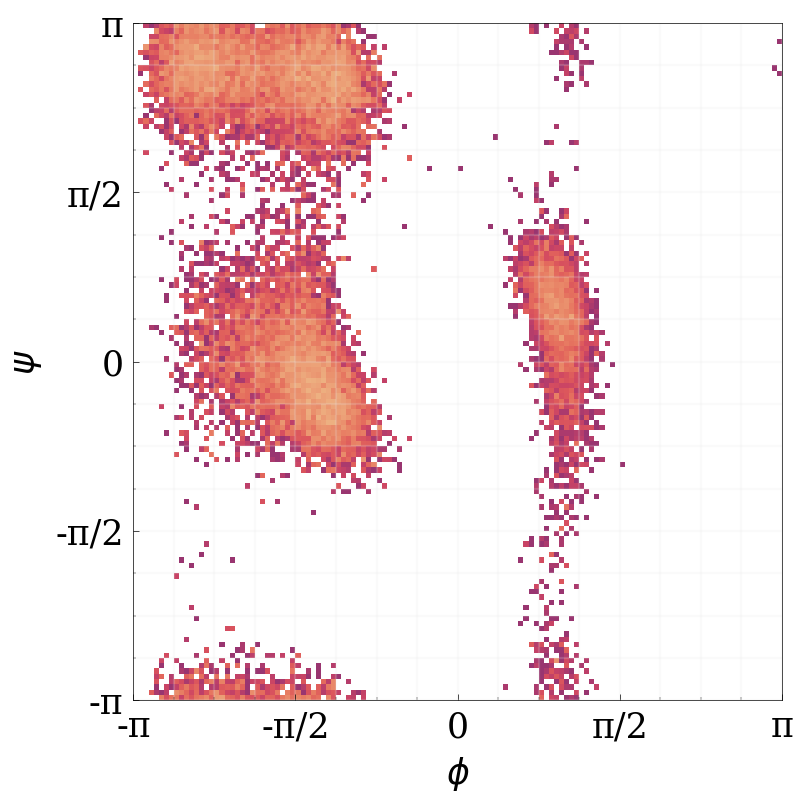} &
    & & & \\
    \includegraphics[width=\imgw]{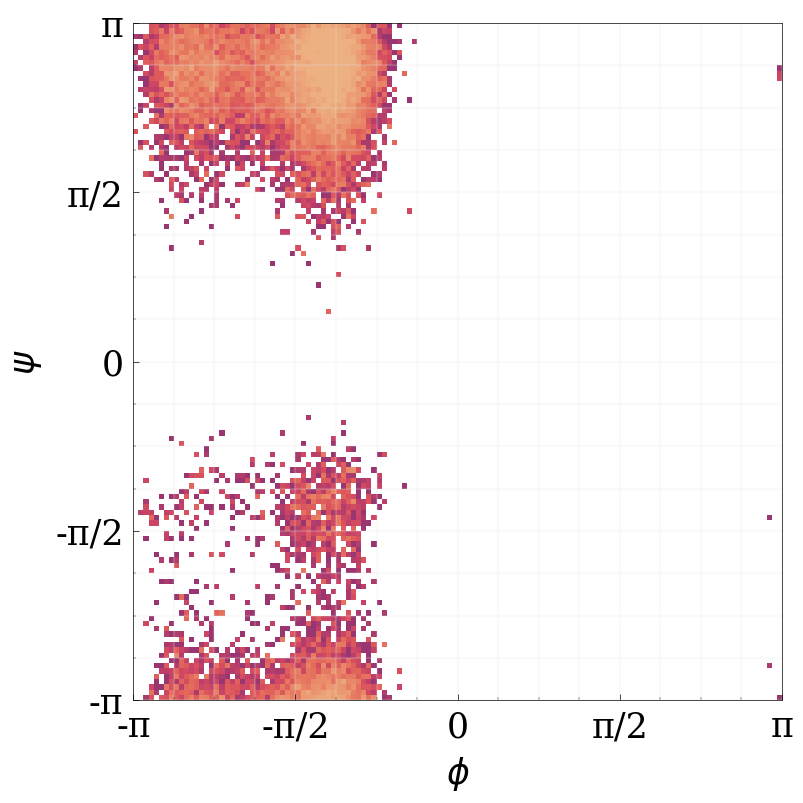} &
    \includegraphics[width=\imgw]{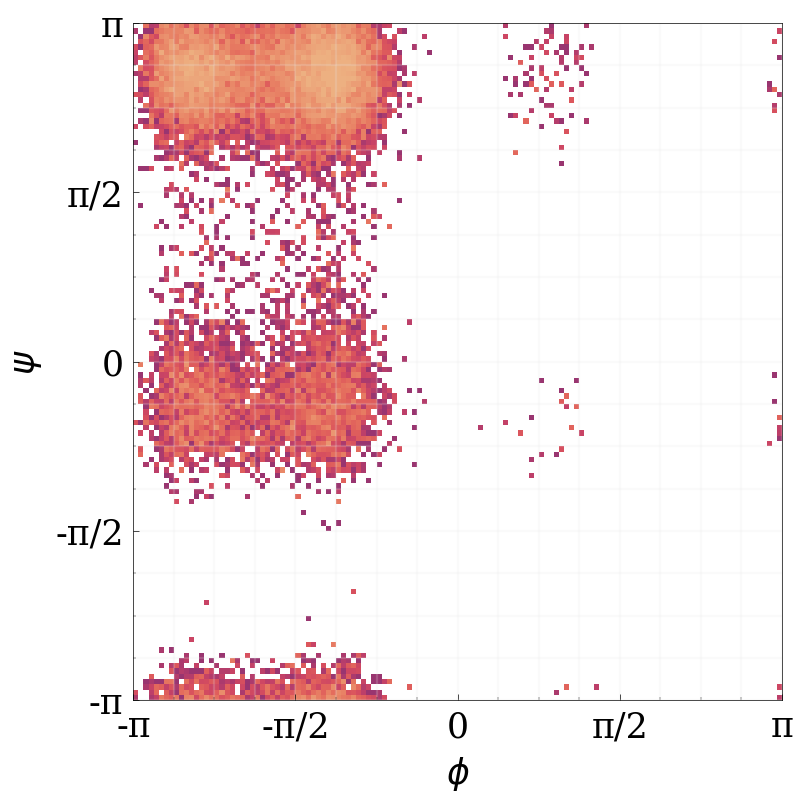} & & &
    \\
    \includegraphics[width=\imgw]{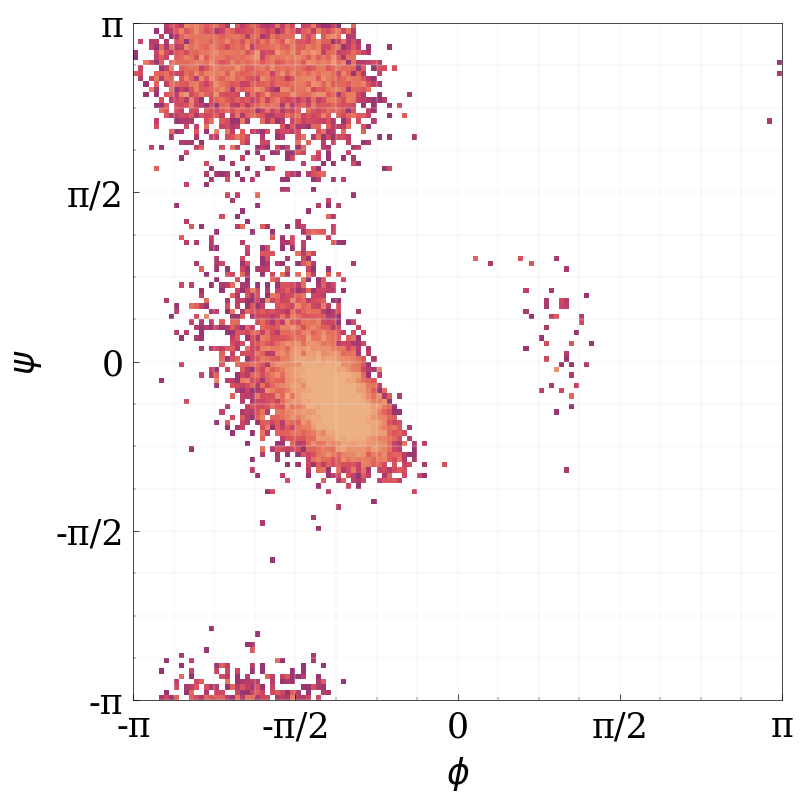} &
    \includegraphics[width=\imgw]{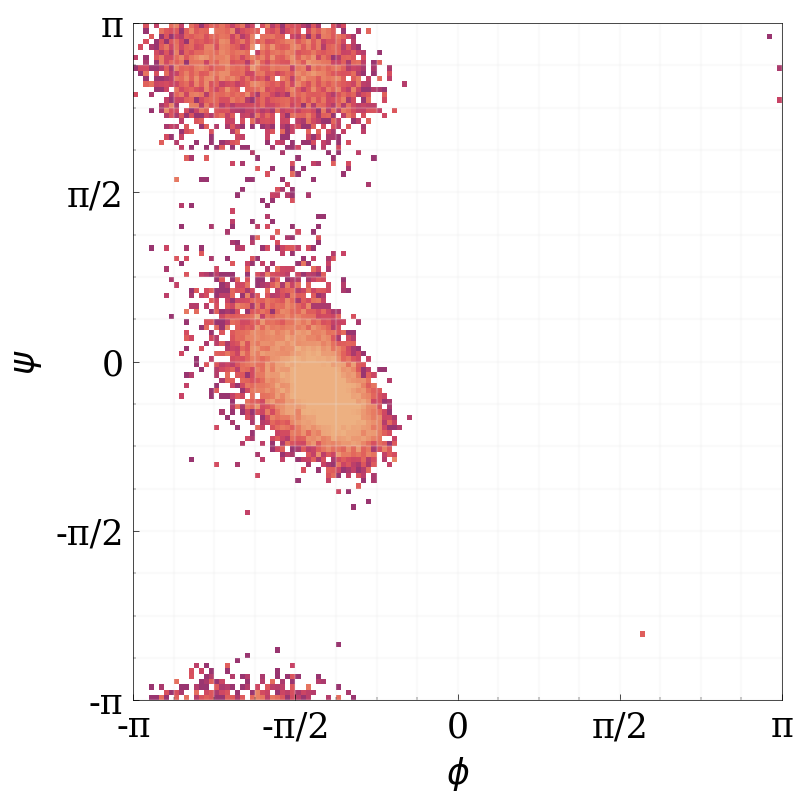} &
    \includegraphics[width=\imgw]{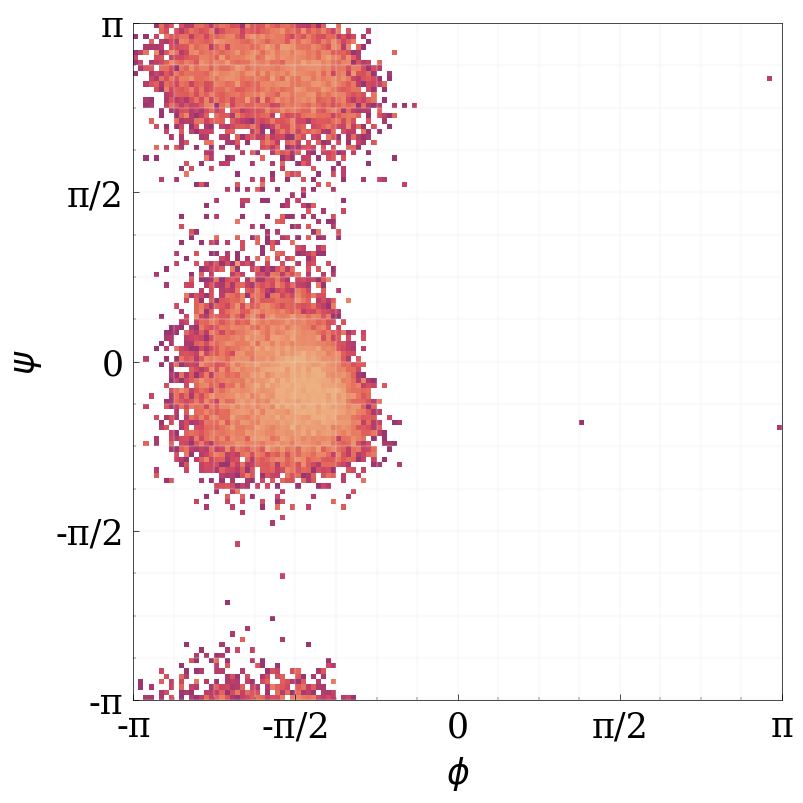} &
    &
    \\
    % ----- row 4: ala6 energy at (4,1); rest free -----
    \includegraphics[width=\imgw]{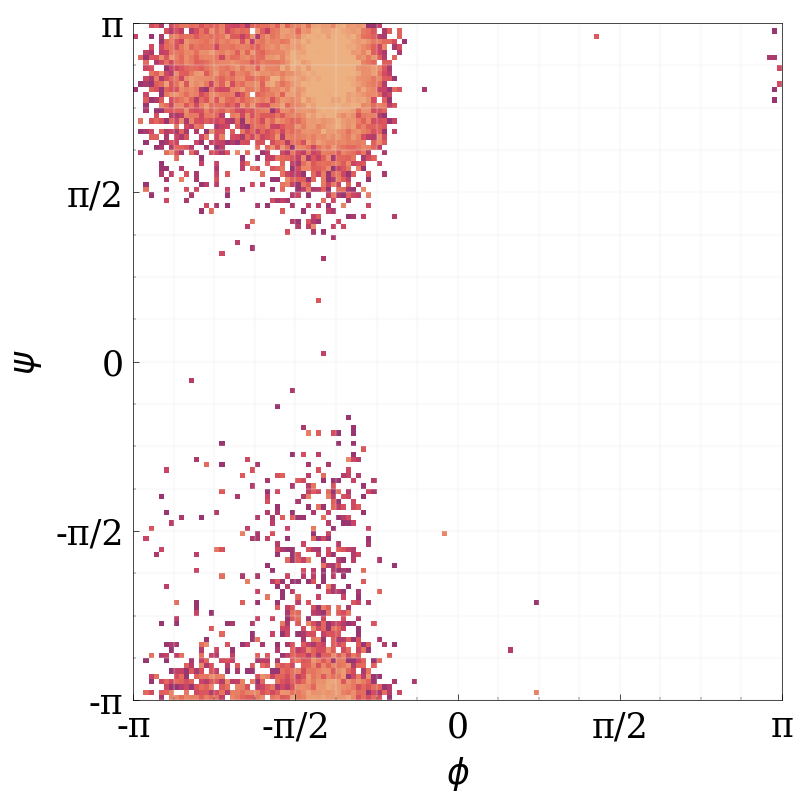} &
    \includegraphics[width=\imgw]{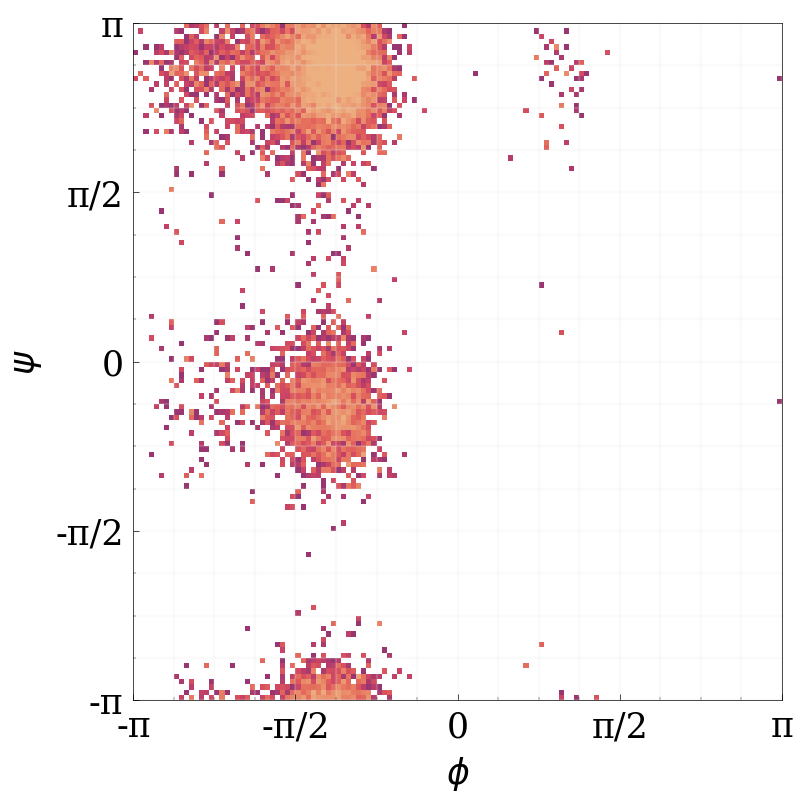}
    & \includegraphics[width=\imgw]{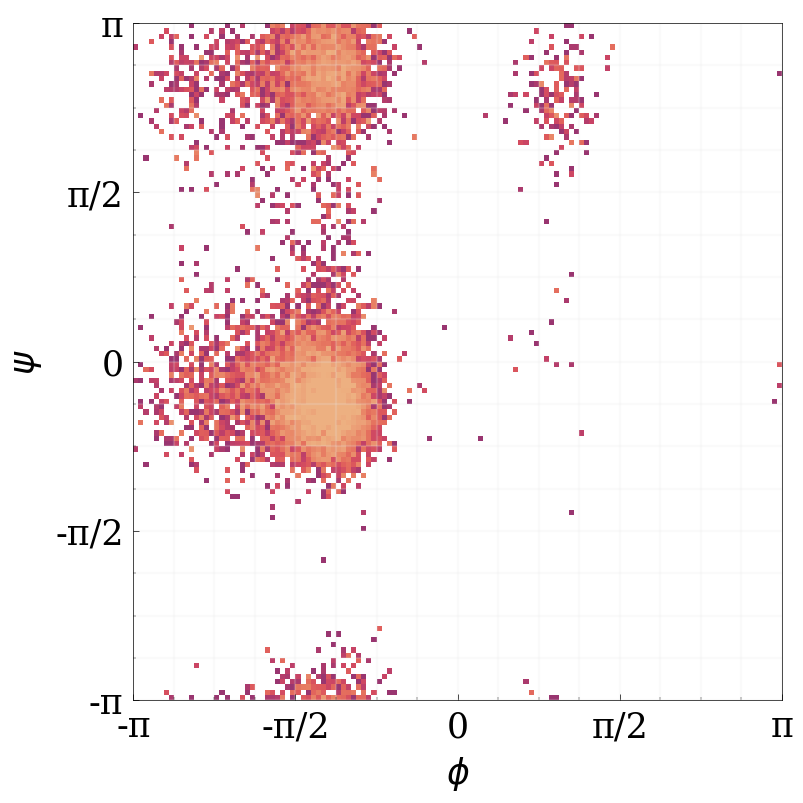}&\includegraphics[width=\imgw]{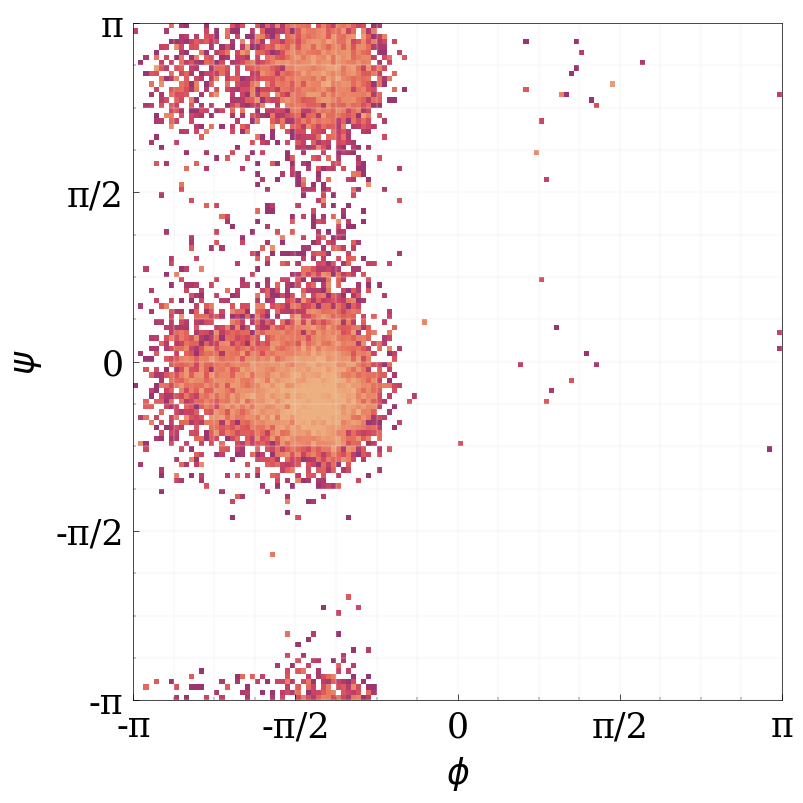} & \includegraphics[width=\imgw]{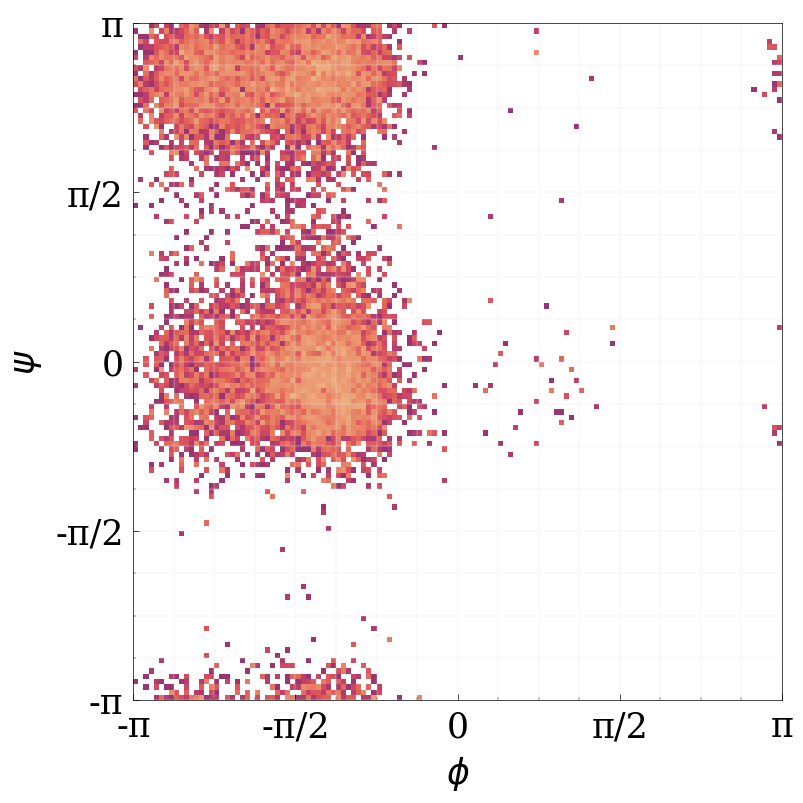}\\
  \end{tabular}
  \caption{Ramachandran plots for ALA-$N$ data generated by \scalloptt, from ALA-2 (TOP) to ALA-6 (BOTTOM). Torsional angle indices are ordered from LEFT to RIGHT.}
  \label{fig:scallop-diheral}
\end{figure}

\begin{figure}[t]
  \centering
  \setlength{\tabcolsep}{3pt}
  \renewcommand{\arraystretch}{1.0}
  \begin{tabular}{@{}c@{\hspace{3pt}}c@{\hspace{3pt}}c@{\hspace{3pt}}c@{\hspace{3pt}}c@{}}
    % ----- row 1: ALDP only at (1,1) -----
    \includegraphics[width=\imgw]{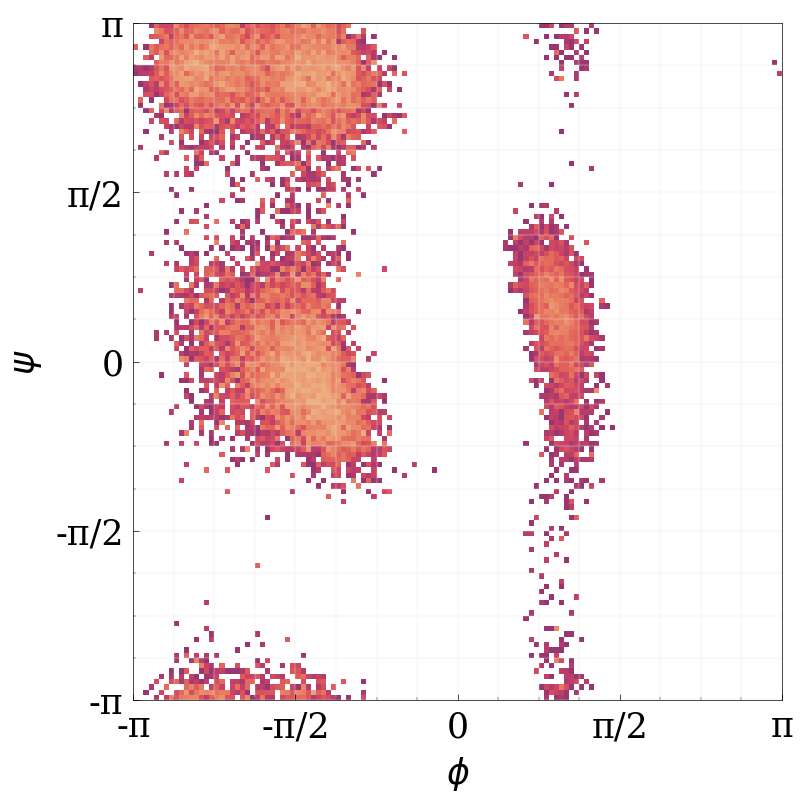} &
    & & & \\
    \includegraphics[width=\imgw]{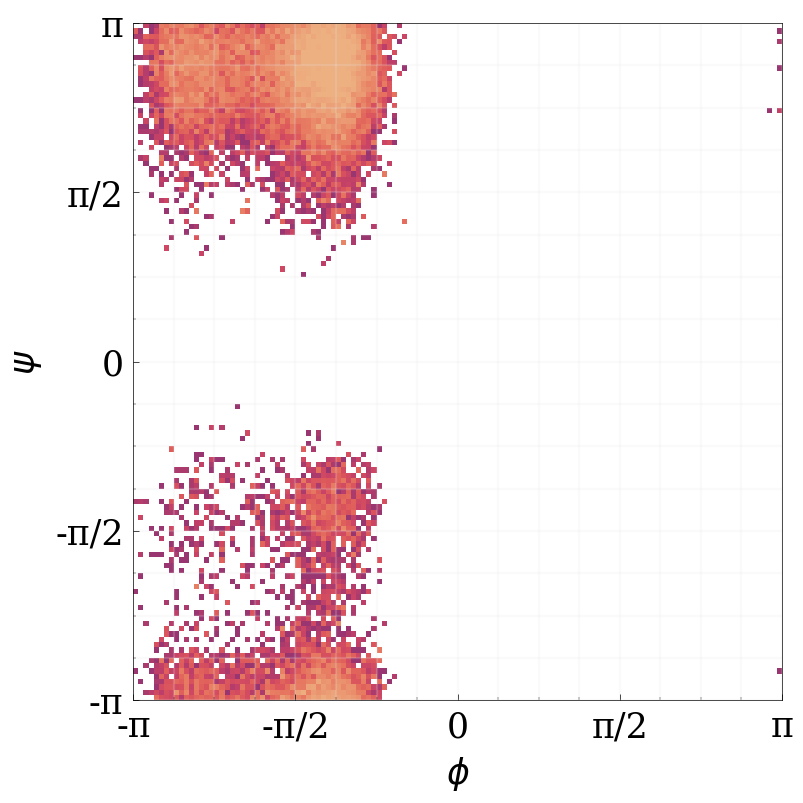} &
    \includegraphics[width=\imgw]{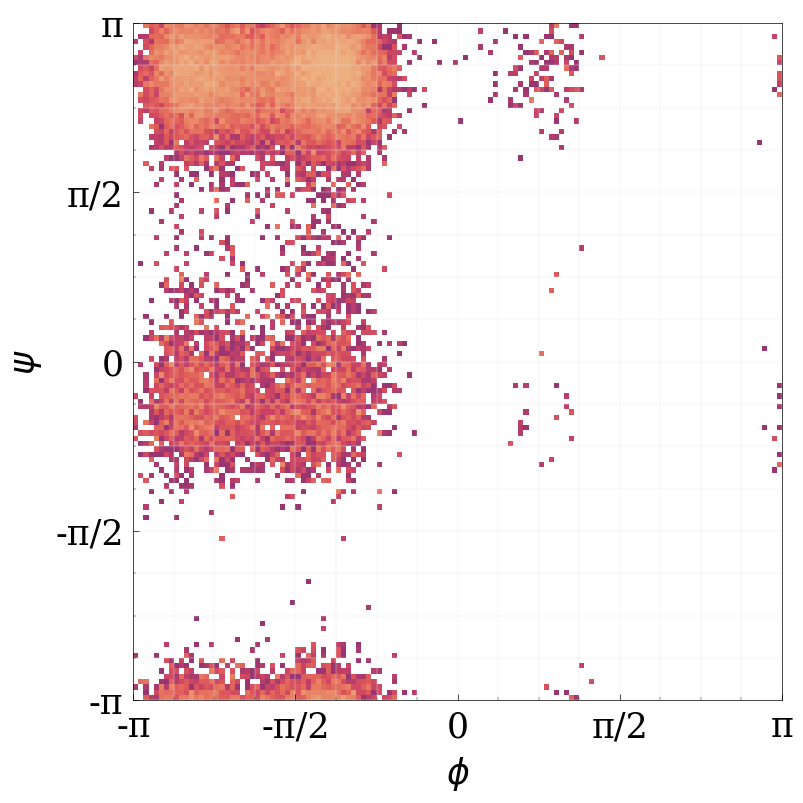} & & &
    \\
    \includegraphics[width=\imgw]{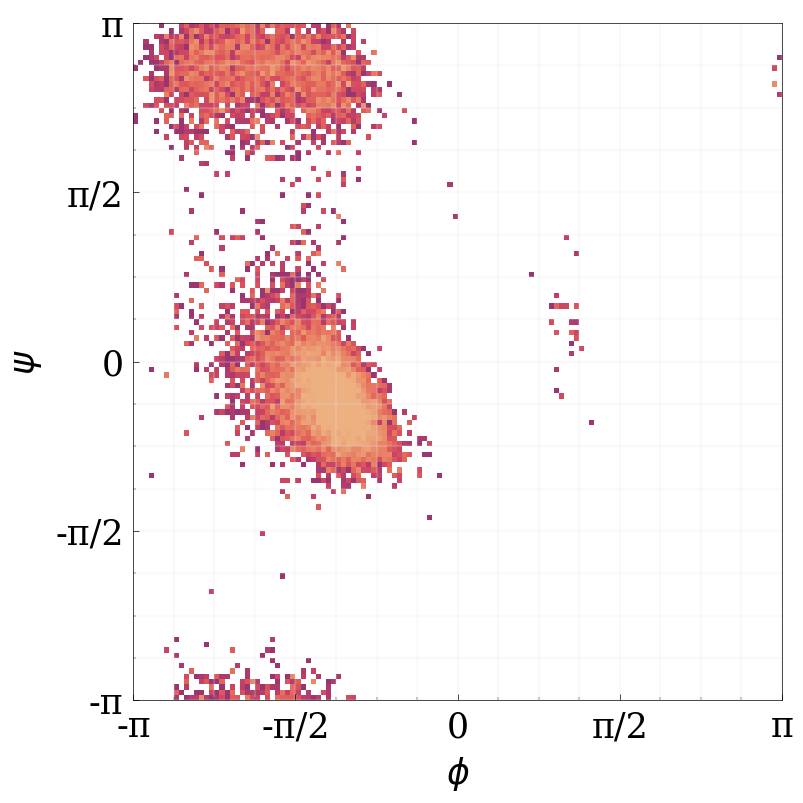} &
    \includegraphics[width=\imgw]{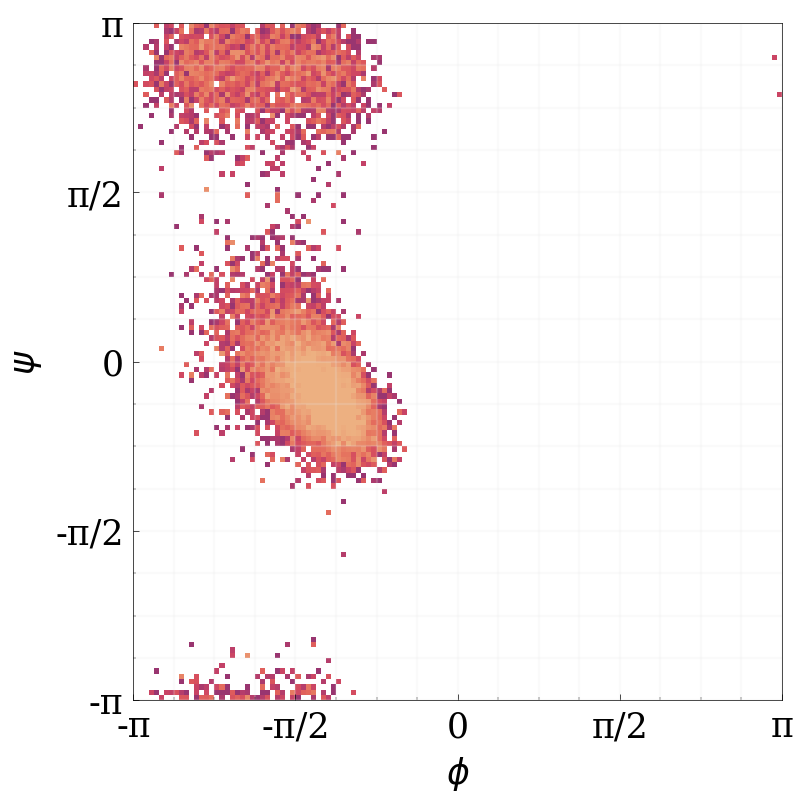} &
    \includegraphics[width=\imgw]{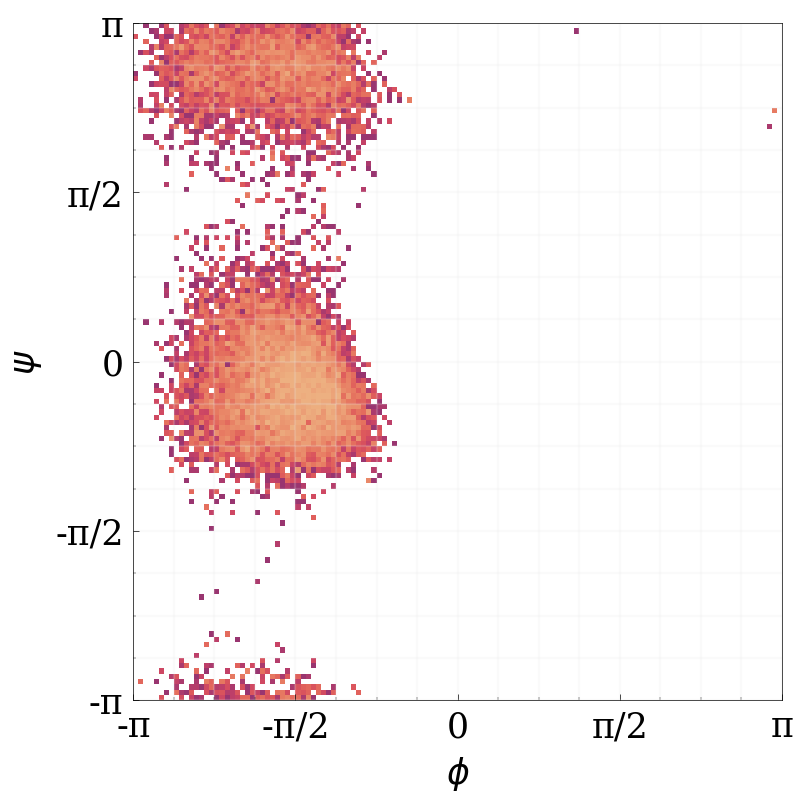} &
    &
    \\
    % ----- row 4: ala6 energy at (4,1); rest free -----
    \includegraphics[width=\imgw]{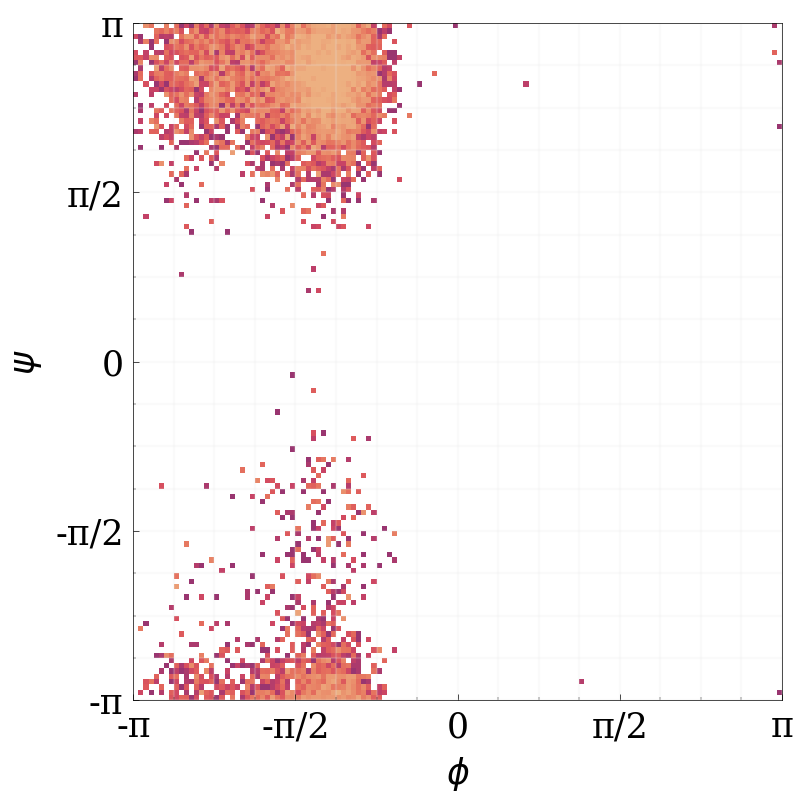} &
    \includegraphics[width=\imgw]{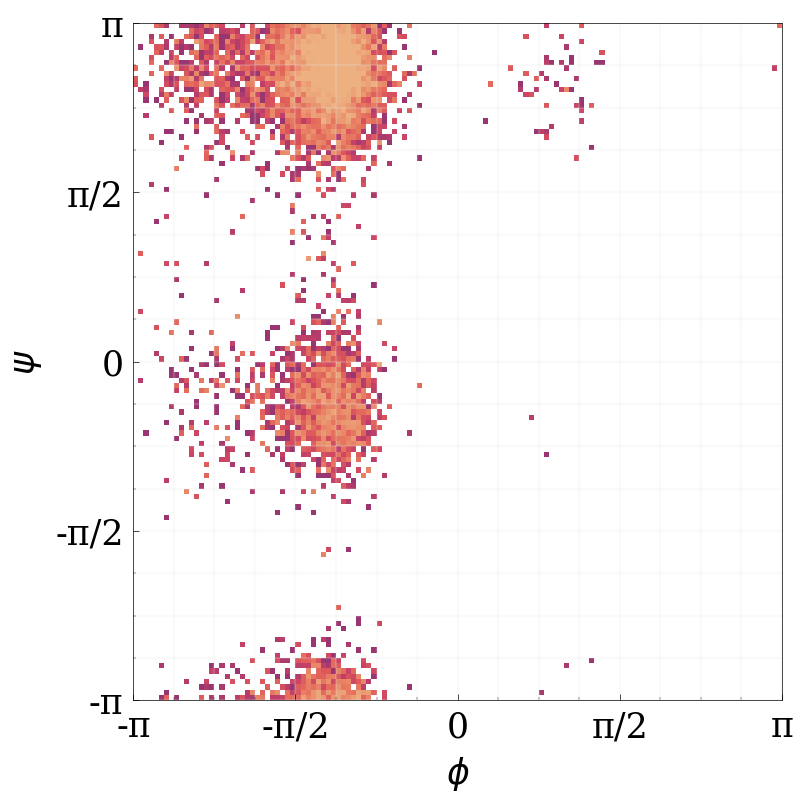}
    & \includegraphics[width=\imgw]{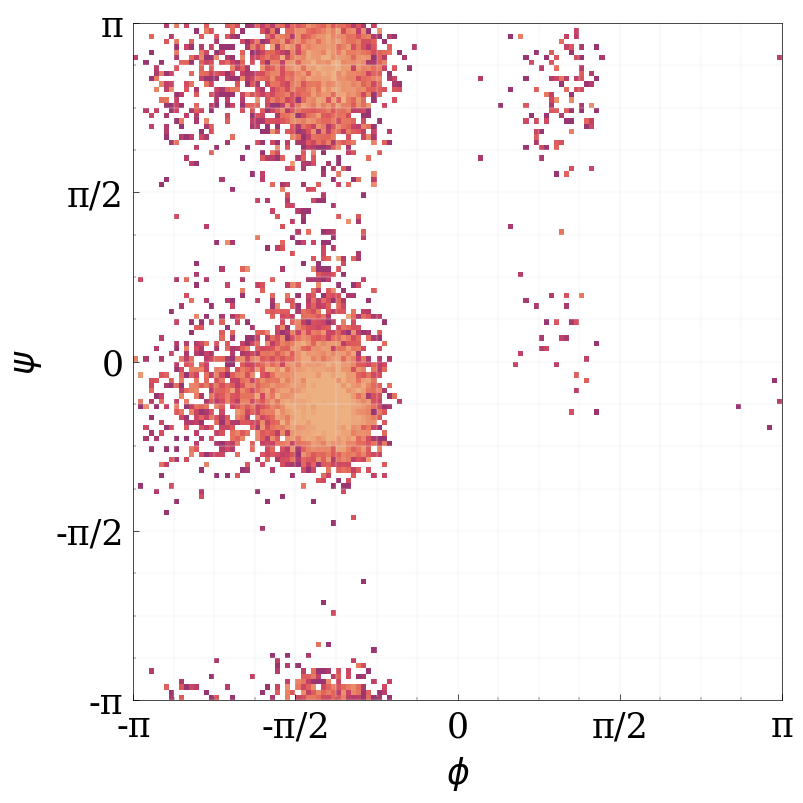}&\includegraphics[width=\imgw]{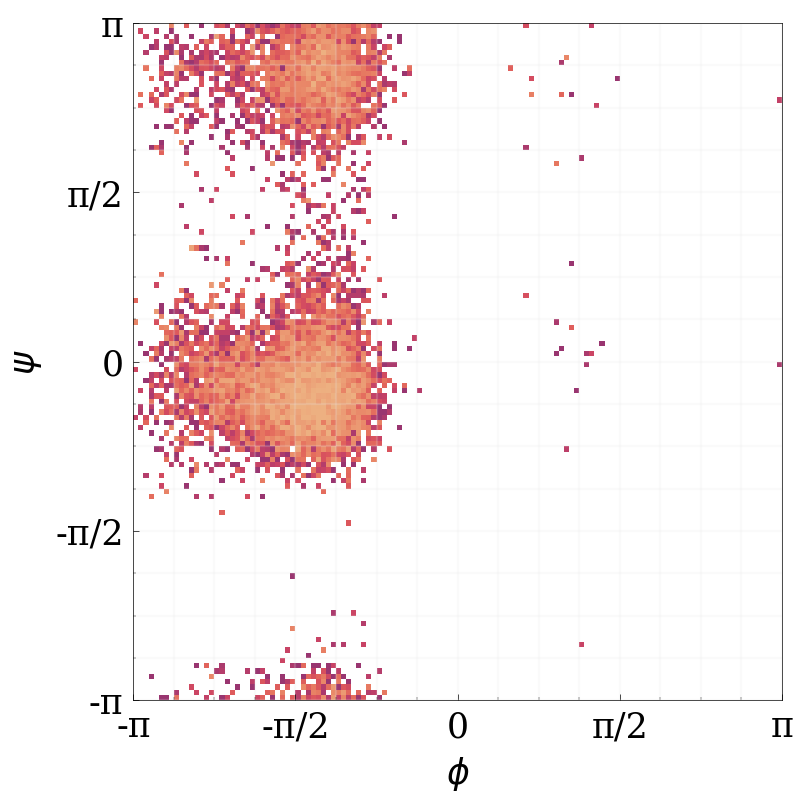} & \includegraphics[width=\imgw]{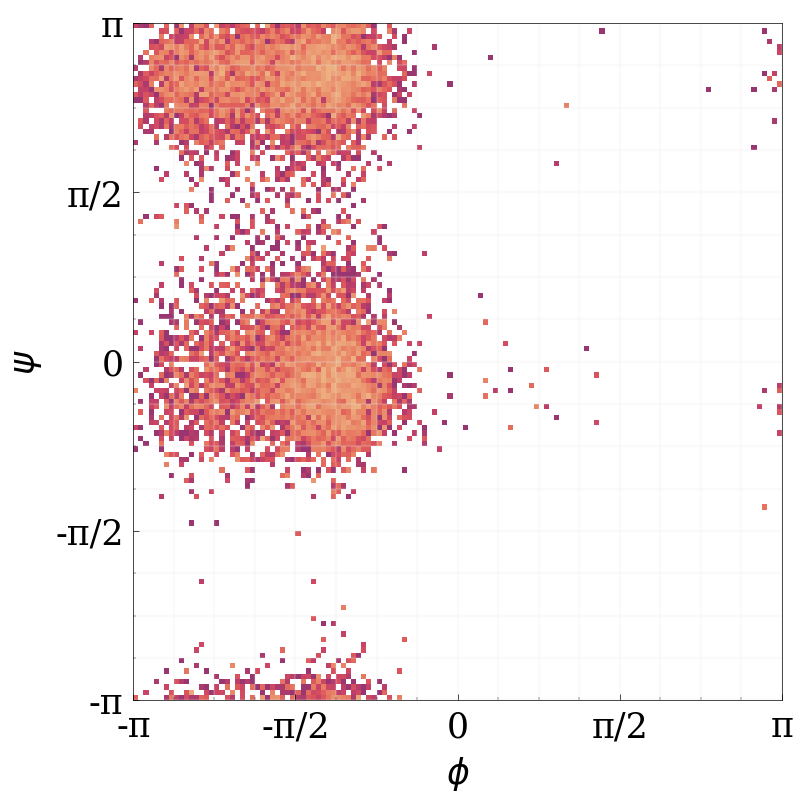}\\
  \end{tabular}
  \caption{Ramachandran plots for ALA-$N$ data generated by F2D2, from ALA-2 (TOP) to ALA-6 (BOTTOM). Torsional angle indices are ordered from LEFT to RIGHT.}
  \label{fig:f2d2-diheral}
\end{figure}

\begin{figure}[t]
    \centering
    \begin{subfigure}[b]{\linewidth}
        \centering
        \includegraphics[width=\linewidth]{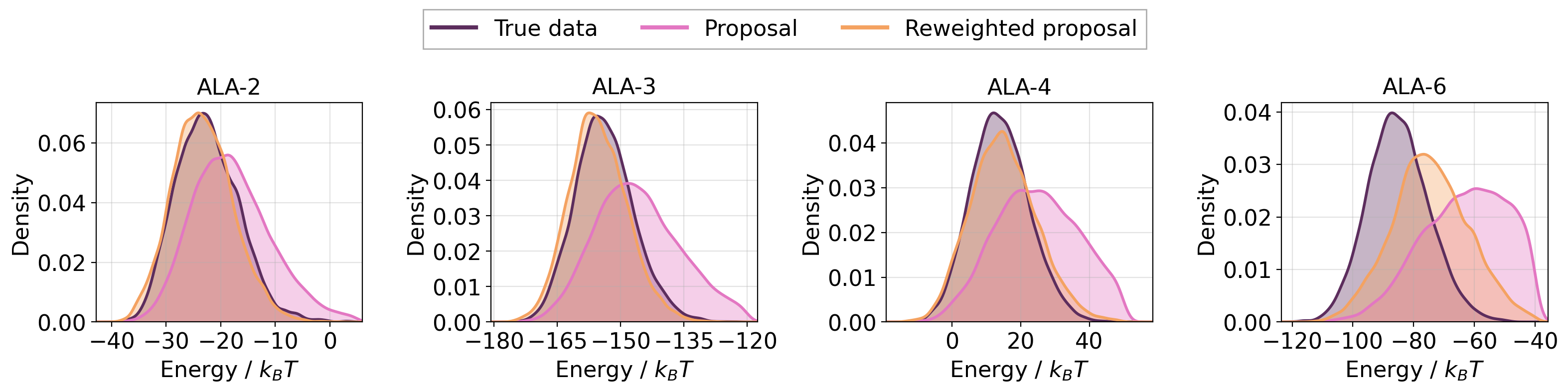}
        \subcaption{\scalloptt}
        \label{fig:scallop-energy-hist}
    \end{subfigure}
    \vfill
    \begin{subfigure}[b]{\linewidth}
        \centering
        \includegraphics[width=\linewidth]{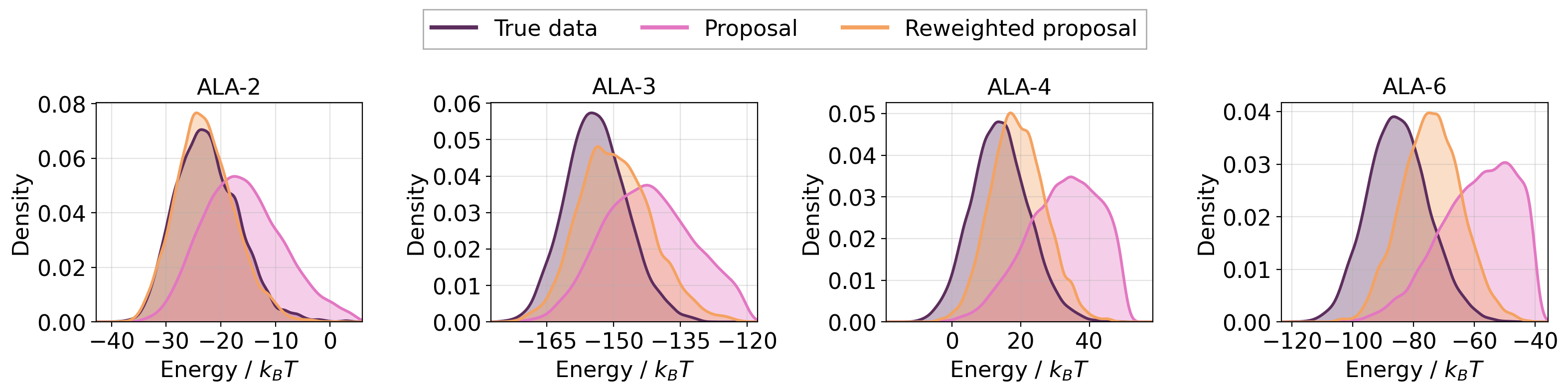}
        \subcaption{F2D2}
        \label{fig:f2d2-energy-hist}
    \end{subfigure}
    \caption{Energy histograms of resampled data from \scalloptt (TOP) and F2D2 (BOTTOM), ranged from ALA-2 to ALA-6.}
    \label{fig:combined-energy-hist}
\end{figure}

Recap that, we employ (4, 8, 8, 16) steps for ALA-(2, 3, 4, 6) systems, following \cite{rehman2025falconfewstepaccuratelikelihoods}.

\paragraph{Ground Truth Torsional Angles Histograms.} We provide the ramachandran plots, \textit{i.e.} the histograms of torsional angles, of the training data, from ALA-2 to ALA-6, in \Cref{fig:all-diheral}.

\paragraph{Resampled Torsional Angles Histograms.} We provide the ramachandran plots for the resampled data in \Cref{fig:scallop-diheral} (for \scalloptt) and \Cref{fig:f2d2-diheral} (for F2D2).

\paragraph{Resampled Energy Histogram.} We provide the energy histograms of the resampled data in \Cref{fig:scallop-energy-hist} (for \scalloptt) and \Cref{fig:f2d2-energy-hist} (for F2D2).

\section{Experimental Details for Images}\label{app:sec:img-exp-details}

We use CelebA-64 \cite{liu2015faceattributes} as the image-generation benchmark and report both likelihood-based and sample-quality metrics. Unless otherwise specified, the model architecture and optimization hyperparameters follow the configuration summarized in \Cref{tab:config-celeba64}.

\begin{table}[t]
\centering
\renewcommand{\arraystretch}{1.2}
\caption{Training hyperparameters on CelebA-64.}
\begin{tabular}{lc}
\toprule
& \textbf{CelebA-64} \\
\midrule
Noise embedding      & Positional \\
Channels             & 128 \\
Channels multiple    & 1,2,3,4 \\
Attention resolution & 16,8 \\
Residual blocks      & 3 \\
Dropout              & 0.0 \\
Batch size           & 256 \\
GPUs                 & 4 A6000s \\
Iterations           & 350k \\
Learning Rate        & 1e-2 (Sqrt decay at 35k) \\
Precision            & bfloat16 \\
Optimizer            & RAdam \\
EMA rate             & 0.9999 \\
\bottomrule
\end{tabular}
\label{tab:config-celeba64}
\end{table}

\textbf{LSD Model Training.} For the vanilla LSD baseline, we follow the experimental setup of \cite{boffi2026flowmaps}. We adopt the batch-allocation strategy used in \cite{ai2026f2d2}: during the first 200k iterations, 75\% of each batch is used for the flow-matching loss and the remaining 25\% is used for the self-distillation loss; during the subsequent 150k iterations, the allocation is changed to 50\% for flow matching and 50\% for self-distillation. For the self-distillation objective, we sample two time steps independently and uniformly from $[0,1]$, without imposing an ordering constraint between them. Equivalently, the full time-step pair is sampled as $(t,s) \sim \mathcal{U}([0,1]^2)$, which allows the model to learn both forward and backward estimators. 

\textbf{SCALLOP Training.} For \scalloptt{}, we follow the same image-generation training setup as \cite{ai2026f2d2} and initialize the model from the pretrained LSD model, adding only the divergence-prediction head. We train the model for 60k iterations with a initial learning rate of $10^{-4}$. At each iteration, 50\% of the batch is used for the flow-matching loss, while the remaining 50\% is used for the self-distillation loss. The vectorized divergence head is implemented on the decoder of the image UNet: instead of producing a single scalar divergence estimate, it applies a $1{\times}1$ convolution to predict a spatially resolved divergence field with the same shape as the noisy image state, namely $(B,C,H,W)$.

\textbf{FID Evaluation.} We compute FID using 50k images generated by each model.

\textbf{BPD Evaluation.} We compute bits per dimension (BPD) on the full CelebA-64 test set, following the procedure of \cite{lipman2023flow}. For numerical integration, we discretize the reverse-time interval from $t=1$ to $t=0$ into equally spaced segments.

\section{Additional Experimental Results for Images}
We train \scalloptt on CelebA-64 and compare to F2D2.

\paragraph{Stability of Training.}
\Cref{fig:loss-grad-comparison} compares the training dynamics of F2D2 and \scalloptt over the first 60k training steps, including both the training loss and the gradient norm, the results show that \scalloptt has a more stable training dynamics with less variance compared to F2D2, which is consistent with the observation in molecular systems.
\begin{figure}[t]
    \centering
    \includegraphics[width=0.45\linewidth]{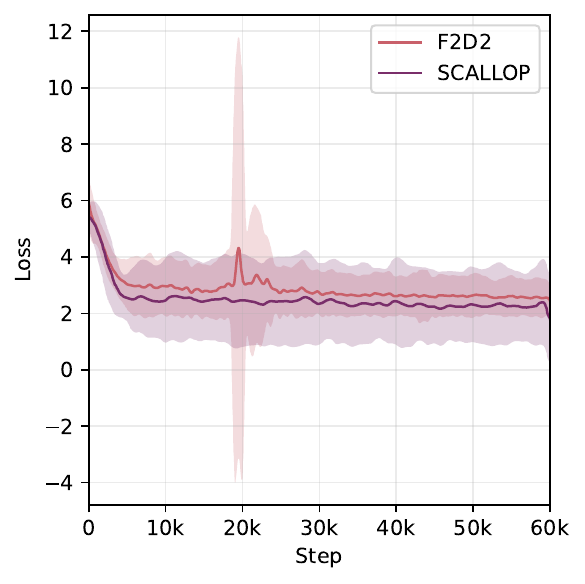}
    \hfill
    \includegraphics[width=0.45\linewidth]{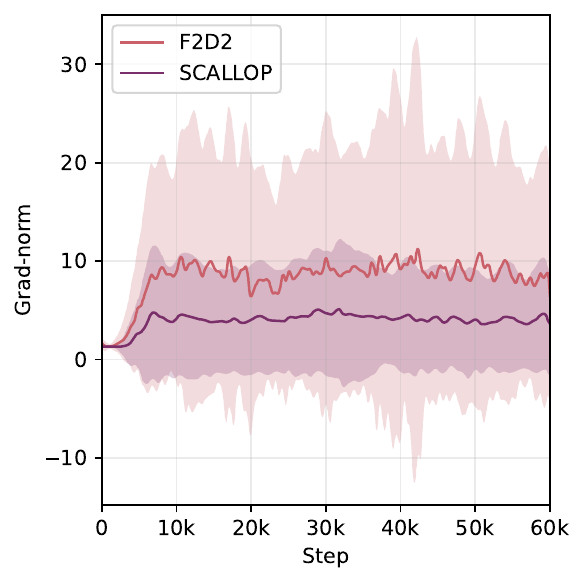}
    \caption{Training dynamics comparison between F2D2 and \scalloptt{} over 60k training steps.
    \textbf{Left:} Training loss.
    \textbf{Right:} Training gradient norm.
    Curves are smoothed for visualization, and shaded regions indicate local variability.}
    \label{fig:loss-grad-comparison}
\end{figure}

\paragraph{Image Generation Examples.} We provide qualitative generation results of \scalloptt on unconditional CelebA-64 in \Cref{fig:1step,fig:2step,fig:4step,fig:8step} 
% Figures~\ref{fig:1step},\ref{fig:2step},\ref{fig:4step},\ref{fig:8step}
, using 1, 2, 4, and 8 sampling steps respectively.

\begin{figure}[t]
    \centering
    \includegraphics[width=\linewidth]{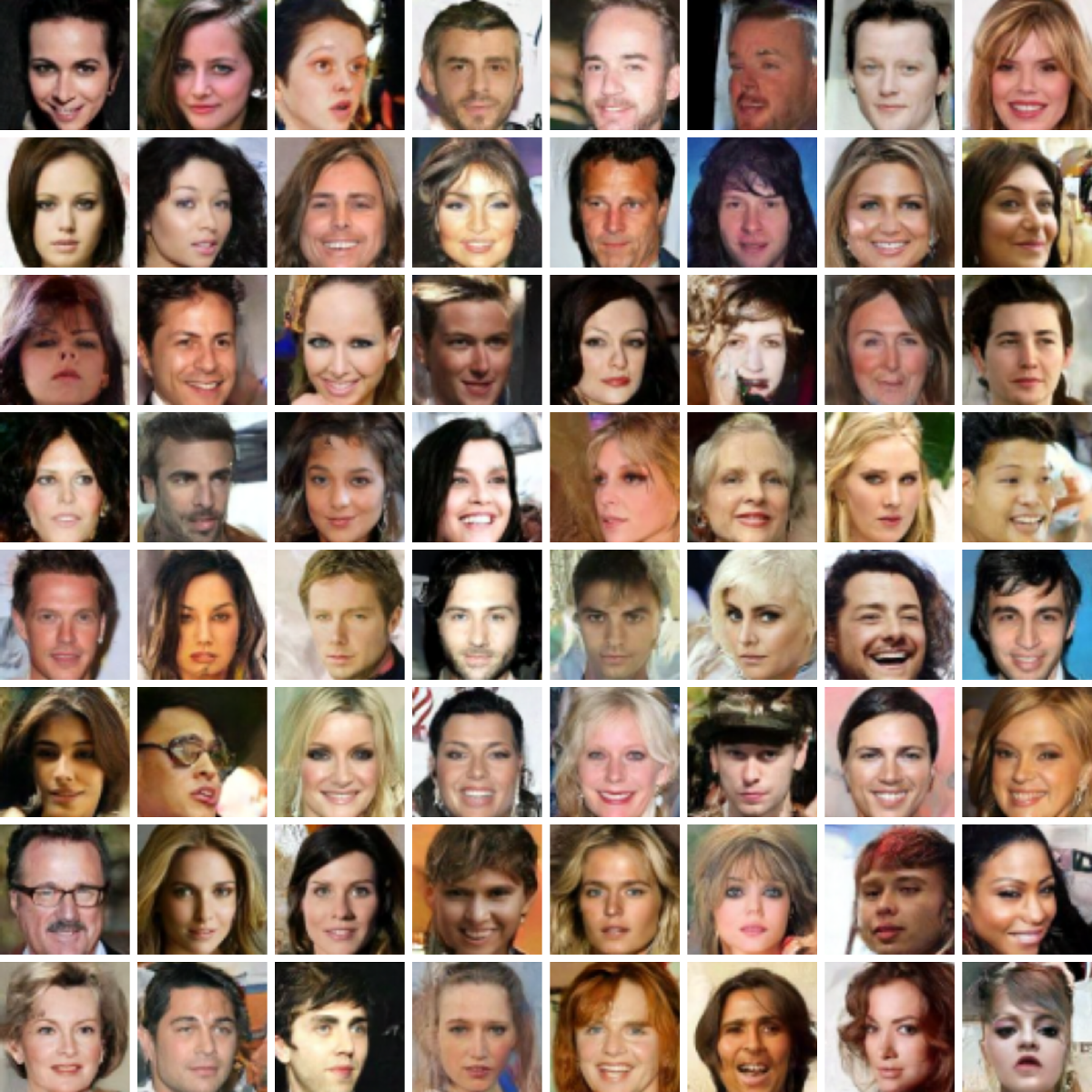}
    \caption{1-step unconditional CelebA-64 generation with \scalloptt. }
    \label{fig:1step}
\end{figure}

\begin{figure}[t]
    \centering
    \includegraphics[width=\linewidth]{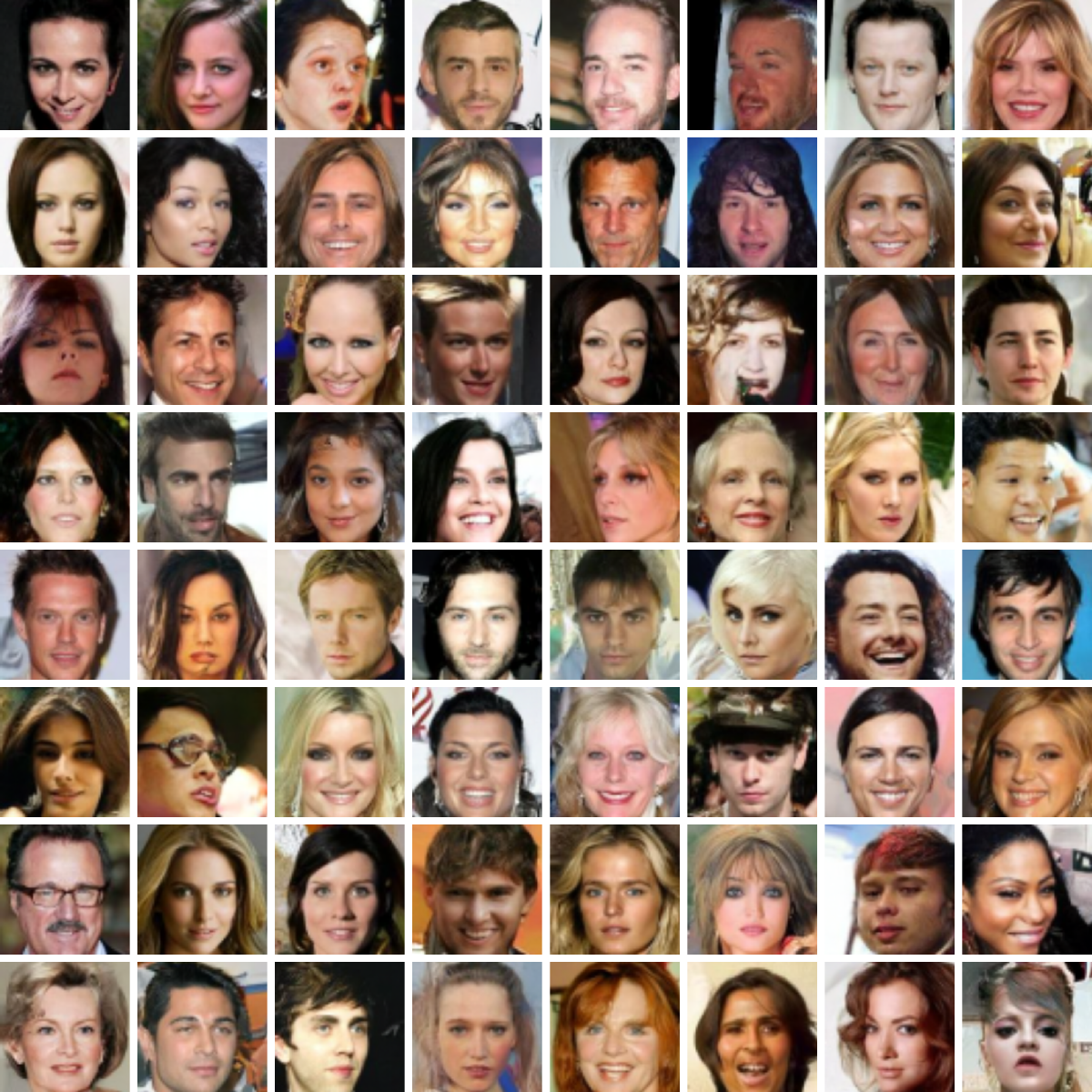}
    \caption{2-step unconditional CelebA-64 generation with \scalloptt. }
    \label{fig:2step}
\end{figure}

\begin{figure}[t]
    \centering
    \includegraphics[width=\linewidth]{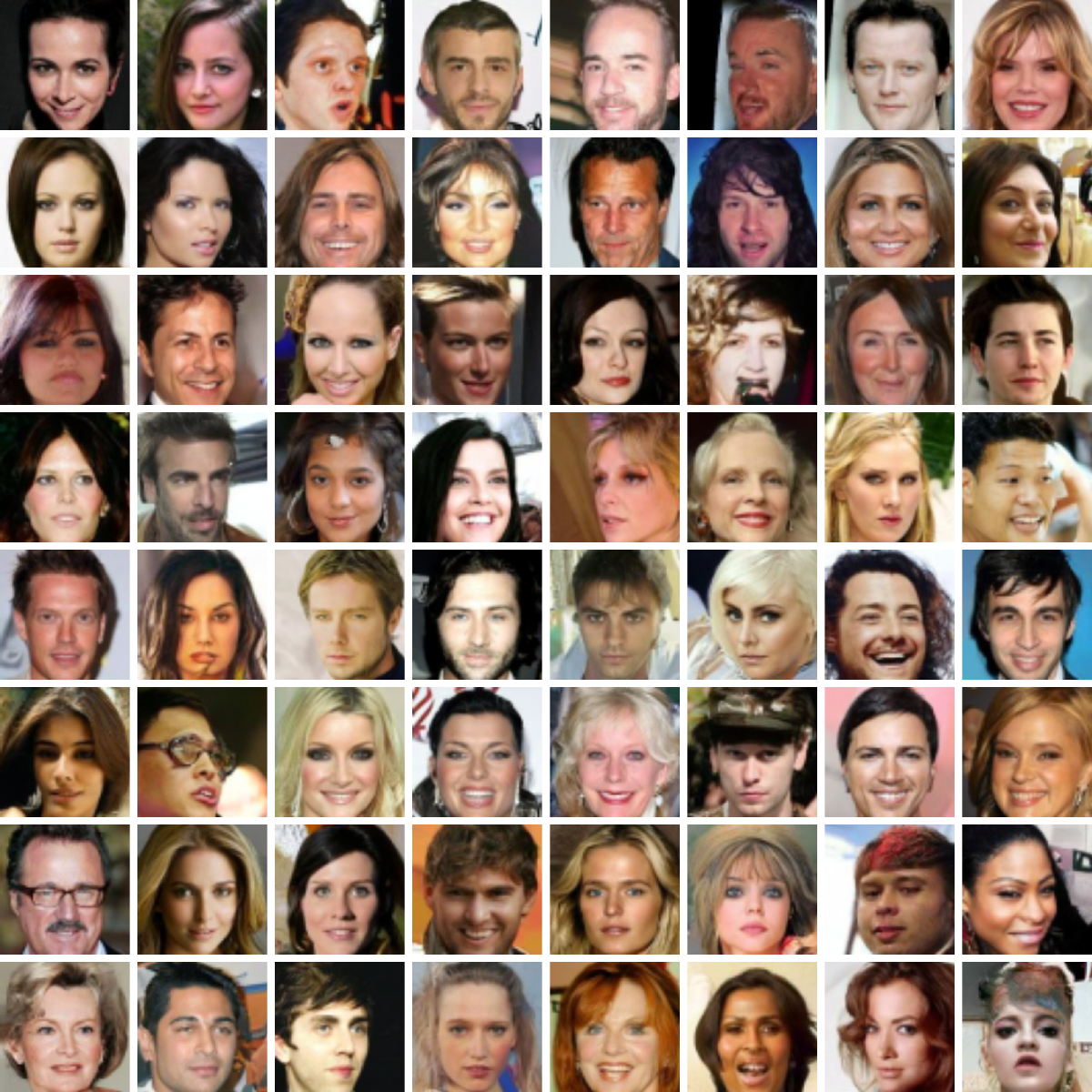}
    \caption{4-step unconditional CelebA-64 generation with \scalloptt. }
    \label{fig:4step}
\end{figure}

\begin{figure}[t]
    \centering
    \includegraphics[width=\linewidth]{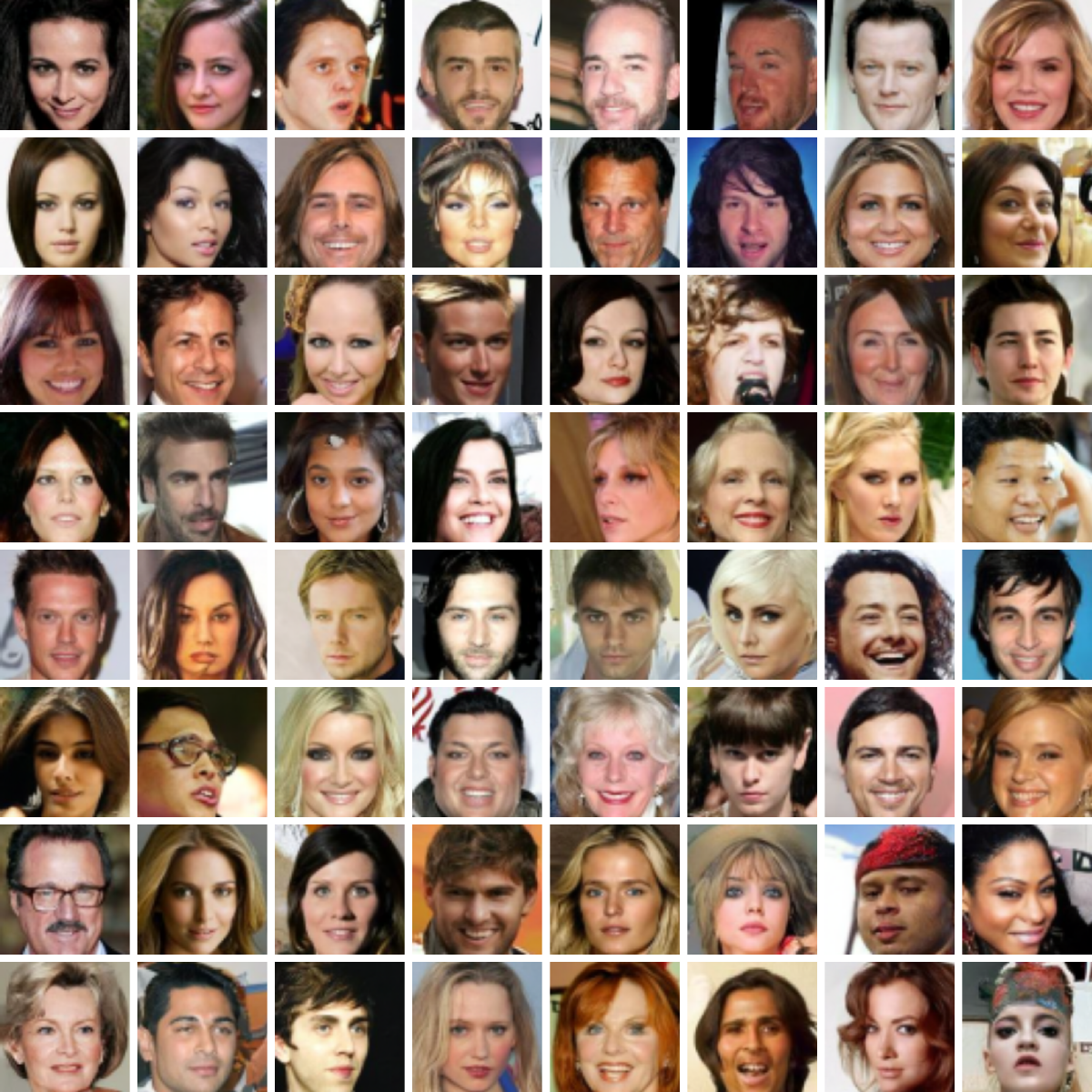}
    \caption{8-step unconditional CelebA-64 generation with \scalloptt. }
    \label{fig:8step}
\end{figure}

\end{document}